\documentclass{article}

\pdfoutput=1

\usepackage[preprint]{tmlr}

\usepackage{amsmath,amssymb,amsthm,mathtools,mathrsfs}
\usepackage{graphicx}
\usepackage{subcaption}
\usepackage{booktabs}
\usepackage{tabularx}
\usepackage{multirow}
\usepackage{makecell}
\usepackage{xspace}
\usepackage{xcolor}
\usepackage{algorithm}
\usepackage{algpseudocode}
\usepackage{acronym}
\usepackage[hidelinks]{hyperref}
\usepackage[capitalize,noabbrev]{cleveref}
\usepackage{enumitem}
\usepackage{wrapfig}

\AtBeginDocument{%
  {\fontencoding{T1}\fontfamily{lmr}\selectfont}%
  {\fontencoding{T1}\fontfamily{lmss}\selectfont}%
  \normalfont
  \DeclareFontShape{T1}{lmr}{bx}{sc}{<->ssub*lmr/bx/n}{}%
  \DeclareFontShape{T1}{lmss}{bx}{sc}{<->ssub*lmss/bx/n}{}%
}

\newcommand{\piMain}{\pi_\theta}
\newcommand{\piRec}{\pi^{\mathrm{rec}}}
\newcommand{\piMixed}{\pi^{\mathrm{mix}}_\theta}
\newcommand{\pTraj}{p_\theta(\tau)}
\newcommand{\pTrajMixed}{p^{\mathrm{mix}}_\theta(\tau)}
\newcommand{\safeR}{\mathcal{R}}
\newcommand{\Expect}{\mathbb{E}}
\newcommand{\indicator}{\mathbb{1}}
\newcommand{\alphaRate}{\alpha(\theta, d)}
\newcommand{\betaRate}{\beta(\theta, d)}

\newcommand{\methodName}{SafeExplorer\xspace}
\newcommand{\cmark}{\textcolor{green!55!black}{\checkmark}}
\newcommand{\xmark}{\textcolor{red!70!black}{\ensuremath{\times}}}

\newcommand{\Ctheta}{C_\theta(\tau)}
\newcommand{\Vmix}{V^{\piMixed}}
\newcommand{\Qmix}{Q^{\piMixed}}
\newcommand{\Amix}{A^{\piMixed}}
\newcommand{\dmax}{d_{\max}}
\newcommand{\diter}{d_{\mathrm{iter}}}
\newcommand{\rmax}{r_{\max}}
\newcommand{\Tsafe}{T_{\safeR}}
\newcommand{\dpiMain}{\nu^{\piMain}}
\newcommand{\dpiMixed}{\nu^{\piMixed}}
\newcommand{\Aest}{\hat{A}}
\newcommand{\seg}{\operatorname{seg}}

\newcommand{\envHC}{\textsc{HalfCheetah}\xspace}
\newcommand{\envAnt}{\textsc{Ant}\xspace}
\newcommand{\envGo}{\textsc{Go1}\xspace}

\newcommand{\valueLoss}{L^{V}}
\newcommand{\PPOloss}{L^{\mathrm{PPO}}}
\newcommand{\compatLoss}{L^{\mathrm{compat}}}
\newcommand{\lambdaCompat}{\lambda_{\mathrm{compat}}}

\theoremstyle{plain}
\newtheorem{theorem}{Theorem}
\newtheorem{proposition}[theorem]{Proposition}
\newtheorem{corollary}[theorem]{Corollary}

\theoremstyle{definition}
\newtheorem{assumption}{Assumption}
\newtheorem{definition}[theorem]{Definition}
\theoremstyle{remark}
\newtheorem{remark}{Remark}

\acrodef{RL}{reinforcement learning}
\acrodef{PPO}{proximal policy optimization}
\acrodef{GAE}{generalized advantage estimation}
\acrodef{MDP}{Markov decision process}
\acrodef{CMDP}[CMDP]{constrained \acs{MDP}}
\acrodef{SAC}{soft actor-critic}
\acrodef{MPC}{model predictive control}
\acrodef{IS}{importance sampling}
\acrodef{DoF}{degrees of freedom}
\acrodef{CBF}{control barrier function}
\acrodef{CPO}{constrained policy optimization}
\acrodef{DAgger}{Dataset Aggregation}
\acrodef{JSRL}{Jump-Start RL}

\newcommand{\todo}[1]{}

\title{SafeExplorer: An Unbiased Policy Gradient for Reinforcement Learning with Recovery Interventions}

\author{%
  \name Elham Daneshmand \email elham.daneshmand@mail.mcgill.ca \\
  \addr McGill University \& Mila
  \AND
  \name Majid Khadiv \email majid.khadiv@tum.de \\
  \addr Technical University of Munich
  \AND
  \name Glen Berseth \email glen.berseth@umontreal.ca \\
  \addr Universit\'e de Montr\'eal \& Mila
  \AND
  \name Hsiu-Chin Lin \email hsiu-chin.lin@mcgill.ca \\
  \addr McGill University \& Mila
}

\begin{document}
\maketitle

\begin{abstract}
Training reinforcement-learning agents directly on physical robots makes every fall costly, since a fall can damage the platform and cannot be undone like a simulator reset; the goal is therefore to minimize falls during training rather than trade them off against return, as constrained \ac{MDP} formulations do. A standard mitigation hands control to a separate \emph{recovery policy} whenever the agent leaves a designer-specified \emph{safe region} (a subset of state space it should stay within), but the resulting mixed-policy rollouts silently bias every on-policy update, and the importance-sampling correction that would remove this bias is ill-defined whenever the recovery policy is deterministic. We address this bias with a drop-in modification of \ac{PPO}.
Its core is an unbiased policy-gradient estimator that uses the score function only at safe timesteps and never evaluates the recovery policy's density, so it stays valid even when the recovery policy is deterministic, exactly where importance sampling breaks, and it empirically dominates importance sampling even when the recovery policy is stochastic. Because the recovery policy still makes credit assignment slow near the safe-region boundary, two further components accelerate learning: a closed-form value for recovery-triggering states when dynamics and recovery are deterministic, and an imitation loss that copies recovery actions only when recovery \emph{succeeds}.
On a three-environment, five-seed benchmark, the resulting algorithm reduces training-time falls by factors of $233\times$, $48\times$, and $26\times$ on HalfCheetah, Ant, and Unitree Go1 over standard \acs{PPO}, while matching or exceeding \acs{PPO}'s final reward, and on Ant, where the recovery policy is unreliable, it is the only method that reaches $80\%$ of the best final reward.
\end{abstract}

\section{Introduction}
\label{sec:intro}

\Ac{RL} policies often perform best when trained directly on the task they will be deployed on, but moving that training onto a physical robot is hard for a reason unrelated to asymptotic performance: learning requires failures, and on real hardware every failure has a price. A \emph{fall}, a loss of balance that ends the episode, can damage the platform and cannot be undone the way a simulator reset can. Recent methods learn capable controllers by collecting hundreds of thousands of episodes in massively parallel simulation \citep{Lee_2020,rudin2022learningwalkminutesusing,Agarwal2022LeggedLI}, where these failures are free, but on a real robot they are not \citep{Ibarz_2021,smith2022walkparklearningwalk}. What gates real-world training is therefore not the final reward but the number of falls spent reaching it, and the goal of \methodName{} is to keep that number small.

The common way to keep learning on real hardware is to pair the agent with a separate \emph{recovery policy}, a controller that takes over whenever the agent leaves a designer-specified \emph{safe region} and steers the system back toward a nominal state. Because the safe region is set conservatively, the agent leaves it routinely, so the recovery policy fires far more often than a fall would occur, converting most would-be falls into cheap, reversible interventions. The controller is easy to assemble from off-the-shelf parts, an \ac{MPC} solver \citep{pua2024safe}, a \ac{SAC} expert \citep{pmlr-v80-haarnoja18b}, or a hand-coded routine \citep{lee2019robustrecoverycontrollerquadrupedal}.
But it carries a hidden cost: the data the agent collects is no longer produced by the agent alone, but by a \emph{mixed policy}, the main policy inside the safe region and the recovery outside it, so each rollout is a blend of two controllers rather than a sample from the one being improved.

This blend breaks the on-policy assumption. \ac{PPO} \citep{schulman2017proximalpolicyoptimizationalgorithms} and related methods estimate the gradient of the policy that produced the rollout, so they are valid only when the rollout distribution matches the policy being updated. The moment the recovery policy intervenes, every update is biased toward the recovery policy's behavior. Most safe-\ac{RL} work leaves this mismatch unaddressed, updating as if all data came from the main policy \citep{srinivasan2020learningsafedeeprl,thananjeyan2021recoveryrlsafereinforcement,yang2022safereinforcementlearninglegged}, or sidesteps it by shaping the reward to discourage entering recovery \citep{tessler2018rewardconstrainedpolicyoptimization,stooke2020lagrangian}. The standard remedy, an importance-sampling correction \citep{degris2012off}, does not solve the problem either: a deterministic recovery (an \ac{MPC} solver or a greedy \ac{SAC} actor) produces a single action rather than a distribution, so the importance ratio has no density in its denominator and is undefined precisely where it is needed, the same obstruction that motivates deterministic policy gradients \citep{silver2014dpg}. The mechanism that makes real-world training feasible is thus the one that corrupts the learning signal, and a correct gradient must factor the recovery policy out rather than reweight through it.

\noindent \textbf{Contributions.} We close this gap with \methodName{}, a practical algorithm built on a theoretical foundation for safe \ac{RL} with a recovery policy.
We make four contributions.
\emph{First}, we prove an unbiased policy-gradient theorem (\Cref{thm:gradient}) for any mixed policy that hands control to an external recovery policy on a subset of states. The gradient uses the main-policy score function only where the main policy acts and never evaluates the recovery policy's action density, so it applies unchanged to deterministic and stochastic recoveries, and empirically matches or outperforms \ac{IS} corrections in every measured regime, with orders-of-magnitude lower gradient variance where importance ratios explode (\Cref{rem:is-variance}).
The theorem is general: safe-region intervention is the instance we develop (\Cref{cor:safe-region-gradient}), and Jump-Start \ac{RL} \citep{uchendu2023jsrl} (\Cref{cor:jsrl-gradient}) and state-triggered shielded \ac{RL} (\Cref{cor:shielded-gradient}) follow as corollaries of the same factorization.
\emph{Second}, we bound the gap
between the mixed-policy return we optimize and the main-policy return we deploy (\Cref{thm:convergence}); this gap
shrinks as the safe region grows and vanishes in the idealized limit where the region covers all reachable states.
\emph{Third}, when dynamics and recovery are deterministic, the value of a recovery-triggering state admits a closed-form expression (\Cref{prop:analytic}); used as the critic's target there, it provides dense, accurate supervision at the safe-region boundary where on-policy signal is otherwise scarce. \emph{Fourth}, we add an outcome-gated compatibility regularizer (\Cref{def:compat}) that pulls the main policy toward actions from \emph{successful} recovery segments only, a warm start from behavior the recovery policy has shown to work; without the gate it reduces to a \ac{DAgger}-style behavioral-cloning loss \citep{ross2011dagger} (\Cref{prop:compat-special-cases}).

Together these four pieces turn the recovery policy from a source of bias into a source of signal. On a three-environment, five-seed benchmark, they reduce training-time falls by $233\times$, $48\times$, and $26\times$ on \envHC, \envAnt, and \envGo over standard \ac{PPO}, while matching or exceeding \ac{PPO}'s final reward. On \envAnt, where the recovery policy is unreliable, \methodName{} is the only method that reaches the success threshold of $80\%$ of the best reward.

\section{Related Work}
\label{sec:related}

The setup we study, an agent paired with an external recovery policy that takes over outside a safe region, makes training rollouts follow a mixed policy that biases every on-policy update. We evaluate on legged locomotion, where end-to-end \ac{RL} has advanced rapidly \citep{ha2025learning,Lee_2020,Peng_2018,tan2018simtoreallearningagilelocomotion,peng2020learningagileroboticlocomotion,rudin2022learningwalkminutesusing,Agarwal2022LeggedLI,kumar2021rmarapidmotoradaptation,haarnoja2019learningwalkdeepreinforcement,bogdanovic2022modelfreereinforcementlearningrobust} and real-world fine-tuning is now routine \citep{smith2022walkparklearningwalk,smith2023growlimitscontinuousimprovement,smith2023learningadaptingagilelocomotion,Liu2024VisualWC}. These methods secure safety through reward design and sim-to-real robustness, but none confronts the bias a recovery policy injects into the on-policy gradient. Because that bias comes from the mixed rollout rather than any property of legged robots, our correction plugs into a locomotion pipeline without being tied to locomotion or any one robot.

\noindent \textbf{Off-policy RL} Correcting that bias looks like a standard off-policy problem, but off-policy policy-gradient and evaluation methods \citep{degris2012off,gu2017interpolated,jiang2016doubly} rely on the importance ratio that \Cref{sec:intro} showed a deterministic recovery leaves undefined. Truncated-\ac{IS} schemes such as V-trace \citep{espeholt2018impala} and Retrace \citep{munos2016retrace} cannot help, because the singularity sits upstream of the truncation. Our \Cref{thm:gradient} factors the recovery measure out instead of reweighting, so it applies uniformly to deterministic and stochastic recoveries, \ac{MPC} controllers \citep{Chiu2022ACM} included.

\noindent \textbf{Mixed-policy and intervention data} If reweighting is not the route, the alternative is to ask how prior work uses the same mixed-policy data, and rollouts in which a second policy intervenes are in fact common: they appear in \ac{DAgger} \citep{ross2011dagger,kelly2019hgdagger}, in human-in-the-loop \ac{RL} \citep{spencer2020learning,10.5555/3237383.3238074}, and in \ac{JSRL} \citep{uchendu2023jsrl}. These either imitate the intervening policy or discard its transitions, so none extract a bias-corrected on-policy gradient from that data. Our masked gradient (\Cref{thm:gradient}) recovers that gradient directly, and our compatibility regularizer (\Cref{sec:compat}) reduces in the \ac{JSRL} setting to a state-dependent imitation loss with an outcome-based gate (\Cref{prop:compat-special-cases}).

\noindent \textbf{Constraints and safety filters} Delegating safety to an external recovery policy is itself the minority choice: the dominant lines of safe \ac{RL} \citep{garcia2015comprehensive} build safety into the policy rather than handing it to a separate controller.
\Acp{CMDP} encode safety as constraints on long-run cost \citep{altman1999constrained}, optimized via primal-dual schemes such as \ac{CPO} \citep{achiam2017constrained}, reward-constrained policy optimization \citep{tessler2018rewardconstrainedpolicyoptimization}, and PID-Lagrangian variants \citep{stooke2020lagrangian}. Because they trade cost against return, they suit soft-constraint settings where some failures are tolerable, whereas we treat falls as something to minimize rather than to budget against return. Closer to a hard guarantee, \ac{CBF} approaches \citep{ames2019cbf} and shields \citep{junges2015safetyconstrainedreinforcementlearningmdps,alshiekh2018safe,dalal2018safeexplorationcontinuousaction,srinivasan2020learningsafedeeprl,hasanbeig2020cautiousreinforcementlearninglogical,kang2022lyapunovdensitymodelsconstraining} project unsafe actions at execution time; this is exactly the case of our mechanism in which the intervention is a single-step projection, so our setup subsumes it as one instance. Closely related are the state-wise constrained \ac{MDP} family \citep{zhao2023statewise} and almost-surely safe \ac{RL} \citep{sootla2022saute}. \Cref{sec:background} places all four formalisms in the constraint-formulation taxonomy of \citet{wachi2024constraint}.

\noindent \textbf{RL with a recovery policy} The work closest to ours commits fully to that delegation: it uses an explicit recovery policy as an alternative to \ac{CMDP} costs and pointwise filters, facing the same mixed-policy bias extended from the single-step projections of our shielded-\ac{RL} corollary to multi-step recovery segments. \citet{thananjeyan2021recoveryrlsafereinforcement} train a safety critic and switch to the recovery policy when constraint risk crosses a threshold; they deliberately relabel recovery transitions with the task policy's proposed action, coherent for their off-policy Q-learning objective (\Cref{app:baseline-ports}) but the source of the bias we correct once carried into an on-policy update. \citet{yang2022safereinforcementlearninglegged} apply a similar template to legged locomotion with reward shaping, and \citet{lee2019robustrecoverycontrollerquadrupedal} train a model-free quadruped recovery controller, an example of the external recovery policy our masked gradient handles directly.
 Closest, \citet{wagener2021sailr} intervene on an advantage criterion and prove the intervened process safe, but still optimize the policy on the mixed rollouts without correcting the bias we address. Reverse-curriculum methods \citep{florensa2017reverse} grow the region of starting states, and teacher-scheduled interventions \citep{turchetta2020curriculum} induce a safety curriculum, whereas safe-region curricula like ours grow the region the agent is permitted to enter. None of these treat the resulting distributional bias on the on-policy gradient, which our masked-gradient theorem (\Cref{thm:gradient}) addresses without requiring the recovery policy to have a density.

\section{Problem Setting and Notation}
\label{sec:background}

We study \ac{RL} in the \emph{hard-safety regime} of the constraint-formulation taxonomy of \citet{wachi2024constraint}. On a real robot a fall can break hardware, damage property, or injure a bystander, so the goal is to keep falls as rare as possible rather than to budget them against return.

The strictest formalism related to this regime is the state-wise constrained \ac{MDP} family \citep{zhao2023statewise}, which demands constraint satisfaction at every step. We relax that requirement to \emph{violation-minimization}, the model-free goal the survey identifies: without prior knowledge of the dynamics, hard state-wise safety cannot be guaranteed during training, so the target is to incur as few constraint violations (falls) as possible.

Among the ways to enforce safety in this regime (\Cref{sec:related}), ours is an external \emph{multi-step} recovery policy that takes over whenever the state leaves a designer-specified safe region and returns the agent toward a nominal configuration, unlike single-step action filters such as control-barrier projections and shields. We call this the \emph{safe-region intervention} mechanism.

We instantiate the mechanism with a discounted \ac{MDP} $(\mathcal{S}, \mathcal{A}, P, r, \gamma, T)$ with continuous state space $\mathcal{S}$ and action space $\mathcal{A}$, transition kernel $P$, reward $r \in [-\rmax, \rmax]$, discount $\gamma \in (0,1)$, and episode horizon $T$. The analysis uses the infinite-horizon discounted convention (absorption at failure states); experiments truncate at $T = 1000$ steps. Let $\piMain(a\mid s)$ be the main policy, stochastic and $\theta$-parameterized. The recovery policy $\piRec(a\mid s)$ is an arbitrary state-conditioned measure that may be deterministic (\ac{MPC}, greedy \ac{SAC}) or stochastic, with no $\theta$-dependence (its own fixed parameters, \ac{SAC} weights or an \ac{MPC} cost, are never optimized, so we suppress them).
The safe region is the sublevel set $\safeR(d) := \{s : \mathcal{D}(s) \le d\}$ of a task-space distance $\mathcal{D}$ to a nominal configuration, with $d \in [0, \dmax]$ grown over training by a curriculum schedule $\diter(\cdot)$ ($\dmax$ a per-environment hyperparameter, \Cref{sec:algorithm}).
The \emph{mixed policy} $\piMixed$ follows $\piMain$ inside $\safeR(d)$ and $\piRec$ outside it. We give its general form and an unbiased gradient in \Cref{sec:gradient}. The trajectory distributions under $\piMain$ alone and under $\piMixed$ are
$\pTraj = p(s_1) \prod_t \piMain(a_t\mid s_t) P(s_{t+1}\mid s_t, a_t)$ and
$\pTrajMixed = p(s_1) \prod_t \piMixed(a_t\mid s_t) P(s_{t+1}\mid s_t, a_t)$.
The \emph{main-policy return} is $J(\theta) = \Expect_{\tau \sim \pTraj}[r(\tau)]$ with $r(\tau) := \sum_t \gamma^t r_t$.
It is not directly observable during training, since rollouts are drawn from $\pTrajMixed$, not $\pTraj$.
The \emph{mixed-policy return}, the quantity training rollouts actually realize, is $J^{\mathrm{mix}}(\theta) = \Expect_{\tau \sim \pTrajMixed}[r(\tau)]$. Its state-conditional form, the \emph{mixed-policy value} $\Vmix(s) := \Expect_{\tau \sim \pTrajMixed}[r(\tau) \mid s_1 = s]$, is what \ac{PPO}'s learned critic $V_\theta$ estimates. Discounted state-visitations under $\piMain$ and $\piMixed$ are $\dpiMain$ and $\dpiMixed$, respectively. We use two rates, distinguished by their sampling distribution. The \emph{recovery rate} $\alphaRate := \Pr_{s \sim \dpiMixed}[s \notin \safeR(d)]$ is sampled under the \emph{mixed} distribution that training rollouts produce, hence \emph{observable} as the fraction of rollout steps on which the recovery policy fired. The \emph{out-of-region rate} $\betaRate := \Pr_{s \sim \dpiMain}[s \notin \safeR(d)]$ is sampled under the \emph{main-only} distribution never deployed during training, hence \emph{not directly observable}, yet it is the quantity that controls the gap in \Cref{thm:convergence}.
A consolidated notation table is in \Cref{app:notation}.

Together, these objects give the setup four operational features: binary set-membership safety, a possibly-deterministic recovery policy, multi-step recovery (a genuine mixed-policy distribution), and the unmodified main-policy objective, each addressed by one method component in \Cref{sec:method} (\Cref{app:operational-features}).

\section{Method}
\label{sec:method}

\methodName{} builds on \ac{PPO} with four pieces, each removing a failure mode of recovery-using \ac{RL}: (i) a \emph{masked policy gradient} (\Cref{sec:gradient}) that removes the bias from treating recovery transitions as main-policy transitions; (ii) an \emph{objective-gap bound} (\Cref{sec:convergence}) relating the training (mixed-policy) return to the deployment (main-policy) return and showing a safe-region curriculum closes the gap; (iii) an \emph{analytic recovery value} (\Cref{sec:value}) that replaces the learned critic at recovery-triggering states with a closed form under deterministic dynamics; and (iv) an \emph{outcome-gated compatibility regularizer} (\Cref{sec:compat}) that imitates recovery only after successful segments. We develop each in turn.

\subsection{Unbiased policy gradient via partition masking}
\label{sec:gradient}

To make the most of training data, we want every rollout to contribute to the task gradient, even those where the recovery policy took over for long stretches.
The obstacle is that treating recovery transitions as if the main policy generated them biases the on-policy update, and the \ac{IS} fix breaks down whenever the recovery policy is deterministic.
$\piMain$, a Gaussian in our \ac{PPO} implementation, has a differentiable log-density, whereas a deterministic $\piRec$ is a Dirac measure with no density. Because this obstacle arises whenever an external, $\theta$-independent policy takes over on a subset of states, not only in safety, we solve it in full generality first and then specialize to safe-region intervention.

\noindent \textbf{Setup.} We generalize the mixed policy of \Cref{sec:background}. Let $\mathcal{M} \subseteq \mathcal{S}$ be a designer-specified \emph{main-policy set} (where $\piMain$ acts), and let $\mu(\cdot\mid s)$ be any state-conditioned action measure that is \emph{independent of} $\theta$ (it may be deterministic or stochastic). The resulting \emph{partition policy} is
\begin{equation}\label{eq:partition-policy}
\pi_\theta^{\mathrm{mix}}(a\mid s) = \begin{cases} \piMain(a\mid s) & s \in \mathcal{M}, \\ \mu(a\mid s) & s \notin \mathcal{M}, \end{cases}
\end{equation}
and the trajectory return $J^{\mathrm{mix}}(\theta) = \Expect_{\tau \sim p_\theta^{\mathrm{mix}}}[r(\tau)]$.

\begin{assumption}[Regularity]\label{asm:regularity}
\textbf{(R1)} For every $(s,a)$ on the support of $p_\theta^{\mathrm{mix}}$ with $s \in \mathcal{M}$, $\piMain(a\mid s) > 0$ and $\theta \mapsto \log \piMain(a\mid s)$ is differentiable. \textbf{(R2)} $T < \infty$, or $|r_t| \le \rmax$ uniformly. \textbf{(R3)} Differentiation and integration commute in \cref{eq:correct-gradient}.
\end{assumption}
For diagonal-Gaussian $\piMain$ with bounded mean and log-std, and bounded reward, (R1) to (R3) all hold.

\begin{theorem}[Unbiased partition-policy gradient]\label{thm:gradient}
Under \Cref{asm:regularity},
\begin{equation}\label{eq:correct-gradient}
\nabla_\theta J^{\mathrm{mix}}(\theta) = \Expect_{\tau \sim p_\theta^{\mathrm{mix}}}\!\left[ \left(\sum_{t=1}^T \indicator[s_t \in \mathcal{M}]\,\nabla_\theta \log \piMain(a_t \mid s_t)\right)\, r(\tau) \right].
\end{equation}
Here $\indicator[\cdot]$ is the indicator function, so the sum runs only over timesteps with $s_t \in \mathcal{M}$.
\end{theorem}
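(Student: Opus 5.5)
The plan is to run the REINFORCE / likelihood-ratio derivation, but on a reference measure that does not require $\mu$ to have a density. We never divide by $\mu$. Instead we write the trajectory law $p_\theta^{\mathrm{mix}}$ as a product of kernels and observe that only the factors at timesteps with $s_t \in \mathcal{M}$ depend on $\theta$.

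Concretely, for each $s$ define the $\theta$-free dominating measure
\[
\lambda(\cdot\mid s) = \indicator[s\in\mathcal{M}]\,\mathrm{Leb}(\cdot) + \indicator[s\notin\mathcal{M}]\,\mu(\cdot\mid s).
\]
Then $\pi_\theta^{\mathrm{mix}}(\cdot\mid s)$ has the Radon--Nikodym derivative
\[
\frac{d\pi_\theta^{\mathrm{mix}}}{d\lambda}(a\mid s) = \indicator[s\in\mathcal{M}]\,\piMain(a\mid s) + \indicator[s\notin\mathcal{M}]\cdot 1 .
\]
Chaining with $p(s_1)$ and $P$, the trajectory law is
\[
p_\theta^{\mathrm{mix}}(d\tau) = g_\theta(\tau)\,\Lambda(d\tau), \qquad g_\theta(\tau) = \prod_t \piMain(a_t\mid s_t)^{\indicator[s_t\in\mathcal{M}]},
\]
where $\Lambda$ is the $\theta$-independent path measure built from $p(s_1)$, $\lambda$ and $P$. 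The recovery measure is absorbed entirely into $\Lambda$, so the deterministic (Dirac) case is handled with no extra effort.

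With this factorization the proof is short. First write $J^{\mathrm{mix}}(\theta) = \int r(\tau)\, g_\theta(\tau)\,\Lambda(d\tau)$. Then differentiate under the integral, which (R3) permits. On the support, (R1) guarantees $g_\theta>0$ and differentiability, so we can use $\nabla_\theta g_\theta = g_\theta \nabla_\theta \log g_\theta$. Finally compute
\[
\nabla_\theta \log g_\theta(\tau) = \sum_t \indicator[s_t\in\mathcal{M}]\,\nabla_\theta\log\piMain(a_t\mid s_t).
\]
The indicator is $\theta$-free because membership in $\mathcal{M}$ is a fixed function of the realized state. Substituting back gives \cref{eq:correct-gradient}. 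For $T=\infty$, (R2) bounds $|r(\tau)|\le \rmax/(1-\gamma)$. The cleanest route is to prove the identity for the truncated return $r_{\le H}$ first, which involves only finitely many factors. Then let $H\to\infty$, using (R3) together with dominated convergence to pass to the limit. Alternatively, the interchange can be taken directly as the content of (R3).

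The main obstacle is making the measure-theoretic factorization rigorous: $\lambda$ must be a genuine kernel, so $\mathcal{M}$ must be measurable, and the density must be written with respect to a $\theta$-independent dominating measure. This step is precisely what justifies never evaluating $\mu$'s density. I would first check it on a finite horizon by induction on $t$. The inductive step conditions on $s_t$ and splits into the cases $s_t\in\mathcal{M}$, where the factor is $\piMain\,da$, and $s_t\notin\mathcal{M}$, where the factor is $\mu(da\mid s_t)$ with density $1$. The Leb/Gaussian choice for the $\mathcal{M}$ branch can be replaced by any $\theta$-free measure dominating $\piMain$; nothing else in the argument changes.
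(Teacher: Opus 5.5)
Your proposal is correct and follows the paper's argument. Both factor $p_\theta^{\mathrm{mix}}$ so that only the $\mathcal{M}$-step factors $\piMain(a_t\mid s_t)$ carry $\theta$, apply the score-function identity to those factors alone, and take the derivative--integral interchange from (R3); your $\theta$-free reference measure $\Lambda$ (Lebesgue at $\mathcal{M}$-steps, $\mu$ elsewhere) with density $g_\theta$ is the precise form of the paper's remark that its factorization is an equality of measures needing no density for $\mu$.

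One caution for $T=\infty$: the truncate-then-dominated-convergence route does not work as stated, because $\sum_t \indicator[s_t\in\mathcal{M}]\,\nabla_\theta\log\piMain(a_t\mid s_t)$ need not converge when $\mathcal{M}$ is visited infinitely often. Rely instead on your fallback of reading the interchange directly off (R3), which is all the paper does.
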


We prove this in \Cref{app:gradient-proof}: because $\mu$ is $\theta$-independent, its factor in the trajectory density vanishes under differentiation, so the masked gradient never evaluates $\mu$'s density and holds for any $\mu$. This is exactly the case the intra-option policy-gradient results \citep{sutton1999between,bacon2017option} assume away, extending the factorization to the density-free external controllers they exclude, and it is why we never need the \ac{IS} correction.

\begin{remark}[Masking dominates \acl{IS}]\label{rem:is-variance}
For stochastic $\mu$ the per-step \ac{IS} estimator decomposes into the masked estimator plus a term carried by the weights $\piMain/\mu$ at recovery steps; that term has a generically nonzero mean under a genuine advantage, so per-step \ac{IS} is in general biased for $\nabla_\theta J^{\mathrm{mix}}$, whereas the masked estimator is exactly unbiased (\Cref{thm:gradient}; decomposition in \cref{eq:is-decomp}, \Cref{app:is-variance}). The weights also degenerate, since recovery acts unlike the main policy by design: the measured \ac{IS} gradient variance exceeds the masked variance by up to 13 orders of magnitude (an empirical measurement, not a theorem), and truncated \ac{IS} loses $40\%$ reward on \envAnt, while masking is identical or better in every regime we test. Masking, not \ac{IS}, is the estimator \methodName{} optimizes (\Cref{cor:ppo}).
\end{remark}

\noindent \textbf{Specializations.} Different choices of $(\mathcal{M}, \mu)$ specialize \Cref{thm:gradient} to concrete settings.

\begin{corollary}[Safe-region intervention]\label{cor:safe-region-gradient}
Take $\mathcal{M} = \safeR$ (the safe region) and $\mu = \piRec$ (the recovery policy, $\theta$-independent by setup), recovering the mixed policy $\piMixed$ of \Cref{sec:background} with return $J^{\mathrm{mix}}(\theta)$. Then \Cref{thm:gradient} gives
\(
\nabla_\theta J^{\mathrm{mix}}(\theta) = \Expect_{\tau \sim \pTrajMixed}\!\left[ \left(\sum_{t} \indicator[s_t \in \safeR]\,\nabla_\theta \log \piMain(a_t \mid s_t)\right) r(\tau) \right].
\)
\end{corollary}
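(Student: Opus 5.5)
The corollary is a direct instantiation of \Cref{thm:gradient}, so the plan is to check that the pair $(\mathcal{M},\mu)=(\safeR,\piRec)$ satisfies the theorem's hypotheses and then read off \cref{eq:correct-gradient}.

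\textbf{Step 1: identify the policies.} With $\mathcal{M}=\safeR(d)$ and $\mu=\piRec$, the partition policy \cref{eq:partition-policy} is, case by case, exactly the mixed policy $\piMixed$ of \Cref{sec:background}: it equals $\piMain$ inside $\safeR(d)$ and $\piRec$ outside it. Because the transition kernel $P$ and initial law $p(s_1)$ are shared, the trajectory laws coincide, $p_\theta^{\mathrm{mix}}=\pTrajMixed$. Hence the two returns $J^{\mathrm{mix}}(\theta)$ also coincide.

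\textbf{Step 2: check the hypotheses.} The theorem needs $\mu$ to be a state-conditioned measure independent of $\theta$.
\begin{itemize}[leftmargin=*,nosep]
\item The setup declares $\piRec$ to be an arbitrary state-conditioned measure, deterministic or stochastic, whose own parameters are fixed and never optimized. So it has no $\theta$-dependence, and no density is required.
\item \Cref{asm:regularity} concerns only $\piMain$ on $\mathcal{M}$, the reward, and the interchange of limits. These are the same conditions assumed in the corollary's standing setting, e.g.\ a diagonal-Gaussian $\piMain$ with bounded reward, as remarked after the assumption.
\end{itemize}

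\textbf{Step 3: substitute.} Applying \cref{eq:correct-gradient} and replacing $\indicator[s_t\in\mathcal{M}]$ by $\indicator[s_t\in\safeR]$ gives the displayed formula. The finite sum $\sum_{t=1}^T$ becomes $\sum_t$, read under the paper's infinite-horizon discounted convention, for which (R2) supplies the bounded-reward case.

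\textbf{Main obstacle.} The only subtle point is that the curriculum grows $d$ over training, which might seem to make $\mathcal{M}$ depend on the optimization. For the gradient, however, $d$ is fixed during each update, so $\safeR(d)$ is a fixed $\theta$-independent set. The indicator therefore contributes no derivative, and the corollary is a pure specialization with no new computation.
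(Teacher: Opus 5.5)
Your proposal is correct and takes essentially the same approach as the paper. The paper also argues that the proof of \Cref{thm:gradient} uses only the $\theta$-independence of $\mu$, so instantiating $(\mathcal{M},\mu)=(\safeR,\piRec)$ gives the formula directly; your remark that $d$ is fixed within each update, making $\safeR(d)$ a $\theta$-independent set, is a helpful clarification the paper leaves implicit.
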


The same factorization yields Jump-Start \ac{RL} \citep{uchendu2023jsrl} and state-triggered shielded \ac{RL} as corollaries
(\Cref{cor:jsrl-gradient,cor:shielded-gradient}, \Cref{app:gradient-proof}). The rest of the paper develops safe-region intervention (\Cref{cor:safe-region-gradient}).

\begin{corollary}[\ac{PPO} surrogate with safe-step masking]\label{cor:ppo}
Specializing \Cref{cor:safe-region-gradient} to the clipped \ac{PPO} surrogate, the empirical masked-gradient estimator is
\begin{equation}\label{eq:ppo-safe}
\widehat{\nabla J^{\mathrm{mix}}}(\theta) = \frac{1}{|\Tsafe|} \sum_{t : s_t \in \safeR} \nabla_\theta \log \piMain(a_t\mid s_t)\, \Aest_t^{\mathrm{full}},
\end{equation}
where $\Aest_t^{\mathrm{full}}$ is the \ac{GAE} \citep{Schulman2015HighDimensionalCC} advantage over the full mixed-policy trajectory (recovery segments enter through the return exactly as in \Cref{thm:gradient}, while the mask restricts only the score function) and $|\Tsafe|=|\{t:s_t \in\safeR\}|$. The clipped \ac{PPO} surrogate is
\begin{equation}
\PPOloss_{\mathrm{safe}}(\theta) = -\frac{1}{|\Tsafe|} \sum_{t : s_t \in \safeR} \min\!\left( \rho_t(\theta)\, \Aest_t^{\mathrm{full}},\; \mathrm{clip}(\rho_t(\theta), 1\!-\!\epsilon, 1\!+\!\epsilon)\, \Aest_t^{\mathrm{full}} \right),
\end{equation}
with $\rho_t(\theta) = \piMain(a_t\mid s_t) / \pi_{\theta_{\mathrm{old}}}(a_t\mid s_t)$, well-defined since both are the main policy.
\end{corollary}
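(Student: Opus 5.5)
The plan is to start from \Cref{cor:safe-region-gradient} and pass from the trajectory-level estimator to the per-timestep advantage form that \ac{PPO} uses. The order is: causality, then baseline subtraction, then \ac{GAE} substitution, then importance ratios between old and new main policies, then clipping.

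\emph{Step 1: causality.} For each safe timestep $t$ with $s_t\in\safeR$, I replace $r(\tau)$ by the reward-to-go $\sum_{k\ge t}\gamma^k r_k$. The dropped past rewards $r_k$, $k<t$, are measurable with respect to the history $h_t=(s_1,a_1,\dots,s_t)$. The mask $\indicator[s_t\in\safeR]$ is $h_t$-measurable as well. Under $p^{\mathrm{mix}}_\theta$, the action $a_t$ given $h_t$ with $s_t\in\safeR$ is drawn from $\piMain(\cdot\mid s_t)$. Hence, by (R1) and (R3),
\[
\Expect\!\left[\indicator[s_t\in\safeR]\,\nabla_\theta\log\piMain(a_t\mid s_t)\,r_k\right]=0 \quad\text{for } k<t,
\]
since $\int\nabla_\theta\piMain(a\mid s_t)\,da=0$.

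\emph{Step 2: baseline subtraction.} The same argument lets me subtract any $h_t$-measurable baseline, in particular $\gamma^t\Vmix(s_t)$. The reward-to-go then becomes, in conditional expectation, $\gamma^t\Qmix(s_t,a_t)-\gamma^t\Vmix(s_t)=\gamma^t\Amix(s_t,a_t)$. The important point is that these are the \emph{mixed-policy} value and advantage. Future recovery segments therefore enter through the continuation return, exactly as the statement says. This justifies using $\Aest_t^{\mathrm{full}}$ computed by \ac{GAE} over the whole mixed trajectory, including recovery steps, while the score function is restricted to safe steps. I will state that \ac{GAE} with $\lambda<1$ and a learned critic is the usual biased-but-standard approximation of $\Amix$. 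Normalizing by $|\Tsafe|$ rather than by the discounted weight is a positive rescaling of the direction, as is conventional in \ac{PPO}.

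\emph{Step 3: importance ratios.} I rewrite the estimator around $\theta_{\mathrm{old}}$. Rollouts come from $p^{\mathrm{mix}}_{\theta_{\mathrm{old}}}$, and at safe steps the behavior action density is $\pi_{\theta_{\mathrm{old}}}(a_t\mid s_t)$. So the identity $\nabla_\theta\rho_t(\theta)\big|_{\theta=\theta_{\mathrm{old}}}=\nabla_\theta\log\piMain(a_t\mid s_t)$ shows that the unclipped surrogate $\frac{1}{|\Tsafe|}\sum_{t:s_t\in\safeR}\rho_t\Aest_t^{\mathrm{full}}$ has gradient equal to \cref{eq:ppo-safe} at $\theta=\theta_{\mathrm{old}}$. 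The ratio $\rho_t$ is well defined because both numerator and denominator are Gaussian main-policy densities; $\piRec$ never appears.

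\emph{Step 4: clipping.} At $\theta=\theta_{\mathrm{old}}$ we have $\rho_t=1$, which lies in the interior of $[1-\epsilon,1+\epsilon]$. There the $\min$ selects the unclipped term with matching gradient, so $-\nabla\PPOloss_{\mathrm{safe}}$ coincides with \cref{eq:ppo-safe} at the expansion point.

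I expect Step 2 to be the main obstacle. Conditioning is subtle when later timesteps switch between $\piMain$ and $\piRec$, and I must make sure that the $Q$-function obtained is $\Qmix$ under the partition policy, not $Q^{\piMain}$. I would handle this with the tower property on $h_{t+1}$, using the Markov property of the mixed process.
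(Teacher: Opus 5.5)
Your route is the one the paper takes, made explicit. The paper gives no separate proof, only the remark after the corollary that, relative to standard clipped \ac{PPO}, the only changes are the safe-step mask on the score function and the random normalizer $|\Tsafe|$, with \ac{GAE} and clipping otherwise unchanged. Your Steps 1, 3 and 4 are correct. So is your point that the baseline-subtracted reward-to-go produces $\Amix$ rather than the main-policy advantage, which is why \ac{GAE} must run through recovery segments while only the score is masked. Together these supply the justification the paper leaves implicit, and your argument for the well-definedness of $\rho_t$ matches the paper's.

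One sentence in Step 2 is false and should go: normalizing by $|\Tsafe|$ ``rather than by the discounted weight'' is not a positive rescaling of the gradient direction. After Steps 1--2 the exact quantity is $\Expect\bigl[\sum_t \indicator[s_t\in\safeR]\,\gamma^t\,\nabla_\theta\log\piMain(a_t\mid s_t)\,\Amix(s_t,a_t)\bigr]$, because the reward-to-go $\sum_{k\ge t}\gamma^k r_k$ carries a factor $\gamma^t$.

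\emph{The $\gamma^t$ weights.} Weighting all safe steps equally changes the relative weight of early and late steps. The result is therefore not a scalar multiple even for a single rollout: it replaces the discounted visitation with the undiscounted empirical one.

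\emph{The random normalizer.} $|\Tsafe|$ is a random function of the same rollout and is in general correlated with the summands (rollouts with long recovery segments have few safe steps). Hence $\Expect[X/|\Tsafe|]$ is not proportional to $\Expect[X]$. The paper flags this in \Cref{app:value-loss-target}, noting that the random denominator ``introduces a small ratio bias''. It claims exact unbiasedness only for the unnormalized masked score-function term of \Cref{thm:gradient}.

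The corollary asserts only the form of the estimator and surrogate and the well-definedness of $\rho_t$, so you do not need that sentence. Replace it with a list of the standard \ac{PPO} departures that \cref{eq:ppo-safe} inherits: the dropped $\gamma^t$ weights, the random normalizer, and \ac{GAE}'s critic bias. With that change the argument is sound.
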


The masked score-function term is exactly unbiased for $\nabla_\theta J^{\mathrm{mix}}$ (\Cref{thm:gradient}). Relative to the standard clipped-\ac{PPO} surrogate over the same rollouts, \cref{eq:ppo-safe} changes only two things: the sums run over safe steps, and the normalizer is $|\Tsafe|$, the number of such steps in a rollout and hence random, which the curriculum on $d$ keeps large in practice (\Cref{sec:algorithm}, \Cref{app:value-loss-target}). \ac{GAE}, the clipping, and thus \ac{PPO}'s stability heuristics are otherwise unchanged.

\subsection{Bounding the gap between mixed-policy and main-policy returns}
\label{sec:convergence}

\Cref{thm:gradient}'s gradient targets the \emph{mixed-policy return} $J^{\mathrm{mix}}(\theta)$, the quantity training rollouts realize, but the deployment-relevant quantity is the \emph{main-policy return} $J(\theta)$, what $\piMain$ earns alone, which training never samples directly. Optimizing one while caring about the other is only safe if the two cannot drift far apart, so this section bounds $|J^{\mathrm{mix}} - J|$ and shows it is exactly zero under reading (R-A) below.

The two returns differ only on the steps where $\piMain$ acts but $\piMixed$ would have called the recovery policy, that is, the steps where $\piMain$ alone leaves the safe region. The gap is therefore controlled by how reliably $\piMain$ keeps itself inside $\safeR$, which we make precise with a one-step invariance property.

\begin{assumption}[Approximate $\piMain$-invariance of $\safeR$]\label{asm:invariance}
There exists $\eta \ge 0$ such that, starting from any reachable $s \in \safeR$, $\piMain$ stays in $\safeR$ at the next step with probability at least $1-\eta$ uniformly over time.
\end{assumption}
This one-step set-invariance property holds with $\eta = 0$ exactly in the idealized limit (R-A) below and approximately in our setting (R-B).

Summed over a discounted trajectory, this one-step slack $\eta$ controls the out-of-region rate $\betaRate$ (\Cref{sec:background}), which inherits a bound $\betaRate \le \eta\gamma/(1-\gamma) \le \eta/(1-\gamma)$ whenever episodes start inside the region, as ours do (\Cref{app:convergence-proof}).

\begin{theorem}[Objective-gap bound]\label{thm:convergence}
Under \Cref{asm:regularity},
\begin{equation}\label{eq:convergence}
|J(\theta) - J^{\mathrm{mix}}(\theta)| \le \frac{2\,\rmax}{(1-\gamma)^2}\, \betaRate.
\end{equation}
\end{theorem}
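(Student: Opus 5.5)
We will prove the bound with the performance-difference lemma, choosing the direction so that the sampling distribution is the main-policy visitation $\dpiMain$; that is the distribution defining $\betaRate$. Write $J(\theta)=\Expect_{s_1}[V^{\piMain}(s_1)]$ and $J^{\mathrm{mix}}(\theta)=\Expect_{s_1}[\Vmix(s_1)]$. We use the normalization $\dpiMain(s)=(1-\gamma)\sum_t\gamma^{t}\Pr(s_t=s)$, with $r(\tau)=\sum_t\gamma^t r_t$. The lemma (Kakade--Langford) applied to $\piMain$ against the baseline $\piMixed$ gives
\begin{equation*}
J(\theta)-J^{\mathrm{mix}}(\theta)=\frac{1}{1-\gamma}\,\Expect_{s\sim\dpiMain}\,\Expect_{a\sim\piMain(\cdot\mid s)}\bigl[\Amix(s,a)\bigr].
\end{equation*}
The bounded reward (R2) makes every value finite and justifies the telescoping exchange of sums. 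Regularity (R1), (R3) is not otherwise needed; only boundedness is used.

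\textbf{Step 1: kill the safe states.} For $s\in\safeR(d)$ the mixed policy is exactly $\piMain$, so $\Expect_{a\sim\piMain}[\Amix(s,a)]=\Expect_{a\sim\piMixed}[\Amix(s,a)]=0$. The integrand is therefore supported on $s\notin\safeR(d)$. This is the key structural point: the two returns differ only through states where $\piMain$ is outside the region, which is the informal claim preceding \Cref{asm:invariance}.

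\textbf{Step 2: bound the advantage.} Since $|r|\le\rmax$, we have $|\Qmix|,|\Vmix|\le \rmax/(1-\gamma)$, hence $|\Amix(s,a)|\le 2\rmax/(1-\gamma)$.

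\textbf{Step 3: combine.} Putting the steps together,
\begin{equation*}
|J-J^{\mathrm{mix}}|\le\frac{1}{1-\gamma}\cdot\frac{2\rmax}{1-\gamma}\cdot\Pr_{s\sim\dpiMain}[s\notin\safeR(d)]=\frac{2\rmax}{(1-\gamma)^2}\betaRate,
\end{equation*}
which is \cref{eq:convergence}. \Cref{asm:invariance} is not needed for the theorem itself; it only enters the remark bounding $\betaRate$ by $\eta$.

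\textbf{Main obstacle.} The delicate step is matching conventions. We need the direction of the lemma that puts $\dpiMain$, not $\dpiMixed$, outside. Swapping roles would produce $\alphaRate$ instead, so we must use the advantage of the \emph{mixed} policy evaluated under main-policy visitation.

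The normalization must also match. If $\dpiMain$ is unnormalized (mass $1/(1-\gamma)$), the $1/(1-\gamma)$ prefactor disappears. In that case $\betaRate$ as a probability still requires normalizing, so the constant $(1-\gamma)^{-2}$ is the one consistent with $\betaRate\in[0,1]$.

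Absorption at failure states must also be handled. We treat absorbing states as having zero reward and identical transitions under both policies, so the lemma applies unchanged. The finite-horizon truncation $T$ only shrinks the sums, so it preserves the bound.
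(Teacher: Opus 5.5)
Your proof is correct and is the paper's own argument. You apply the performance-difference lemma to $\piMain$ against the baseline $\piMixed$ under $\dpiMain$, the inner expectation vanishes on $\safeR$ because the two policies coincide there, and $|\Amix|\le 2\rmax/(1-\gamma)$ off it; your aside that swapping roles yields $\alphaRate$ is also right, and that swapped version (sampling from $\dpiMixed$, where $\Expect_{a\sim\piMain}[A^{\piMain}(s,a)]=0$ on $\safeR$ and $A^{\piMain}$ is well-defined even at a Dirac recovery action) is itself a valid bound of the same form, so the direction is dictated only by which rate the statement names.
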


The proof (\Cref{app:convergence-proof}) applies the Performance Difference Lemma \citep{kakade2002approximately}. The advantage of $\piMain$ against $\piMixed$ vanishes inside $\safeR$, where the two policies agree, and is bounded by $2\rmax/(1-\gamma)$ outside it, so the gap scales with the main-policy out-of-region rate $\betaRate$. The bound is non-vacuous only for $\betaRate < 1-\gamma$ ($\eta_\star < (1-\gamma)^2$ in \Cref{cor:fixed-point}); we read it structurally (\Cref{app:thm2-check}).

How far \cref{eq:convergence} can be pushed as the curriculum radius $\diter$ grows to $\dmax$ depends on the largest safe region it reaches, $\safeR(\dmax)$, which admits two readings.

\begin{corollary}[Conditional fixed-point]\label{cor:fixed-point}
\textbf{(Exact, R-A.)} If $\safeR(\dmax)$ covers the reachable state space, then at $\diter = \dmax$, $\piMain \equiv \piMixed$ on the reachable support of $\pTraj$, $\beta(\theta,\dmax)=0$, and $J^{\mathrm{mix}}(\theta) = J(\theta)$ exactly. \textbf{(Approximate, R-B.)} Otherwise $|J(\theta)-J^{\mathrm{mix}}(\theta)| \le 2\rmax \eta_\star / (1-\gamma)^3$, where $\eta_\star = \eta(\theta,\dmax) \ge 0$ is the env-dependent invariance slack that (R-B) does not force to zero.
\end{corollary}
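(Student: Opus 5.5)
The proof splits along the two readings, and both parts are direct consequences of \Cref{thm:convergence} and the invariance-to-rate bound stated after \Cref{asm:invariance}. No new machinery is needed; the work is in checking that each reading feeds the right quantity into \cref{eq:convergence}.

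\textbf{(Exact, R-A).} At $\diter=\dmax$, suppose $\safeR(\dmax)$ contains every state reachable under $\pTraj$. Then $\indicator[s\in\safeR(\dmax)]=1$ $\dpiMain$-almost surely, so by definition $\beta(\theta,\dmax)=\Pr_{s\sim\dpiMain}[s\notin\safeR(\dmax)]=0$. Since $\piMixed$ coincides with $\piMain$ on $\safeR(\dmax)$ by \cref{eq:partition-policy}, the two policies agree on the reachable support of $\pTraj$.

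I will then argue by induction on $t$ that $\pTraj$ and $\pTrajMixed$ coincide. Both start from $p(s_1)$, with $s_1$ reachable. If the two prefix laws agree up to $s_t$ and $s_t$ is reachable, the factors $\piMain(a_t\mid s_t)$ and $\piMixed(a_t\mid s_t)$ are equal, and the transition kernel $P$ is shared. Hence the prefix laws agree up to $s_{t+1}$, which is again reachable.

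It follows that $J^{\mathrm{mix}}(\theta)=J(\theta)$. Alternatively, I can plug $\beta=0$ into \cref{eq:convergence} and get the same equality immediately.

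\textbf{(Approximate, R-B).} Here I apply \Cref{asm:invariance} with slack $\eta_\star=\eta(\theta,\dmax)$. Episodes start inside the region, so the stated consequence gives $\beta(\theta,\dmax)\le \eta_\star\gamma/(1-\gamma)\le \eta_\star/(1-\gamma)$. Substituting this into \cref{eq:convergence} yields
\begin{equation*}
|J(\theta)-J^{\mathrm{mix}}(\theta)| \le \frac{2\rmax}{(1-\gamma)^2}\cdot\frac{\eta_\star}{1-\gamma}=\frac{2\rmax\,\eta_\star}{(1-\gamma)^3},
\end{equation*}
which is the claim.

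\textbf{Main obstacle.} The step needing care is the rate bound $\beta\le\eta\gamma/(1-\gamma)$, if it is to be verified rather than assumed. The plan is a union bound over the first exit time from $\safeR$ under $\piMain$. Under the discounted visitation normalized by $(1-\gamma)$, the probability of being outside at step $t$ is at most the probability of having exited by step $t$. That probability is at most $t\eta$, since each step taken from inside the region exits with probability at most $\eta$ uniformly in time. Summing gives
\begin{equation*}
(1-\gamma)\sum_t \gamma^t t\eta=\frac{\eta\gamma}{1-\gamma}.
\end{equation*}

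The remaining subtlety is that \Cref{asm:invariance} only controls exits from *reachable* states in $\safeR$. Since every state visited by $\piMain$ is reachable, this restriction costs nothing. Once outside, states are not required to return, which is why I bound the event "ever exited by step $t$" rather than "outside at step $t$" directly.
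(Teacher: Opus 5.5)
Your proposal is correct and matches the paper's proof: (R-A) sets $\beta(\theta,\dmax)=0$ and plugs it into \cref{eq:convergence}, and your inductive coupling of $\pTraj$ and $\pTrajMixed$ is a harmless extra check; (R-B) substitutes $\beta\le\eta_\star/(1-\gamma)$. The only deviation is in how you re-derive that rate bound: you use a union bound $\Pr[\text{exited by } t]\le t\eta$, whereas the paper uses the product bound $\Pr[s_0,\dots,s_t\in\safeR]\ge(1-\eta)^t$, which gives the slightly sharper $\eta\gamma/(1-\gamma(1-\eta))$ before both relax to the same $\eta\gamma/(1-\gamma)$.
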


The proof specializes \Cref{thm:convergence} to the two readings of $\safeR(\dmax)$ (\Cref{app:fixedpoint-proof}).
Our setting is (R-B): $\safeR(\dmax)$ is a tuned hyperparameter strictly inside the reachable set, so the bound is approximate, the theory no longer forcing $\eta = 0$, though the curriculum can still drive it low in practice. \Cref{thm:convergence} then says any reduction in $\betaRate$ tightens the gap proportionally, which the linear schedule $\diter\!:\!d_0 \to \dmax$ achieves, with the observable $\alphaRate$ as its diagnostic (\Cref{app:thm2-check}). Falls are orthogonal to $d$, and those still possible under (R-B) are what the compatibility regularizer (\Cref{sec:compat}) targets. \Cref{app:obs-bound} sketches why $\betaRate$ is hard to bound from \emph{observable} quantities in continuous-action settings.

\subsection{Analytic recovery value}
\label{sec:value}

The masked gradient fixes the policy gradient, but it leaves a second error untouched: the critic.
A learned critic trained over the recovery segments bootstraps its value targets \emph{through} them, so even with a corrected gradient the value-target error persists; empirically, the learned-$V$ variant under-performs every other variant on \envHC. Replacing the learned critic at recovery-triggering states with the analytic value derived below removes this error, and is the largest single-ingredient reward gain on \envHC at no cost in falls (on \envAnt and \envGo the outcome gate contributes more; quantified in \Cref{sec:ablations}).

When dynamics and recovery are both deterministic, the practically common case (\ac{MPC}, greedy \ac{SAC}), the value at a recovery-triggering state has a closed form: the recovery segment unrolls along a single fixed path, so its contribution is the return along that path rather than something the critic must learn.

\begin{proposition}[Analytic recovery value]\label{prop:analytic}
Suppose dynamics $P$ and recovery $\piRec$ are both deterministic. Let $s_t$ be a recovery-triggering state, $k \ge 1$ the number of recovery steps until either re-entry into $\safeR$ at $t+k$ (\emph{success}) or termination at $t+k$ (\emph{failure}), and $G_{t,k} = \sum_{j=0}^{k-1} \gamma^j r_{t+j}$ the realized segment return. Then
\begin{equation}\label{eq:analytic-V}
\Vmix(s_t) = \begin{cases} G_{t,k} + \gamma^k\, \Vmix(s_{t+k}) & \text{success}, \\ G_{t,k} & \text{failure}. \end{cases}
\end{equation}
Under our convention that recovery accrues zero per-step task reward and failure incurs a one-time terminal failure reward $r_{\mathrm{term}}$, the segment return $G_{t,k}$ is $0$ on success and $\gamma^{k-1} r_{\mathrm{term}}$ on failure. In practice we bootstrap the re-entry value with the learned critic $V_\theta$, writing $\gamma^k V_\theta(s_{t+k})$ into the buffer on success (\Cref{alg:safeexplorer}). This carries no Monte-Carlo variance under deterministic $P$ and $\piRec$.
\end{proposition}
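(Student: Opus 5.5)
The plan is to unroll the Bellman equation of the mixed policy along the recovery segment and use determinism to remove every expectation. The starting point is the definition of $\Vmix(s) = \Expect_{\tau \sim \pTrajMixed}[r(\tau)\mid s_1=s]$, shifted to begin at $s_t$ (valid because $\piMixed$ is stationary and the process is Markov). I would then split the discounted sum at time $t+k$:
\[
r(\tau)\big|_{\ge t} = \sum_{j=0}^{k-1}\gamma^j r_{t+j} + \gamma^k \sum_{j\ge 0}\gamma^j r_{t+k+j}.
\]

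\textbf{Step 1: the segment is deterministic.} I would show by induction on $j$ that, for $0 \le j < k$, the states $s_{t+j}$ and actions $a_{t+j}$ are deterministic functions of $s_t$. The base case holds because $s_t \notin \safeR$, so $\piMixed(\cdot\mid s_t) = \piRec(\cdot\mid s_t)$ by \cref{eq:partition-policy} with $\mathcal{M}=\safeR$, and this is a Dirac measure. Hence $a_t$ is fixed, and deterministic $P$ fixes $s_{t+1}$. For the inductive step, the definition of $k$ as the first re-entry or termination time gives $s_{t+j}\notin\safeR$ for $j<k$, so the same argument applies. Consequently $k$, the labels ``success'' and ``failure'', and $G_{t,k}$ are all deterministic functions of $s_t$. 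The expectation of the first piece is therefore exactly $G_{t,k}$, with no Monte-Carlo variance.

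\textbf{Step 2: the tail.} On failure, $s_{t+k}$ is absorbing with zero subsequent reward under the absorption convention of \Cref{sec:background}, so the tail vanishes and $\Vmix(s_t)=G_{t,k}$. On success, $s_{t+k}\in\safeR$ is a fixed state. By the Markov property of the mixed process, the conditional expectation of the tail given the prefix equals $\Vmix(s_{t+k})$; this is the step where a tower-property argument is needed. Beyond $t+k$ the main policy may be stochastic, but this does not matter because we only need the conditional expectation. This yields $G_{t,k}+\gamma^k\Vmix(s_{t+k})$.

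\textbf{Step 3: the reward convention.} Substituting zero per-step reward on recovery steps gives $G_{t,k}=0$ on success. On failure, the only nonzero reward is $r_{\mathrm{term}}$ at the last segment index $j=k-1$, so $G_{t,k}=\gamma^{k-1}r_{\mathrm{term}}$. The bootstrapped version simply replaces $\Vmix(s_{t+k})$ with the critic $V_\theta(s_{t+k})$. It inherits the critic's error but adds no sampling noise, because $s_{t+k}$ is deterministic given $s_t$.

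\textbf{Main obstacle.} The main obstacle is bookkeeping rather than depth. First, I must make the time-indexing of the terminal reward consistent; specifically, whether $r_{\mathrm{term}}$ is attached to step $t+k-1$ or $t+k$ determines the exponent $k-1$. Second, I must justify applying the Markov property at the stopping time $t+k$. Because $k$ is deterministic given $s_t$, this reduces to a fixed-time conditioning and requires no strong Markov machinery.
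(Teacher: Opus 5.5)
Your proposal is correct and follows essentially the same route as the paper's proof. Determinism of $P$ and $\piRec$ collapses the expectation over the recovery segment $[t,t+k]$, the success and failure cases give $G_{t,k}+\gamma^k\Vmix(s_{t+k})$ and $G_{t,k}$ respectively, and the reward convention (with $r_{\mathrm{term}}$ accrued at step $t+k-1$) yields $G_{t,k}\in\{0,\gamma^{k-1}r_{\mathrm{term}}\}$; your explicit induction along the segment and your Markov-property step at the deterministic time $t+k$ make rigorous what the paper states in a single line.
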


The proof is direct (\Cref{app:analytic-proof}):
under deterministic dynamics and recovery, the post-trigger trajectory is a single deterministic path of length $k$, so its return is the realized return.%
\footnote{\Cref{thm:gradient,thm:convergence} hold for any bounded reward, covering this training-signal convention; the induced train-to-deploy return gap is controlled by the same $\betaRate$ (\Cref{app:proofs}). Reported returns always use the unmodified task reward (\Cref{sec:exp-setup}).}

In practice we write \cref{eq:analytic-V} into the rollout buffer before \ac{GAE}, which then proceeds normally and feeds the masked gradient (\Cref{cor:ppo}) only at safe steps (\Cref{app:algorithm,app:value-loss-target}).

\subsection{The compatibility regularizer}
\label{sec:compat}

The masked gradient and analytic value together define a provably unbiased on-policy update for $J^{\mathrm{mix}}(\theta)$, but unbiasedness alone does not make the agent self-sufficient: neither component pulls $\piMain$ toward behaving like recovery at unsafe states. This matters precisely when recovery is unreliable (\Cref{subsec:q2}): an unbiased update over repeated failed recoveries teaches the policy nothing about avoiding those states unaided. We close this gap with a lightweight imitation-from-recovery loss with an outcome-conditioned weight. As an imitation signal rather than an importance correction, it needs no recovery density and covers deterministic controllers.
The closest methods, \ac{DAgger} \citep{ross2011dagger} and \ac{JSRL} \citep{uchendu2023jsrl}, do not gate the imitation signal by segment outcome, and self-imitation learning \citep{oh2018sil}, which does, imitates the agent's own past actions rather than an external recovery policy, the distinction we now make precise.

\begin{definition}[Compatibility regularizer]\label{def:compat}
Let $a_t^{\mathrm{rec}} = \piRec(s_t)$ be the recovery action executed at unsafe step $t$. The trajectory-level compatibility score is
\begin{equation}\label{eq:Ctheta}
\Ctheta = \prod_{t : s_t \notin \safeR} \piMain(a^{\mathrm{rec}}_t \mid s_t),
\quad
\log \Ctheta = \sum_{t : s_t \notin \safeR} \log \piMain(a^{\mathrm{rec}}_t \mid s_t).
\end{equation}
\end{definition}

A high $\Ctheta$ measures exactly the agreement we want, $\piMain$ assigning high likelihood to the recovery actions at the states where it intervened. Raising it indiscriminately is harmful: a failed recovery (segment ends in termination) is exactly what $\piMain$ should \emph{not} imitate. We therefore gate the imitation signal by the realized outcome of each segment. Let $\sigma_k \in \{0,1\}$ indicate success of recovery segment $k$ (re-entry into $\safeR$), and $\seg(t)$ the segment containing step $t$. Reading success off the realized rollout keeps the gate applicable whether recovery is deterministic or stochastic. The \emph{hard outcome gate} is
\begin{equation}\label{eq:hard-gate}
w_t^{\mathrm{hard}} = \sigma_{\seg(t)}, \quad t : s_t \notin \safeR.
\end{equation}
Two boundary cases follow the same reading: a segment cut off by rollout truncation counts as a tentative success with a bootstrapped value, and an immediate re-exit opens a new segment. A signed soft variant based on per-step value change is ablated in \Cref{app:soft-gate}; the hard gate dominates empirically, so $w_t = w_t^{\mathrm{hard}}$ throughout unless noted.

Putting the score and its gate together, the regularizer is the outcome-gated form of $-\log \Ctheta$ from \cref{eq:Ctheta}, added to the \ac{PPO} loss with coefficient $\lambdaCompat$:
\begin{equation}\label{eq:compat-loss}
\compatLoss(\theta) = -\, \frac{\lambdaCompat}{N_{\mathrm{rec}}} \sum_{t : s_t \notin \safeR} w_t \, \log \piMain(a^{\mathrm{rec}}_t \mid s_t),
\quad
L(\theta) = \PPOloss_{\mathrm{safe}} + c_v\, \valueLoss + \compatLoss.
\end{equation}
Here $N_{\mathrm{rec}}$ counts the recovery-controlled minibatch steps (\Cref{app:notation}; per-segment normalization ablated in \Cref{app:compat-norm}), and $\valueLoss$ is the standard \ac{PPO} critic loss with coefficient $c_v$ (entropy bonus retained, $c_e = 0$ by default; \Cref{app:algorithm}).

$\compatLoss$ is a supervised auxiliary term, deliberately outside the unbiased $J^{\mathrm{mix}}$ gradient: it trades a tunable amount of bias, scaled by $\lambdaCompat$, for a signal the unbiased components cannot supply, pulling $\piMain$ toward the actions of \emph{successful} recovery segments.
That trade pays off exactly where it should: the term costs a little reward on \envHC but adds substantially on \envGo and \envAnt ($+1121$ and $+2683$ reward over the gate-free variant, \Cref{sec:ablations}).

The gate is all that separates this term from standard imitation: always on ($\sigma_k \equiv 1$), $\compatLoss$ reduces to \ac{DAgger}-style \citep{ross2011dagger} cloning on the recovery-controlled states, $\piRec$ as teacher (\Cref{prop:compat-special-cases}, \Cref{app:compat-proof}); the gate echoes the Q-filter of \citet{nair2018demonstrations}, reading a realized segment outcome instead of a learned value estimate.

\section{Algorithm}
\label{sec:algorithm}

\methodName{} assembles the three algorithmic pieces developed in the previous sections, the masked policy gradient (\Cref{thm:gradient}, \Cref{cor:ppo}), the analytic recovery value (\Cref{prop:analytic}), and the hard outcome-gated compatibility regularizer (\Cref{sec:compat}), into a single \ac{PPO} update (\Cref{alg:safeexplorer}); the fourth piece, the objective-gap bound (\Cref{thm:convergence}), needs no implementation beyond the curriculum below.
Because each piece intervenes only where the recovery policy acts, the resulting algorithm is a small edit to a standard \ac{PPO} implementation: the changes touch only the rollout's recovery branch and the loss, everything else is unchanged. That locality keeps the method cheap: the added cost is one recovery-policy forward pass per triggered step plus segment bookkeeping (runtimes in \Cref{app:hyperparams}).

This leaves one moving part, the safe-region radius that decides when recovery is triggered, which we anneal over training with a linear curriculum \citep{10.1145/1553374.1553380,JMLR:v21:20-212}: $\diter(u) = d_0 + \frac{u-1}{N}\,\dmax$, growing the region from a tight $d_0$ to the per-environment $\dmax$ over the $N$ updates ($d_0$, $\dmax$, $N$ in \Cref{app:hyperparams}).
\Cref{app:curr-shape} ablates logarithmic, step, and constant schedules on \envAnt, our least reliable recovery policy, where the linear schedule attains the highest reward. This annealing is the practical mechanism by which the recovery rate $\alphaRate$ falls over training (\Cref{app:thm2-check}), an empirical diagnostic for the gap-tightening predicted by \Cref{thm:convergence}.

\section{Experimental Setup}
\label{sec:exp-setup}

Our experiments evaluate three claims. The first is the \emph{safety-reward trade-off} of \methodName{} against standard \ac{PPO}, recovery-using baselines, and \ac{CMDP} baselines, judged on both falls and reward. The second is the \emph{unreliable-recovery regime} that most motivates the method, where the recovery policy itself often fails and the trade-off is hardest to win. The third asks \emph{which ingredient} drives the gains: the unbiased masked gradient (\Cref{thm:gradient}), the analytic recovery value (\Cref{prop:analytic}), or the outcome-gated regularizer (\Cref{def:compat}). A single metric adjudicates all three: the number of training falls a method incurs before first reaching task success, where success is $80\%$ of the best final reward attained by any method in that environment. On \envAnt and \envGo that best method is \methodName{}, so the reward winner partly sets its own bar; \Cref{tab:main-results} therefore reports the underlying rewards and fall counts separately.

\noindent \textbf{Environments.} We evaluate on three continuous-control locomotion environments: \envHC and \envAnt (custom MuJoCo \citep{todorov2012mujoco}/Gymnasium \citep{towers_gymnasium_2023} variants with a configurable safe-region indicator; full specification in \Cref{app:env}), and \envGo, a Unitree Go1 quadruped built on the \texttt{mujoco\_menagerie} model \citep{menagerie2022mujoco} with a velocity-tracking reward adapted from \texttt{legged\_gym} \citep{rudin2022learningwalkminutesusing}.
The mechanism is identical across all three: the distance $\mathcal{D}$ defining $\safeR(d)$ is a $(z, \mathrm{tilt})$ task-space distance to the nominal upright pose (exact formulas in \Cref{app:safe-region-distance}), and each environment keeps its standard locomotion reward, zeroed only during recovery (\Cref{sec:value}). The environments differ in how hard they make safety: \envHC (planar, strong recovery) sets the baseline, \envAnt (3D, weaker recovery) tests whether the compatibility regularizer compensates for an unreliable fall-back, and \envGo (12-\ac{DoF} quadruped) scales the claim to a higher-dimensional, multi-contact platform.

\noindent \textbf{Recovery policies.} Each environment uses a separately pre-trained \ac{SAC} \citep{pmlr-v80-haarnoja18b} actor as $\piRec$: a stand-and-stabilize objective on its own environment, task terms disabled, no demonstrations, aggressively randomized resets (recipes in \Cref{app:recovery-training}). At rollout time it takes the \ac{SAC} mean action, deterministic, so it exercises the no-density case the masked gradient is built for (\Cref{thm:gradient}).
The \envAnt recovery policy is meaningfully weaker than the others, the most informative axis of variation: its strong control-magnitude penalty caps how aggressively it can correct, and the harder 3D stabilization needs those corrections (\Cref{app:env}).

The remaining choice is the radius $\dmax$, the one quantity a skeptic might suspect was tuned. Unbiasedness holds for any $\dmax$ (\Cref{thm:gradient}), so no radius can manufacture statistical validity, though it still shapes results through the gap bound (\Cref{thm:convergence}) and the curriculum. We set each $\dmax$ (\Cref{app:hyperparams}) by a qualitative criterion, that the curriculum lowers the observable intervention rate (\Cref{app:thm2-check}), without a quantitative search. The fixed-$d$ sweep (\Cref{tab:fixed-d}) shows sensitivity to a \emph{constant} radius that the curriculum mostly recovers (\Cref{app:curr-shape}); no systematic sweep varies $\dmax$ under the curriculum, a limitation (\Cref{app:extra-results}).

\noindent \textbf{Variants compared.}
\methodName{} (\Cref{alg:safeexplorer}) is compared against three external baselines spanning the established alternatives identified in \Cref{sec:related}. Standard \ac{PPO} is the no-recovery extreme.
Recovery RL \citep{thananjeyan2021recoveryrlsafereinforcement} and Safe Legged \citep{yang2022safereinforcementlearninglegged} are \emph{on-policy ports} of the two published recovery-using alternatives: both run inside the same \ac{PPO} loop, sharing \methodName{}'s safe-region trigger and frozen \ac{SAC} recovery policy, and each isolates its published method's data-handling rule at recovery steps, action relabeling for Recovery RL versus relabeling plus a fixed reward penalty of $1$ for Safe Legged (exact rules and deviations in \Cref{app:baseline-ports}).
A controlled ablation isolating each ingredient of \methodName{} is reported in \Cref{sec:ablations}; additional soft-gate variants are in \Cref{app:soft-variants}.

\noindent \textbf{\ac{CMDP} baselines.} To position \methodName{} against the standard no-recovery alternative, we compare to two \ac{CMDP} solvers, \ac{CPO} \citep{achiam2017constrained} and \ac{PPO}-Lagrangian \citep{ray2019benchmarking,stooke2020lagrangian}, which act with a single policy and encode safety as a constraint on an indicator cost (1 per fall).
We run both via OmniSafe \citep{ji2023omnisafe} at settings matched to \methodName{}, except the solvers' default observation and reward normalization, which \methodName{} does not use, favoring the baselines (\Cref{app:cmdp-feasibility}). \Cref{sec:exp-cmdp} analyzes their constraint feasibility.

\noindent \textbf{Training and evaluation.} We train each variant for $8$M environment steps on \envHC and \envAnt and $16$M on \envGo (its higher-dimensional action space needs a longer horizon), with five random seeds. Final reward is the per-episode mean of the last $10\%$ of training, computed from the unmodified task reward, not the recovery-zeroed learning signal; falls are the cumulative count over the full training run, a fall being a per-environment unhealthy termination (\Cref{tab:env-summary}), never a time-limit truncation. Tables report mean $\pm$ standard deviation over seeds; the headline and ablation plots aggregate with interquartile means and $95\%$ stratified-bootstrap confidence intervals \citep{agarwal2021deep}. A method's falls-to-success is the cumulative falls before its trailing-mean reward first crosses the success bar; a method that never crosses it is counted as not reaching success, and falls-to-success aggregates use the succeeding seeds only (the $k/n$ labels in \Cref{fig:headline}). Shared \ac{PPO} hyperparameters follow CleanRL's defaults \citep{JMLR:v23:21-1342} and are listed in \Cref{tab:hyperparams}.

\section{Main Results}
\label{sec:exp-main}

\subsection{\methodName{} reaches task success with the fewest falls}
\label{subsec:tradeoff}

Addressing our first claim, we compare all methods on the three environments: \Cref{fig:headline} reports falls-to-success with the per-method solve counts $k/n$, \Cref{tab:main-results} the separate final-reward and total-training-fall figures (different quantities, since total falls span the whole run, not just the climb to success), and \Cref{fig:headline-iqm} the per-metric rliable interquartile-mean view. \methodName{} reaches success with the fewest training falls in every environment, an interquartile mean of $12$ falls on \envHC, $17$ on \envAnt, and $118$ on \envGo. The advantage is most stark on \envAnt, where it is the only method to reach success at all, on all five seeds, while every baseline fails to cross the bar on any seed (quantified in \Cref{subsec:q2}). The separation also widens with the difficulty of staying safe: on \envGo the next-cheapest method after \methodName{} to reach success at all is Recovery RL on a single seed ($309$ falls), with \ac{PPO}-Lagrangian and \ac{PPO} at $9{,}658$ and $15{,}320$. The same lead shows up in the table's lifetime view: measured as total training falls (\Cref{tab:main-results}), \methodName{} reduces falls over \ac{PPO} by $\mathbf{233\times}$ on \envHC, $\mathbf{48\times}$ on \envAnt, and $\mathbf{26\times}$ on \envGo while matching or beating \ac{PPO}'s final reward.

\begin{table}[ht]
\centering
\caption{Final reward (mean over the last $10\%$ of training, on the same unmodified task reward for every method) and total training-time falls for \methodName{} and all baselines. For recovery-using methods reward is the training-time mixed-policy return, including recovery actions (recovery-disabled return for \methodName{}: \Cref{app:thm2-check}). \ac{CPO} and \ac{PPO}-Lagrangian are \ac{CMDP} baselines with a per-fall cost (\Cref{sec:exp-setup}). Bold marks the best value per column; on \envHC falls, \methodName{}, \ac{PPO}-Lagrangian, and \ac{CPO} are within seed noise. This is the canonical body-protocol batch anchoring the headline fall-reduction ratios; the appendix sensitivity tables use separate re-run batches (batch provenance in \Cref{app:extra-results}).}
\label{tab:main-results}
\small
\setlength{\tabcolsep}{4pt}
\resizebox{\textwidth}{!}{%
\begin{tabular}{l rr rr rr}
\toprule
& \multicolumn{2}{c}{\envHC} & \multicolumn{2}{c}{\envAnt} & \multicolumn{2}{c}{\envGo} \\
\cmidrule(lr){2-3}\cmidrule(lr){4-5}\cmidrule(lr){6-7}
Method & Reward~$\uparrow$ & Falls~$\downarrow$ & Reward~$\uparrow$ & Falls~$\downarrow$ & Reward~$\uparrow$ & Falls~$\downarrow$ \\
\midrule
\ac{PPO} (no recovery) & $3246\pm521$ & $3311\pm940$ & $308\pm165$ & $36467\pm14598$ & $3347\pm215$ & $18683\pm1403$ \\
Recovery RL & $1951\pm1423$ & $947\pm1831$ & $76\pm25$ & $53535\pm7611$ & $3136\pm511$ & $1689\pm1379$ \\
Safe Legged & $\mathbf{4054\pm604}$ & $1552\pm1084$ & $83\pm20$ & $53548\pm11682$ & $2347\pm353$ & $3022\pm1225$ \\
\ac{CPO} & $2835\pm403$ & $23\pm30$ & $1047\pm106$ & $899\pm117$ & $359\pm48$ & $39613\pm2498$ \\
\ac{PPO}-Lagrangian & $2939\pm414$ & $18\pm19$ & $1339\pm402$ & $5094\pm2774$ & $3915\pm953$ & $12572\pm1636$ \\
\midrule
\textbf{\methodName{} (ours)} & $3581\pm971$ & $\mathbf{14\pm8}$ & $\mathbf{2857\pm129}$ & $\mathbf{754\pm463}$ & $\mathbf{5594\pm177}$ & $\mathbf{731\pm177}$ \\
\bottomrule
\end{tabular}}
\end{table}

\begin{figure}[ht]
\centering
\includegraphics[width=\linewidth]{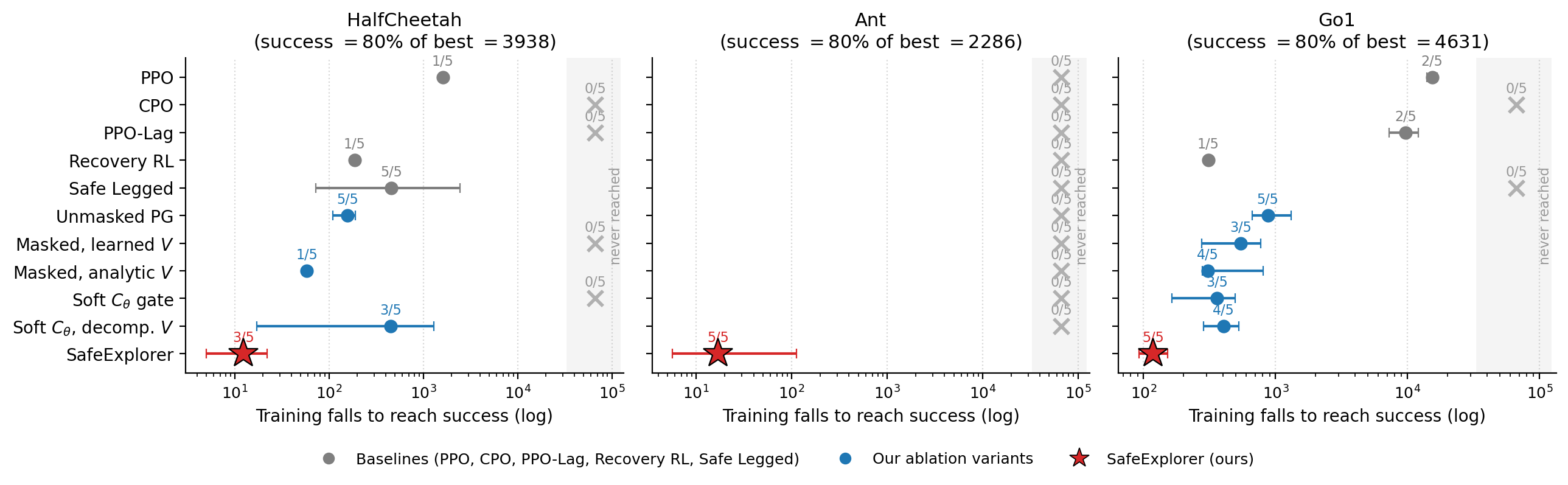}
\caption{Training falls to reach task success (success $=80\%$ of the best final reward in each environment), interquartile mean with $95\%$ bootstrap confidence intervals \citep{agarwal2021deep}; log axis, fewer falls is better. Labels show the seeds reaching success ($k/n$); never-reachers are marked at right. Baselines are gray, \methodName{} ablation variants (\Cref{sec:ablations}) blue, and \methodName{} the red star, which reaches success with the fewest falls in every environment and is the only method to reach it on \envAnt.}\label{fig:headline}
\end{figure}

Reading the environments one at a time, the same ranking holds but the reason shifts with how hard safety is. On \envHC, where the recovery policy is reliable, \ac{PPO} is competitive on reward but pays for it in falls, while \methodName{} matches that reward at a fraction of the falls. On \envGo, the larger action space and longer horizon make \ac{PPO} highly unsafe, while Recovery RL and Safe Legged control falls only at substantial reward cost and \methodName{} attains both the highest reward and the lowest fall count. \envAnt is the informative exception: the ranking still favors \methodName{}, but it behaves differently because the recovery policy itself is unreliable, the case we isolate next in \Cref{subsec:q2}.

This dependence on recovery is exactly what \Cref{thm:convergence} predicts: \Cref{app:thm2-check} confirms on body-matched 5-seed runs that the gap $J^{\mathrm{mix}}(\theta_k) - J(\theta_k)$ tracks the recovery rate $\alphaRate$, closing to seed noise on \envHC and \envAnt, where the policy becomes self-stable, and settling at a proportional residual on \envGo, where it does not.

Per-environment learning curves
are deferred to \Cref{fig:curves} in \Cref{app:extra-results}.

\subsection{The unreliable-recovery regime: \envAnt}
\label{subsec:q2}

This subsection addresses our second claim, the unreliable-recovery regime: \Cref{subsec:tradeoff} cannot show \emph{why} \methodName{} remains effective where the recovery policy itself fails, so we isolate the one environment where that happens, \envAnt, whose \ac{SAC} recovery is markedly less reliable. Replaying the recovery from the states a trained policy actually drives it into, it completes the segment only $70$ to $91\%$ of the time on \envAnt, against $97$ to $100\%$ on \envHC and \envGo (one representative run per environment; \Cref{tab:operational-recovery}).
The recovery-based baselines reveal the latter: \ac{PPO}, Recovery RL, and Safe Legged all fail catastrophically ($36$k to $54$k falls, reward $\le 308$). The mechanism is a feedback cycle, because when the recovery policy is unreliable, any method that relies on it \emph{reactively} keeps re-entering unsafe states, recovery keeps failing, and the agent never learns to avoid them unaided.
\methodName{} breaks that cycle and alone reaches the $2286$ success bar, at $2857$ reward and $754$ falls, an order of magnitude fewer falls than any other recovery-using method; the strongest baselines, \ac{PPO}-Lagrangian and \ac{CPO}, reach $1339$ and $1047$. The ablations pinpoint why: every variant that drops the hard outcome gate, including the soft-gate variants, also fails on \envAnt (\Cref{sec:ablations}); the gate, not the masked gradient or the analytic value, supplies the fallback skill when recovery is unreliable.

\subsection{Comparison to \texorpdfstring{\ac{CMDP}}{CMDP} baselines}
\label{sec:exp-cmdp}

This subsection completes our first claim by comparing \methodName{} against the \ac{CMDP} alternative to a recovery policy, which casts safety as a constraint and solves the resulting \ac{CMDP} rather than intervening on the trajectory. \ac{CPO} and \ac{PPO}-Lagrangian, the trust-region and Lagrangian solvers, appear in \Cref{tab:main-results} and \Cref{fig:headline} with a cost of $1$ per fall, at settings identical to \methodName{}. \methodName{} attains the highest reward of the three in every environment and the fewest total training falls on \envAnt and \envGo, with all three methods low and within seed noise on \envHC (\Cref{tab:main-results}). The more revealing comparison is constraint feasibility: whether each solver keeps the falls it promised to.

The constrained solvers' poor safety on the hard environments is not under-tuning. Each \ac{CMDP} solver is configured with a cost limit on falls, as its formulation requires, and meets it on \envHC where staying safe is easy, but \emph{violates its own limit} on \envAnt and \envGo by up to $19\times$: \ac{CPO} converges to an episodic cost of $0.96$ on \envGo against the $0.05$ budget it was set, and even incurs more \envGo falls than unconstrained \ac{PPO} at a tenth of the reward (\Cref{app:cmdp-feasibility}). Constraint satisfaction thus holds where safety is easy and fails exactly where the safe-region intervention is needed.
The difference is one of mechanism: the \ac{CMDP} baselines act alone and only penalize the violation after the fact, whereas \methodName{} acts on the safe region directly, which is why it keeps falls low where the constrained solvers cannot.

\section{Ablation Analysis}
\label{sec:ablations}

This section addresses our third claim, which ingredient drives \methodName{}'s gains over prior recovery-using methods. We isolate each of its three new ingredients with a ladder of controlled variants that share the recovery policy and curriculum and differ only at recovery steps. The ladder starts from the biased update those methods use, which keeps recovery transitions as if the main policy had generated them (a learned-critic on-policy gradient), and turns on one ingredient at a time: \emph{Unmasked PG} is that biased baseline; \emph{masked, learned $V$} switches to the unbiased masked policy gradient (\Cref{thm:gradient}); \emph{masked, analytic $V$} adds the analytic recovery value (\Cref{prop:analytic}); and \methodName{} adds the hard outcome gate (\cref{eq:hard-gate}), with \ac{PPO} (no recovery) anchoring the no-intervention end. \Cref{fig:headline} ranks these variants by falls-to-success and \Cref{fig:ablation-pareto} (\Cref{app:ablation-ladder}) gives the complementary reward-vs-falls view. Each ingredient contributes a distinct improvement, attributed rung by rung below.

The first rung, the masked policy gradient (\Cref{thm:gradient}), is the foundation. Its biased alternative, Unmasked PG,
reaches competitive or higher reward on \envHC and \envGo, but pays $14\times$, $7\times$, and $4\times$ more falls than \methodName{} on \envHC, \envAnt, and \envGo (\Cref{tab:soft-variants}), and is ill-posed for deterministic recovery. Masking removes that bias, the prerequisite for the next two ingredients.

Removing the bias exposes a second problem: the unbiased gradient is sparse at the safe-region boundary, so the masked, learned-$V$ variant alone regresses to $1982$ reward on \envHC, the lowest of any rung (\Cref{tab:soft-variants}). The analytic recovery value (\Cref{prop:analytic}) supplies dense, correct targets exactly there, lifting \envHC reward to $3870$, the largest single-ingredient gain on that environment ($+1888$).

The top rung adds the hard outcome gate (\methodName{}), which dominates on safety in every environment and supplies the largest single-ingredient reward gains on \envGo and \envAnt ($+1121$ and $+2683$; \Cref{tab:soft-variants}), where the analytic value alone leaves reward flat; it is the only variant to reach the \envAnt success bar ($2286$; next best $1253$), the strongest evidence for the imitation reading of $\Ctheta$ (\Cref{sec:compat}).

In the $\lambdaCompat$ sweep (\Cref{tab:lambda-sweep}; \envHC and \envAnt only, the default transferred to \envGo untested), \envHC reward stays at $4768$ to $5783$ across $[10^{-3}, 1]$, peaking at $10^{-2}$, but falls are minimized at the $10^{-3}$ default ($47$) and rise $8\times$ within a decade ($374$ at $10^{-2}$); we choose the default on falls, not reward. \Cref{app:extra-results} reports the rest:
robustness to hyperparameters and noise (\Cref{app:lambda-sensitivity,app:fixed-d,app:curr-shape,app:compat-norm,app:action-noise}), the soft-gate variants (\Cref{app:soft-variants}), compute-scaling (\Cref{app:16m-scaling}), the full ladder (\Cref{app:ablation-ladder}), and a failure-mode catalog (\Cref{app:failure-modes}).

\section{Discussion and Limitations}
\label{sec:discussion}

\noindent \textbf{When the method helps, and when it does not.} \methodName{}'s advantage is smallest when \ac{PPO} is already safe and recovery is reliable, and largest in the opposite regimes: weak recovery (\envAnt), dangerous unaided \ac{PPO} (\envGo). The hard outcome gate makes this work: a soft gate imitates failed recovery actions and degrades both metrics (\Cref{app:soft-variants}). We treat the recovery policy as a permanent fallback, not temporary scaffolding: on \envGo it stays mildly engaged ($\alphaRate \approx 0.05$) even at $16$M steps, and keeping it available is what bounds falls there (\Cref{app:thm2-check}); our claim is reduced training falls, not their elimination.

\noindent \textbf{Assumptions behind the guarantees.} The analytic value (\Cref{prop:analytic}) is exact only under deterministic dynamics and recovery. Under noise it becomes a single-sample estimate, unbiased up to critic error at re-entry, favorable in practice since $\piRec$ is near-deterministic; a multi-sample extension is in \Cref{app:analytic-stochastic}. The objective-gap bound (\Cref{thm:convergence}) is controlled by the unobservable out-of-region rate $\betaRate$, tracked via the observable $\alphaRate$ as a diagnostic (\Cref{app:thm2-check}). A certified observable bound remains open (a deterministic recovery makes the natural simulation-lemma bound vacuous; routes in \Cref{app:obs-bound}).

\noindent \textbf{Generality of the compatibility regularizer.} The gate is also what makes the regularizer general:
without it $\compatLoss$ is \ac{DAgger}-style imitation (\Cref{prop:compat-special-cases}), so outcome-gated imitation should transfer to other teacher-intervention settings, human-in-the-loop \ac{RL} \citep{spencer2020learning} or \ac{JSRL} \citep{uchendu2023jsrl}.

\noindent \textbf{Beyond locomotion.} That transfer argument is theoretical; our evaluation is locomotion-only, though the correction applies wherever a fixed recovery policy takes over at unsafe states (a scripted reset in manipulation, an emergency-stop in driving); validating these is future work.
The method also presupposes a pre-trained recovery policy and a hand-designed distance $\mathcal{D}$, whose acquisition cost is excluded from the reported budgets (\Cref{app:hyperparams,app:env}).

\section{Conclusion}
\label{sec:conclusion}

We built \methodName{}, an on-policy \ac{RL} algorithm for safety-critical control that, as a single \ac{PPO} update, reduces training-time falls by $26\times$ to $233\times$ across three locomotion environments while matching or beating standard \ac{PPO} on reward.

These gains rest on a theoretical foundation for \ac{RL} with an external recovery policy: a masked policy-gradient theorem unbiased for the mixed-policy return under any recovery, an objective-gap bound with an exact fixed point, an analytic recovery value, and an outcome-gated compatibility regularizer, all instantiated for \emph{safe-region intervention} to minimize training falls.


\bibliography{references}

\begin{thebibliography}{67}
\providecommand{\natexlab}[1]{#1}
\providecommand{\url}[1]{\texttt{#1}}
\expandafter\ifx\csname urlstyle\endcsname\relax
  \providecommand{\doi}[1]{doi: #1}\else
  \providecommand{\doi}{doi: \begingroup \urlstyle{rm}\Url}\fi

\bibitem[Achiam et~al.(2017)Achiam, Held, Tamar, and
  Abbeel]{achiam2017constrained}
Joshua Achiam, David Held, Aviv Tamar, and Pieter Abbeel.
\newblock Constrained policy optimization.
\newblock In \emph{Proceedings of the 34th International Conference on Machine
  Learning (ICML)}, pp.\  22--31, 2017.

\bibitem[Agarwal et~al.(2022)Agarwal, Kumar, Malik, and
  Pathak]{Agarwal2022LeggedLI}
Ananye Agarwal, Ashish Kumar, Jitendra Malik, and Deepak Pathak.
\newblock Legged locomotion in challenging terrains using egocentric vision.
\newblock In \emph{Conference on Robot Learning}, 2022.

\bibitem[Agarwal et~al.(2021)Agarwal, Schwarzer, Castro, Courville, and
  Bellemare]{agarwal2021deep}
Rishabh Agarwal, Max Schwarzer, Pablo~Samuel Castro, Aaron~C. Courville, and
  Marc~G. Bellemare.
\newblock Deep reinforcement learning at the edge of the statistical precipice.
\newblock \emph{Advances in Neural Information Processing Systems (NeurIPS)},
  34:\penalty0 29304--29320, 2021.

\bibitem[Alshiekh et~al.(2018)Alshiekh, Bloem, Ehlers, K{\"o}nighofer, Niekum,
  and Topcu]{alshiekh2018safe}
Mohammed Alshiekh, Roderick Bloem, R{\"u}diger Ehlers, Bettina K{\"o}nighofer,
  Scott Niekum, and Ufuk Topcu.
\newblock Safe reinforcement learning via shielding.
\newblock In \emph{Proceedings of the 32nd AAAI Conference on Artificial
  Intelligence}, 2018.

\bibitem[Altman(1999)]{altman1999constrained}
Eitan Altman.
\newblock \emph{Constrained {M}arkov Decision Processes}.
\newblock Chapman \& Hall/CRC, 1999.

\bibitem[Ames et~al.(2019)Ames, Coogan, Egerstedt, Notomista, Sreenath, and
  Tabuada]{ames2019cbf}
Aaron~D. Ames, Samuel Coogan, Magnus Egerstedt, Gennaro Notomista, Koushil
  Sreenath, and Paulo Tabuada.
\newblock Control barrier functions: Theory and applications.
\newblock In \emph{Proceedings of the 2019 European Control Conference (ECC)},
  pp.\  3420--3431, 2019.

\bibitem[Bacon et~al.(2017)Bacon, Harb, and Precup]{bacon2017option}
Pierre-Luc Bacon, Jean Harb, and Doina Precup.
\newblock The option-critic architecture.
\newblock In \emph{Proceedings of the 31st AAAI Conference on Artificial
  Intelligence (AAAI)}, pp.\  1726--1734, 2017.

\bibitem[Bengio et~al.(2009)Bengio, Louradour, Collobert, and
  Weston]{10.1145/1553374.1553380}
Yoshua Bengio, J\'{e}r\^{o}me Louradour, Ronan Collobert, and Jason Weston.
\newblock Curriculum learning.
\newblock In \emph{Proceedings of the 26th Annual International Conference on
  Machine Learning}, ICML '09, pp.\  41–48, New York, NY, USA, 2009.
  Association for Computing Machinery.
\newblock ISBN 9781605585161.
\newblock \doi{10.1145/1553374.1553380}.
\newblock URL \url{https://doi.org/10.1145/1553374.1553380}.

\bibitem[Bogdanovic et~al.(2022)Bogdanovic, Khadiv, and
  Righetti]{bogdanovic2022modelfreereinforcementlearningrobust}
Miroslav Bogdanovic, Majid Khadiv, and Ludovic Righetti.
\newblock Model-free reinforcement learning for robust locomotion using
  demonstrations from trajectory optimization.
\newblock \emph{Frontiers in Robotics and AI}, 9, 2022.

\bibitem[Chiu et~al.(2022)Chiu, Sleiman, Mittal, Farshidian, and
  Hutter]{Chiu2022ACM}
Jiawei Chiu, Jean-Pierre Sleiman, Mayank Mittal, Farbod Farshidian, and Marco
  Hutter.
\newblock A collision-free mpc for whole-body dynamic locomotion and
  manipulation.
\newblock In \emph{2022 International Conference on Robotics and Automation
  (ICRA)}, pp.\  4686--4693, 2022.

\bibitem[Dalal et~al.(2018)Dalal, Dvijotham, Vecer{\'{\i}}k, Hester, Paduraru,
  and Tassa]{dalal2018safeexplorationcontinuousaction}
Gal Dalal, Krishnamurthy Dvijotham, Matej Vecer{\'{\i}}k, Todd Hester, Cosmin
  Paduraru, and Yuval Tassa.
\newblock Safe exploration in continuous action spaces.
\newblock \emph{CoRR}, abs/1801.08757, 2018.

\bibitem[Degris et~al.(2012)Degris, White, and Sutton]{degris2012off}
Thomas Degris, Martha White, and Richard~S. Sutton.
\newblock Off-policy actor-critic.
\newblock In \emph{Proceedings of the 29th International Conference on Machine
  Learning (ICML)}, 2012.

\bibitem[Espeholt et~al.(2018)Espeholt, Soyer, Munos, Simonyan, Mnih, Ward,
  Doron, Firoiu, Harley, Dunning, Legg, and Kavukcuoglu]{espeholt2018impala}
Lasse Espeholt, Hubert Soyer, Remi Munos, Karen Simonyan, Volodymyr Mnih, Tom
  Ward, Yotam Doron, Vlad Firoiu, Tim Harley, Iain Dunning, Shane Legg, and
  Koray Kavukcuoglu.
\newblock {IMPALA}: Scalable distributed deep-{RL} with importance weighted
  actor-learner architectures.
\newblock In \emph{Proceedings of the 35th International Conference on Machine
  Learning (ICML)}, pp.\  1407--1416, 2018.

\bibitem[Florensa et~al.(2017)Florensa, Held, Wulfmeier, Zhang, and
  Abbeel]{florensa2017reverse}
Carlos Florensa, David Held, Markus Wulfmeier, Michael Zhang, and Pieter
  Abbeel.
\newblock Reverse curriculum generation for reinforcement learning.
\newblock In \emph{Proceedings of the 1st Annual Conference on Robot Learning
  (CoRL)}, pp.\  482--495, 2017.

\bibitem[Garc{\'i}a \& Fern{\'a}ndez(2015)Garc{\'i}a and
  Fern{\'a}ndez]{garcia2015comprehensive}
Javier Garc{\'i}a and Fernando Fern{\'a}ndez.
\newblock A comprehensive survey on safe reinforcement learning.
\newblock \emph{Journal of Machine Learning Research}, 16\penalty0
  (42):\penalty0 1437--1480, 2015.

\bibitem[Gu et~al.(2017)Gu, Lillicrap, Ghahramani, Turner, Sch{\"o}lkopf, and
  Levine]{gu2017interpolated}
Shixiang Gu, Tim Lillicrap, Zoubin Ghahramani, Richard~E. Turner, Bernhard
  Sch{\"o}lkopf, and Sergey Levine.
\newblock Interpolated policy gradient: Merging on-policy and off-policy
  gradient estimation for deep reinforcement learning.
\newblock In \emph{Advances in Neural Information Processing Systems
  (NeurIPS)}, 2017.

\bibitem[Ha et~al.(2025)Ha, Lee, van~de Panne, Xie, Yu, and
  Khadiv]{ha2025learning}
Sehoon Ha, Joonho Lee, Michiel van~de Panne, Zhaoming Xie, Wenhao Yu, and Majid
  Khadiv.
\newblock Learning-based legged locomotion: State of the art and future
  perspectives.
\newblock \emph{The International Journal of Robotics Research}, 44\penalty0
  (8):\penalty0 1396--1427, 2025.

\bibitem[Haarnoja et~al.(2018)Haarnoja, Zhou, Abbeel, and
  Levine]{pmlr-v80-haarnoja18b}
Tuomas Haarnoja, Aurick Zhou, Pieter Abbeel, and Sergey Levine.
\newblock Soft actor-critic: Off-policy maximum entropy deep reinforcement
  learning with a stochastic actor.
\newblock In Jennifer Dy and Andreas Krause (eds.), \emph{Proceedings of the
  35th International Conference on Machine Learning}, volume~80 of
  \emph{Proceedings of Machine Learning Research}, pp.\  1861--1870. PMLR,
  10--15 Jul 2018.
\newblock URL \url{https://proceedings.mlr.press/v80/haarnoja18b.html}.

\bibitem[Haarnoja et~al.(2019)Haarnoja, Zhou, Ha, Tan, Tucker, and
  Levine]{haarnoja2019learningwalkdeepreinforcement}
Tuomas Haarnoja, Aurick Zhou, Sehoon Ha, Jie Tan, G.~Tucker, and Sergey Levine.
\newblock Learning to walk via deep reinforcement learning.
\newblock In \emph{Robotics: Science and Systems XV}, 2019.
\newblock \doi{10.15607/RSS.2019.XV.011}.

\bibitem[Hasanbeig et~al.(2020)Hasanbeig, Abate, and
  Kroening]{hasanbeig2020cautiousreinforcementlearninglogical}
Mohammadhosein Hasanbeig, Alessandro Abate, and Daniel Kroening.
\newblock Cautious reinforcement learning with logical constraints.
\newblock In Amal El~Fallah Seghrouchni, Gita Sukthankar, Bo~An, and Neil
  Yorke{-}Smith (eds.), \emph{Proceedings of the 19th International Conference
  on Autonomous Agents and Multiagent Systems, {AAMAS} '20, Auckland, New
  Zealand, May 9-13, 2020}, pp.\  483--491. International Foundation for
  Autonomous Agents and Multiagent Systems, 2020.
\newblock \doi{10.5555/3398761.3398821}.

\bibitem[Huang et~al.(2022)Huang, Dossa, Ye, Braga, Chakraborty, Mehta, and
  {a}o G.~M.~Ara\'{u}jo]{JMLR:v23:21-1342}
Shengyi Huang, Rousslan Fernand~Julien Dossa, Chang Ye, Jeff Braga, Dipam
  Chakraborty, Kinal Mehta, and Jo\ {a}o G.~M.~Ara\'{u}jo.
\newblock Cleanrl: High-quality single-file implementations of deep
  reinforcement learning algorithms.
\newblock \emph{Journal of Machine Learning Research}, 23\penalty0
  (274):\penalty0 1--18, 2022.
\newblock URL \url{http://jmlr.org/papers/v23/21-1342.html}.

\bibitem[Ibarz et~al.(2021)Ibarz, Tan, Finn, Kalakrishnan, Pastor, and
  Levine]{Ibarz_2021}
Julian Ibarz, Jie Tan, Chelsea Finn, Mrinal Kalakrishnan, Peter Pastor, and
  Sergey Levine.
\newblock How to train your robot with deep reinforcement learning: lessons we
  have learned.
\newblock \emph{The International Journal of Robotics Research}, 40:\penalty0
  698 -- 721, 2021.

\bibitem[Ji et~al.(2024)Ji, Zhou, Zhang, Dai, Pan, Sun, Huang, Geng, Liu, and
  Yang]{ji2023omnisafe}
Jiaming Ji, Jiayi Zhou, Borong Zhang, Juntao Dai, Xuehai Pan, Ruiyang Sun,
  Weidong Huang, Yiran Geng, Mickel Liu, and Yaodong Yang.
\newblock {OmniSafe}: An infrastructure for accelerating safe reinforcement
  learning research.
\newblock \emph{Journal of Machine Learning Research}, 25\penalty0
  (285):\penalty0 1--6, 2024.

\bibitem[Jiang \& Li(2016)Jiang and Li]{jiang2016doubly}
Nan Jiang and Lihong Li.
\newblock Doubly robust off-policy value evaluation for reinforcement learning.
\newblock In \emph{Proceedings of the 33rd International Conference on Machine
  Learning (ICML)}, pp.\  652--661, 2016.

\bibitem[Junges et~al.(2015)Junges, Jansen, Dehnert, Topcu, and
  Katoen]{junges2015safetyconstrainedreinforcementlearningmdps}
Sebastian Junges, N.~Jansen, Christian Dehnert, Ufuk Topcu, and Joost-Pieter
  Katoen.
\newblock Safety-constrained reinforcement learning for mdps.
\newblock In \emph{International Conference on Tools and Algorithms for
  Construction and Analysis of Systems}, 2015.

\bibitem[Kakade \& Langford(2002)Kakade and Langford]{kakade2002approximately}
Sham Kakade and John Langford.
\newblock Approximately optimal approximate reinforcement learning.
\newblock In \emph{Proceedings of the 19th International Conference on Machine
  Learning (ICML)}, pp.\  267--274, 2002.

\bibitem[Kang et~al.(2022)Kang, Gradu, Choi, Janner, Tomlin, and
  Levine]{kang2022lyapunovdensitymodelsconstraining}
Katie Kang, Paula Gradu, Jason~J. Choi, Michael Janner, Claire~J. Tomlin, and
  Sergey Levine.
\newblock Lyapunov density models: Constraining distribution shift in
  learning-based control.
\newblock In Kamalika Chaudhuri, Stefanie Jegelka, Le~Song, Csaba
  Szepesv{\'{a}}ri, Gang Niu, and Sivan Sabato (eds.), \emph{International
  Conference on Machine Learning, {ICML} 2022, 17-23 July 2022, Baltimore,
  Maryland, {USA}}, volume 162 of \emph{Proceedings of Machine Learning
  Research}, pp.\  10708--10733. {PMLR}, 2022.

\bibitem[Kearns et~al.(2002)Kearns, Mansour, and Ng]{kearns2002sparse}
Michael Kearns, Yishay Mansour, and Andrew~Y. Ng.
\newblock A sparse sampling algorithm for near-optimal planning in large
  {M}arkov decision processes.
\newblock \emph{Machine Learning}, 49\penalty0 (2--3):\penalty0 193--208, 2002.

\bibitem[Kelly et~al.(2019)Kelly, Sidrane, Driggs-Campbell, and
  Kochenderfer]{kelly2019hgdagger}
Michael Kelly, Chelsea Sidrane, Katherine Driggs-Campbell, and Mykel~J.
  Kochenderfer.
\newblock {HG-DAgger}: Interactive imitation learning with human experts.
\newblock In \emph{2019 International Conference on Robotics and Automation
  (ICRA)}, pp.\  8077--8083, 2019.
\newblock \doi{10.1109/ICRA.2019.8793698}.

\bibitem[Kumar et~al.(2021)Kumar, Fu, Pathak, and
  Malik]{kumar2021rmarapidmotoradaptation}
Ashish Kumar, Zipeng Fu, Deepak Pathak, and Jitendra Malik.
\newblock {RMA:} rapid motor adaptation for legged robots.
\newblock In Dylan~A. Shell, Marc Toussaint, and M.~Ani Hsieh (eds.),
  \emph{Robotics: Science and Systems XVII, Virtual Event, July 12-16, 2021},
  2021.
\newblock \doi{10.15607/RSS.2021.XVII.011}.

\bibitem[Lee et~al.(2019)Lee, Hwangbo, and
  Hutter]{lee2019robustrecoverycontrollerquadrupedal}
Joonho Lee, Jemin Hwangbo, and Marco Hutter.
\newblock Robust recovery controller for a quadrupedal robot using deep
  reinforcement learning.
\newblock \emph{CoRR}, abs/1901.07517, 2019.

\bibitem[Lee et~al.(2020)Lee, Hwangbo, Wellhausen, Koltun, and
  Hutter]{Lee_2020}
Joonho Lee, Jemin Hwangbo, Lorenz Wellhausen, Vladlen Koltun, and Marco Hutter.
\newblock Learning quadrupedal locomotion over challenging terrain.
\newblock \emph{Science Robotics}, 5, 2020.

\bibitem[Liu et~al.(2024)Liu, Chen, Cheng, Ji, Yang, and Wang]{Liu2024VisualWC}
Minghuan Liu, Zixuan Chen, Xuxin Cheng, Yandong Ji, Ruihan Yang, and Xiaolong
  Wang.
\newblock Visual whole-body control for legged loco-manipulation.
\newblock In \emph{Conference on Robot Learning (CoRL)}, 2024.

\bibitem[Munos et~al.(2016)Munos, Stepleton, Harutyunyan, and
  Bellemare]{munos2016retrace}
R{\'e}mi Munos, Tom Stepleton, Anna Harutyunyan, and Marc~G. Bellemare.
\newblock Safe and efficient off-policy reinforcement learning.
\newblock In \emph{Advances in Neural Information Processing Systems
  (NeurIPS)}, 2016.

\bibitem[Nair et~al.(2018)Nair, McGrew, Andrychowicz, Zaremba, and
  Abbeel]{nair2018demonstrations}
Ashvin Nair, Bob McGrew, Marcin Andrychowicz, Wojciech Zaremba, and Pieter
  Abbeel.
\newblock Overcoming exploration in reinforcement learning with demonstrations.
\newblock In \emph{2018 IEEE International Conference on Robotics and
  Automation (ICRA)}, pp.\  6292--6299, 2018.
\newblock \doi{10.1109/ICRA.2018.8463162}.

\bibitem[Narvekar et~al.(2020)Narvekar, Peng, Leonetti, Sinapov, Taylor, and
  Stone]{JMLR:v21:20-212}
Sanmit Narvekar, Bei Peng, Matteo Leonetti, Jivko Sinapov, Matthew~E. Taylor,
  and Peter Stone.
\newblock Curriculum learning for reinforcement learning domains: A framework
  and survey.
\newblock \emph{Journal of Machine Learning Research}, 21\penalty0
  (181):\penalty0 1--50, 2020.
\newblock URL \url{http://jmlr.org/papers/v21/20-212.html}.

\bibitem[Oh et~al.(2018)Oh, Guo, Singh, and Lee]{oh2018sil}
Junhyuk Oh, Yijie Guo, Satinder Singh, and Honglak Lee.
\newblock Self-imitation learning.
\newblock In \emph{Proceedings of the 35th International Conference on Machine
  Learning (ICML)}, volume~80 of \emph{Proceedings of Machine Learning
  Research}, pp.\  3878--3887. PMLR, 2018.

\bibitem[Peng et~al.(2018)Peng, Andrychowicz, Zaremba, and Abbeel]{Peng_2018}
Xue~Bin Peng, Marcin Andrychowicz, Wojciech Zaremba, and Pieter Abbeel.
\newblock Sim-to-real transfer of robotic control with dynamics randomization.
\newblock In \emph{2018 IEEE International Conference on Robotics and
  Automation (ICRA)}, pp.\  3803--3810, 2018.
\newblock \doi{10.1109/ICRA.2018.8460528}.

\bibitem[Peng et~al.(2020)Peng, Coumans, Zhang, Lee, Tan, and
  Levine]{peng2020learningagileroboticlocomotion}
Xue~Bin Peng, Erwin Coumans, Tingnan Zhang, Tsang{-}Wei~Edward Lee, Jie Tan,
  and Sergey Levine.
\newblock Learning agile robotic locomotion skills by imitating animals.
\newblock In Marc Toussaint, Antonio Bicchi, and Tucker Hermans (eds.),
  \emph{Robotics: Science and Systems XVI, Virtual Event / Corvalis, Oregon,
  USA, July 12-16, 2020}, 2020.
\newblock \doi{10.15607/RSS.2020.XVI.064}.

\bibitem[Pua \& Khadiv(2024)Pua and Khadiv]{pua2024safe}
Xun Pua and Majid Khadiv.
\newblock Safe learning of locomotion skills from mpc.
\newblock In \emph{2024 IEEE-RAS 23rd International Conference on Humanoid
  Robots (Humanoids)}, pp.\  459--466, 2024.
\newblock \doi{10.1109/Humanoids58906.2024.10769799}.

\bibitem[Ray et~al.(2019)Ray, Achiam, and Amodei]{ray2019benchmarking}
Alex Ray, Joshua Achiam, and Dario Amodei.
\newblock Benchmarking safe exploration in deep reinforcement learning.
\newblock Technical report, OpenAI, 2019.
\newblock URL \url{https://cdn.openai.com/safexp-short.pdf}.

\bibitem[Ross et~al.(2011)Ross, Gordon, and Bagnell]{ross2011dagger}
St{\'e}phane Ross, Geoffrey~J. Gordon, and J.~Andrew Bagnell.
\newblock A reduction of imitation learning and structured prediction to
  no-regret online learning.
\newblock In \emph{Proceedings of the 14th International Conference on
  Artificial Intelligence and Statistics (AISTATS)}, pp.\  627--635, 2011.

\bibitem[Rudin et~al.(2021)Rudin, Hoeller, Reist, and
  Hutter]{rudin2022learningwalkminutesusing}
Nikita Rudin, David Hoeller, Philipp Reist, and Marco Hutter.
\newblock Learning to walk in minutes using massively parallel deep
  reinforcement learning.
\newblock In Aleksandra Faust, David Hsu, and Gerhard Neumann (eds.),
  \emph{Conference on Robot Learning, 8-11 November 2021, London, {UK}}, volume
  164 of \emph{Proceedings of Machine Learning Research}, pp.\  91--100.
  {PMLR}, 2021.

\bibitem[Saunders et~al.(2018)Saunders, Sastry, Stuhlm\"{u}ller, and
  Evans]{10.5555/3237383.3238074}
William Saunders, Girish Sastry, Andreas Stuhlm\"{u}ller, and Owain Evans.
\newblock Trial without error: Towards safe reinforcement learning via human
  intervention.
\newblock In \emph{Proceedings of the 17th International Conference on
  Autonomous Agents and MultiAgent Systems}, AAMAS '18, pp.\  2067–2069,
  Richland, SC, 2018. International Foundation for Autonomous Agents and
  Multiagent Systems.

\bibitem[Schulman et~al.(2016)Schulman, Moritz, Levine, Jordan, and
  Abbeel]{Schulman2015HighDimensionalCC}
John Schulman, Philipp Moritz, Sergey Levine, Michael~I. Jordan, and Pieter
  Abbeel.
\newblock High-dimensional continuous control using generalized advantage
  estimation.
\newblock In \emph{International Conference on Learning Representations
  (ICLR)}, 2016.

\bibitem[Schulman et~al.(2017)Schulman, Wolski, Dhariwal, Radford, and
  Klimov]{schulman2017proximalpolicyoptimizationalgorithms}
John Schulman, Filip Wolski, Prafulla Dhariwal, Alec Radford, and Oleg Klimov.
\newblock Proximal policy optimization algorithms.
\newblock \emph{CoRR}, abs/1707.06347, 2017.

\bibitem[Silver et~al.(2014)Silver, Lever, Heess, Degris, Wierstra, and
  Riedmiller]{silver2014dpg}
David Silver, Guy Lever, Nicolas Heess, Thomas Degris, Daan Wierstra, and
  Martin Riedmiller.
\newblock Deterministic policy gradient algorithms.
\newblock In \emph{Proceedings of the 31st International Conference on Machine
  Learning (ICML)}, 2014.

\bibitem[Smith et~al.(2023{\natexlab{a}})Smith, Kew, Li, Luu, Peng, Ha, Tan,
  and Levine]{smith2023learningadaptingagilelocomotion}
Laura~M. Smith, J.~Chase Kew, Tianyu Li, Linda Luu, Xue~Bin Peng, Sehoon Ha,
  Jie Tan, and Sergey Levine.
\newblock Learning and adapting agile locomotion skills by transferring
  experience.
\newblock In Kostas~E. Bekris, Kris Hauser, Sylvia~L. Herbert, and Jingjin Yu
  (eds.), \emph{Robotics: Science and Systems XIX, Daegu, Republic of Korea,
  July 10-14, 2023}, 2023{\natexlab{a}}.
\newblock \doi{10.15607/RSS.2023.XIX.051}.

\bibitem[Smith et~al.(2023{\natexlab{b}})Smith, Kostrikov, and
  Levine]{smith2022walkparklearningwalk}
Laura~M. Smith, Ilya Kostrikov, and Sergey Levine.
\newblock A walk in the park: Learning to walk in 20 minutes with model-free
  reinforcement learning.
\newblock In \emph{2023 IEEE International Conference on Robotics and
  Automation (ICRA)}, 2023{\natexlab{b}}.

\bibitem[Smith et~al.(2024)Smith, Cao, and
  Levine]{smith2023growlimitscontinuousimprovement}
Laura~M. Smith, Yunhao Cao, and Sergey Levine.
\newblock Grow your limits: Continuous improvement with real-world rl for
  robotic locomotion.
\newblock In \emph{2024 IEEE International Conference on Robotics and
  Automation (ICRA)}, pp.\  10829--10836, 2024.

\bibitem[Sootla et~al.(2022)Sootla, Cowen-Rivers, Jafferjee, Wang, Mguni, Wang,
  and Ammar]{sootla2022saute}
Aivar Sootla, Alexander~I Cowen-Rivers, Taher Jafferjee, Ziyan Wang, David~H
  Mguni, Jun Wang, and Haitham Ammar.
\newblock Saute {RL}: Almost surely safe reinforcement learning using state
  augmentation.
\newblock In Kamalika Chaudhuri, Stefanie Jegelka, Le~Song, Csaba Szepesvari,
  Gang Niu, and Sivan Sabato (eds.), \emph{Proceedings of the 39th
  International Conference on Machine Learning}, volume 162 of
  \emph{Proceedings of Machine Learning Research}, pp.\  20423--20443. PMLR,
  17--23 Jul 2022.
\newblock URL \url{https://proceedings.mlr.press/v162/sootla22a.html}.

\bibitem[Spencer et~al.(2020)Spencer, Choudhury, Barnes, Schmittle, Chiang,
  Ramadge, and Srinivasa]{spencer2020learning}
Jonathan Spencer, Sanjiban Choudhury, Matthew Barnes, Matthew Schmittle, Mung
  Chiang, Peter Ramadge, and Sidd Srinivasa.
\newblock Learning from interventions: Human-robot interaction as both explicit
  and implicit feedback.
\newblock In \emph{Proceedings of Robotics: Science and Systems (RSS)}, 2020.

\bibitem[Srinivasan et~al.(2020)Srinivasan, Eysenbach, Ha, Tan, and
  Finn]{srinivasan2020learningsafedeeprl}
Krishnan Srinivasan, Benjamin Eysenbach, Sehoon Ha, Jie Tan, and Chelsea Finn.
\newblock Learning to be safe: Deep {RL} with a safety critic.
\newblock \emph{CoRR}, abs/2010.14603, 2020.

\bibitem[Stooke et~al.(2020)Stooke, Achiam, and Abbeel]{stooke2020lagrangian}
Adam Stooke, Joshua Achiam, and Pieter Abbeel.
\newblock Responsive safety in reinforcement learning by {PID} {L}agrangian
  methods.
\newblock In \emph{Proceedings of the 37th International Conference on Machine
  Learning (ICML)}, 2020.

\bibitem[Sutton et~al.(1999)Sutton, Precup, and Singh]{sutton1999between}
Richard~S. Sutton, Doina Precup, and Satinder Singh.
\newblock Between {MDPs} and semi-{MDPs}: A framework for temporal abstraction
  in reinforcement learning.
\newblock \emph{Artificial Intelligence}, 112\penalty0 (1-2):\penalty0
  181--211, 1999.

\bibitem[Tan et~al.(2018)Tan, Zhang, Coumans, Iscen, Bai, Hafner, Bohez, and
  Vanhoucke]{tan2018simtoreallearningagilelocomotion}
Jie Tan, Tingnan Zhang, Erwin Coumans, Atil Iscen, Yunfei Bai, Danijar Hafner,
  Steven Bohez, and Vincent Vanhoucke.
\newblock Sim-to-real: Learning agile locomotion for quadruped robots.
\newblock In Hadas Kress{-}Gazit, Siddhartha~S. Srinivasa, Tom Howard, and
  Nikolay Atanasov (eds.), \emph{Robotics: Science and Systems XIV, Carnegie
  Mellon University, Pittsburgh, Pennsylvania, USA, June 26-30, 2018}, 2018.
\newblock \doi{10.15607/RSS.2018.XIV.010}.

\bibitem[Tessler et~al.(2018)Tessler, Mankowitz, and
  Mannor]{tessler2018rewardconstrainedpolicyoptimization}
Chen Tessler, Daniel~J. Mankowitz, and Shie Mannor.
\newblock Reward constrained policy optimization.
\newblock \emph{CoRR}, abs/1805.11074, 2018.

\bibitem[Thananjeyan et~al.(2021)Thananjeyan, Balakrishna, Nair, Luo,
  Srinivasan, Hwang, Gonzalez, Ibarz, Finn, and
  Goldberg]{thananjeyan2021recoveryrlsafereinforcement}
Brijen Thananjeyan, Ashwin Balakrishna, Suraj Nair, Michael Luo, Krishnan
  Srinivasan, Minho Hwang, Joseph~E. Gonzalez, Julian Ibarz, Chelsea Finn, and
  Ken Goldberg.
\newblock Recovery {RL}: Safe reinforcement learning with learned recovery
  zones.
\newblock \emph{IEEE Robotics and Automation Letters}, 6\penalty0 (3):\penalty0
  4915--4922, 2021.
\newblock \doi{10.1109/LRA.2021.3070252}.

\bibitem[Todorov et~al.(2012)Todorov, Erez, and Tassa]{todorov2012mujoco}
Emanuel Todorov, Tom Erez, and Yuval Tassa.
\newblock Mujoco: A physics engine for model-based control.
\newblock In \emph{2012 IEEE/RSJ International Conference on Intelligent Robots
  and Systems}, pp.\  5026--5033. IEEE, 2012.
\newblock \doi{10.1109/IROS.2012.6386109}.

\bibitem[Towers et~al.(2023)Towers, Terry, Kwiatkowski, Balis, Cola, Deleu,
  Goulão, Kallinteris, KG, Krimmel, Perez-Vicente, Pierré, Schulhoff, Tai,
  Shen, and Younis]{towers_gymnasium_2023}
Mark Towers, Jordan~K. Terry, Ariel Kwiatkowski, John~U. Balis, Gianluca~de
  Cola, Tristan Deleu, Manuel Goulão, Andreas Kallinteris, Arjun KG, Markus
  Krimmel, Rodrigo Perez-Vicente, Andrea Pierré, Sander Schulhoff, Jun~Jet
  Tai, Andrew Tan~Jin Shen, and Omar~G. Younis.
\newblock Gymnasium, March 2023.

\bibitem[Turchetta et~al.(2020)Turchetta, Kolobov, Shah, Krause, and
  Agarwal]{turchetta2020curriculum}
Matteo Turchetta, Andrey Kolobov, Shital Shah, Andreas Krause, and Alekh
  Agarwal.
\newblock Safe reinforcement learning via curriculum induction.
\newblock In \emph{Advances in Neural Information Processing Systems 33
  (NeurIPS)}, 2020.

\bibitem[Uchendu et~al.(2023)Uchendu, Xiao, Lu, Zhu, Yan, Simon, Bennice, Fu,
  Ma, Jiao, Levine, and Hausman]{uchendu2023jsrl}
Ikechukwu Uchendu, Ted Xiao, Yao Lu, Banghua Zhu, Mengyuan Yan, Joséphine
  Simon, Matthew Bennice, Chuyuan Fu, Cong Ma, Jiantao Jiao, Sergey Levine, and
  Karol Hausman.
\newblock Jump-start reinforcement learning.
\newblock In \emph{Proceedings of the 40th International Conference on Machine
  Learning (ICML)}, 2023.

\bibitem[Wachi et~al.(2024)Wachi, Shen, and Sui]{wachi2024constraint}
Akifumi Wachi, Xun Shen, and Yanan Sui.
\newblock A survey of constraint formulations in safe reinforcement learning.
\newblock In Kate Larson (ed.), \emph{Proceedings of the Thirty-Third
  International Joint Conference on Artificial Intelligence, {IJCAI-24}}, pp.\
  8262--8271. International Joint Conferences on Artificial Intelligence
  Organization, 8 2024.
\newblock \doi{10.24963/ijcai.2024/913}.
\newblock URL \url{https://doi.org/10.24963/ijcai.2024/913}.
\newblock Survey Track.

\bibitem[Wagener et~al.(2021)Wagener, Boots, and Cheng]{wagener2021sailr}
Nolan~C. Wagener, Byron Boots, and Ching-An Cheng.
\newblock Safe reinforcement learning using advantage-based intervention.
\newblock In \emph{Proceedings of the 38th International Conference on Machine
  Learning (ICML)}, volume 139 of \emph{Proceedings of Machine Learning
  Research}, pp.\  10630--10640. PMLR, 2021.

\bibitem[Yang et~al.(2022)Yang, Zhang, Luu, Ha, Tan, and
  Yu]{yang2022safereinforcementlearninglegged}
Tsung-Yen Yang, Tingnan Zhang, Linda Luu, Sehoon Ha, Jie Tan, and Wenhao Yu.
\newblock Safe reinforcement learning for legged locomotion.
\newblock In \emph{2022 IEEE/RSJ International Conference on Intelligent Robots
  and Systems (IROS)}, pp.\  2454--2461, 2022.
\newblock \doi{10.1109/IROS47612.2022.9982038}.

\bibitem[Zakka et~al.(2022)Zakka, Tassa, and {MuJoCo Menagerie
  Contributors}]{menagerie2022mujoco}
Kevin Zakka, Yuval Tassa, and {MuJoCo Menagerie Contributors}.
\newblock {MuJoCo} {Menagerie}: A collection of high-quality models for the
  {MuJoCo} physics engine, 2022.
\newblock URL \url{http://github.com/google-deepmind/mujoco_menagerie}.

\bibitem[Zhao et~al.(2023)Zhao, He, Chen, Wei, and Liu]{zhao2023statewise}
Weiye Zhao, Tairan He, Rui Chen, Tianhao Wei, and Changliu Liu.
\newblock State-wise safe reinforcement learning: A survey.
\newblock In Edith Elkind (ed.), \emph{Proceedings of the Thirty-Second
  International Joint Conference on Artificial Intelligence, {IJCAI-23}}, pp.\
  6814--6822. International Joint Conferences on Artificial Intelligence
  Organization, 8 2023.
\newblock \doi{10.24963/ijcai.2023/763}.
\newblock URL \url{https://doi.org/10.24963/ijcai.2023/763}.
\newblock Survey Track.

\end{thebibliography}
\bibliographystyle{tmlr}

\appendix
\section{Full Proofs}
\label{app:proofs}

\subsection{Proof of \texorpdfstring{\Cref{thm:gradient}}{Theorem 1} (unbiased gradient of \texorpdfstring{$J^{\mathrm{mix}}(\theta)$}{J})}
\label{app:gradient-proof}

\begin{proof}
Starting from the definition,
\(
J^{\mathrm{mix}}(\theta) = \int p_\theta^{\mathrm{mix}}\, r(\tau)\, d\tau.
\)
Under \Cref{asm:regularity}, differentiation and integration commute:
\(
\nabla_\theta J^{\mathrm{mix}}(\theta) = \int \nabla_\theta p_\theta^{\mathrm{mix}}\, r(\tau)\, d\tau.
\)
Factor $p_\theta^{\mathrm{mix}}$ explicitly by partition membership of the visited states:
\begin{equation}\label{eq:thm1-factored}
p_\theta^{\mathrm{mix}}
= \underbrace{p(s_1)}_{\text{no }\theta}
\cdot \underbrace{\prod_{t : s_t \in \mathcal{M}} \piMain(a_t \mid s_t)}_{\text{only $\theta$-dependent factor}}
\cdot \underbrace{\prod_{t : s_t \notin \mathcal{M}} \mu(a_t \mid s_t)}_{\text{no }\theta}
\cdot \underbrace{\prod_{t=1}^T P(s_{t+1} \mid s_t, a_t)}_{\text{no }\theta},
\end{equation}
where $\mu(\cdot|s)$ denotes the (possibly Dirac) $\theta$-independent action measure at $s \notin \mathcal{M}$. Equation \eqref{eq:thm1-factored} is an equality of measures on trajectory space and does \emph{not} require $\mu$ to have a Lebesgue density. Deterministic, stochastic, point-mass at a hand-coded action, and pre-trained-policy measures are all covered. Only the $\mathcal{M}$-step factor depends on $\theta$. By the product rule and the score-function identity applied factor-wise,
\begin{equation}
\nabla_\theta p_\theta^{\mathrm{mix}} = p_\theta^{\mathrm{mix}}\, \sum_{t : s_t \in \mathcal{M}} \nabla_\theta \log \piMain(a_t \mid s_t).
\end{equation}
No gradient of $\mu$ appears, because $\mu$ carries no $\theta$-dependence; at no point do we evaluate $\log$ of a Dirac measure. Substituting back,
\[
\nabla_\theta J^{\mathrm{mix}}(\theta)
= \Expect_{\tau \sim p_\theta^{\mathrm{mix}}}\!\left[ \Big(\sum_{t : s_t \in \mathcal{M}} \nabla_\theta \log \piMain(a_t\mid s_t)\Big)\, r(\tau) \right].
\qedhere
\]
\end{proof}

\noindent \textbf{Corollaries (\Cref{cor:safe-region-gradient,cor:jsrl-gradient,cor:shielded-gradient}).}
The proof never used the specific form of $\mu$, only that $\mu$ carries no $\theta$-dependence, so each corollary follows by simply instantiating the pair $(\mathcal{M}, \mu)$. The safe-region intervention case ($\mathcal{M} = \safeR$, $\mu = \piRec$) is the body of this paper. The Jump-Start \ac{RL} case ($\mathcal{M} = \{(s,t) : t \ge h\}$ for a per-trajectory handoff step $h$, $\mu = \pi^{\mathrm{teach}}$ for a fixed teacher) inherits the same factorization: the teacher is $\theta$-independent, the handoff step is data-dependent but not $\theta$-dependent, and the score function evaluates only on the student-controlled tail of each trajectory. The state-triggered shielded \ac{RL} case ($\mathcal{M} = \{s : \Sigma \text{ does not engage at } s\}$, $\mu(a\mid s) = \delta(a - \sigma(s))$ for the shield's corrective action $\sigma$) inherits the same factorization with $\mu$ a Dirac measure that the proof's factor-level differentiation handles directly. The common payoff is the same in all three: the \ac{IS} fix is ill-defined when $\mu$ has no density at the relevant states, whereas the factor-level proof never forms the ratio and so bypasses the need for a density on $\mu$. We state the two non-safe-region specializations formally for reference.

\begin{corollary}[Jump-Start \ac{RL} \citep{uchendu2023jsrl}]\label{cor:jsrl-gradient}
Let $\pi^{\mathrm{teach}}$ be a fixed teacher policy and $h$ a per-trajectory handoff step. Take $\mathcal{M} = \{(s, t) : t \ge h\}$ (states reached after handoff; the timestep $t$ is absorbed into the state, standard for finite-horizon MDPs, so $\mathcal{M}$ is a subset of the augmented state space) and $\mu = \pi^{\mathrm{teach}}$. Then \Cref{thm:gradient} gives an unbiased gradient of the student's contribution to the mixed return, where the score function is evaluated only on the student-controlled tail of each trajectory.
\end{corollary}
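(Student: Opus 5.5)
The corollary follows by instantiating \Cref{thm:gradient} on a time-augmented state space, so the plan is to put the JSRL setup into the partition-policy form of \cref{eq:partition-policy} and then check \Cref{asm:regularity}. First, build the augmented MDP with states $\tilde s_t = (s_t, t)$ and transitions $\tilde P((s',t+1)\mid (s,t),a) = P(s'\mid s,a)$; this is standard for finite horizon and leaves the trajectory law and return $r(\tau)$ unchanged. Then set $\mathcal{M} = \{(s,t) : t \ge h\}$ and $\mu(\cdot\mid (s,t)) = \pi^{\mathrm{teach}}(\cdot\mid s)$. With these choices the JSRL rollout, teacher before $h$ and student from $h$ on, is exactly the partition policy $\pi_\theta^{\mathrm{mix}}$ on the augmented space.

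The one point that needs care is that $h$ is per-trajectory. I would handle it in one of two ways. If $h$ is drawn from a fixed curriculum distribution $q(h)$ with no $\theta$-dependence, append $h$ to the initial state, so $\tilde s_1 = (s_1, 1, h)$, and carry it through the transitions unchanged. Then $\mathcal{M}$ is a fixed subset of the augmented space, and the initial law $p(s_1)q(h)$ is $\theta$-free. If instead $h$ is a stopping time determined by the teacher-phase history, membership in $\mathcal{M}$ is a function of the past only, and that past is generated by $\mu$ and $P$ alone. In both cases the argument of \cref{eq:thm1-factored} goes through verbatim: the trajectory measure factors into a $\theta$-free initial law, $\theta$-free teacher factors, $\theta$-free transitions, and the product of $\piMain$ over the student-controlled steps. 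Only that last product depends on $\theta$.

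Next, I would check \Cref{asm:regularity}. (R1) concerns only student steps, where $\piMain$ is the usual Gaussian. (R2) holds because $T<\infty$. (R3) is inherited from the same dominated-convergence argument used for the base theorem. Applying \Cref{thm:gradient} then gives
\(
\nabla_\theta J^{\mathrm{mix}}(\theta) = \Expect\big[\big(\sum_{t \ge h} \nabla_\theta \log \piMain(a_t\mid s_t)\big)\, r(\tau)\big],
\)
so the score function is evaluated only on the student-controlled tail.

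I would also justify the phrase ``student's contribution''. The prefix rewards $r_1,\dots,r_{h-1}$ are $\theta$-independent given the history, so they contribute zero to the gradient by the usual causality/baseline argument. They may therefore be dropped, leaving the tail return. The main obstacle is making precise that a history-dependent handoff does not secretly reintroduce $\theta$. The stopping-time observation above handles this: before $h$, no $\theta$-dependent factor has yet acted.
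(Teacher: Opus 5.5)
Your proposal is correct and takes the same route as the paper. The paper also proves \Cref{cor:jsrl-gradient} by instantiating \Cref{thm:gradient} with $\mathcal{M}=\{(s,t): t\ge h\}$ and $\mu=\pi^{\mathrm{teach}}$, arguing that the teacher is $\theta$-independent and the handoff is data-dependent but not $\theta$-dependent, so the factorization \eqref{eq:thm1-factored} holds with the score evaluated only on the student tail. Your additional steps fill in details the paper leaves implicit rather than changing the argument: carrying $h$ in the augmented initial state or treating it as a teacher-phase stopping time, checking (R1)--(R3), and the causality argument that drops the prefix rewards to justify ``student's contribution.''
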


\begin{corollary}[State-triggered shielded \ac{RL}]\label{cor:shielded-gradient}
Let $\Sigma$ be a \emph{state-triggered} shield, whose engagement is decided by the state alone and which, at unsafe states, replaces the proposed action with a $\theta$-independent corrective action $\sigma(s)$. Take $\mathcal{M} = \{s : \Sigma\text{ does not engage at }s\}$ and $\mu(a\mid s) = \delta(a - \sigma(s))$. Then \Cref{thm:gradient} gives an unbiased gradient evaluated only at unshielded states, with no density required on the shield's correction.
\end{corollary}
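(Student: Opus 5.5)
The plan is to reduce \Cref{cor:shielded-gradient} directly to \Cref{thm:gradient} by exhibiting the shielded rollout process as a partition policy of the form \cref{eq:partition-policy}. Then I would check that \Cref{asm:regularity} holds for it.

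\textbf{Step 1: the shielded policy is a partition policy.} Because the shield is state-triggered, whether it engages at step $t$ depends only on $s_t$. So $\mathcal{M} = \{s : \Sigma \text{ does not engage at } s\}$ is a well-defined, $\theta$-independent subset of $\mathcal{S}$. On $\mathcal{M}$ the executed action is drawn from $\piMain(\cdot\mid s)$. Off $\mathcal{M}$ the proposed action is discarded and $\sigma(s)$ is executed, so the executed-action law there is the Dirac measure $\mu(\cdot\mid s) = \delta(\cdot - \sigma(s))$. Since $\sigma$ is $\theta$-independent by hypothesis, so is $\mu$.

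\textbf{Step 2: the proposed action is irrelevant.} At shielded steps the main policy may still sample a proposed action that is then overwritten. That sample has no influence on $s_{t+1}$ or the reward, so it is marginalized out. The trajectory law over executed actions is therefore exactly $p_\theta^{\mathrm{mix}}$ as factored in \cref{eq:thm1-factored}.

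\textbf{Step 3: regularity and conclusion.} (R1) only concerns $s\in\mathcal{M}$, where the executed action comes from $\piMain$, so positivity and differentiability transfer unchanged. (R2) and (R3) are assumptions on rewards and the horizon, not on the controller. Applying \Cref{thm:gradient}, the score sum runs only over unshielded steps. The proof of \Cref{thm:gradient} treats $\mu$ as a measure factor whose gradient vanishes, so the Dirac form never requires a density.

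\textbf{Main obstacle.} The delicate point is Step 2 together with the state-triggered hypothesis. If engagement depended on the proposed action (an action-triggered shield), $\mathcal{M}$ would not be a state set, the executed-action law would mix $\piMain$ and $\sigma$ in a $\theta$-dependent way, and the factorization would fail. I would therefore state explicitly that the state-triggered assumption is what makes $\mathcal{M}$ and $\mu$ $\theta$-free. I would also verify that the measure-level identity $\nabla_\theta p_\theta^{\mathrm{mix}} = p_\theta^{\mathrm{mix}}\sum_{t:s_t\in\mathcal{M}}\nabla_\theta\log\piMain$ remains meaningful when $p_\theta^{\mathrm{mix}}$ is taken relative to a reference measure that includes Dirac components.
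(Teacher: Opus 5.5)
Your proposal is correct and follows the paper's argument. The paper likewise just instantiates $(\mathcal{M},\mu)$ in Theorem~\ref{thm:gradient}, notes that the only property used is that the Dirac factor $\mu$ carries no $\theta$-dependence (so the factor-level differentiation never takes the log of a point mass), and contrasts this with action-triggered shields exactly as you do. Your extra care about marginalizing the discarded proposed action, and about taking densities against a $\theta$-independent reference measure with a point mass at $\sigma(s)$ off $\mathcal{M}$, is sound and fills in what the paper leaves implicit. One small correction: (R3) is not a condition on rewards and horizon alone, since it involves the mixed trajectory measure, but the corollary simply inherits it as a hypothesis of the theorem.
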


Action-triggered shields, as in \citet{alshiekh2018safe}, decide the override from the proposed action $a \sim \piMain(\cdot \mid s)$: at states where only some actions are unsafe, the executed-action distribution mixes $\piMain$ restricted to safe actions with the correction weighted by the $\theta$-dependent probability of proposing an unsafe action, so it depends on $\theta$ and falls outside \Cref{thm:gradient}'s hypothesis of a $\theta$-independent $\mu$.

\noindent \textbf{Truncated importance sampling.} One might hope that the singularity is only a problem for the naive \ac{IS} estimator and that a more careful clipped estimator escapes it. It does not. Truncated-\ac{IS} estimators such as V-trace \citep{espeholt2018impala} and Retrace \citep{munos2016retrace} reach the mixed-policy gradient by clipping the importance ratio $p_\theta/p_\theta^{\mathrm{mix}}$, but that ratio is still undefined at the deterministic-recovery steps where $\mu$ is a point mass. Clipping bounds variance downstream of the singularity rather than removing it. The factor-level proof above avoids the ratio entirely, which is exactly why it extends to the deterministic recovery policy our method uses.

\subsection{Proof of \texorpdfstring{\Cref{thm:convergence}}{the objective-gap bound} (objective-gap bound)}
\label{app:convergence-proof}

This subsection and the remainder of the appendix work in the infinite-horizon discounted convention fixed in \Cref{sec:background}, under which $\Vmix$, $\Qmix$, $\Amix$, and the normalized discounted visitations $\dpiMain$, $\dpiMixed$ are stationary.

\begin{proof}
\Cref{thm:gradient} shows we optimize $J^{\mathrm{mix}}$ rather than the deployment objective $J$, so the question is how far apart the two can be. We bound the gap by routing it through the states where the two policies actually differ. By the Performance Difference Lemma \citep{kakade2002approximately},
\begin{equation}\label{eq:pdl}
J(\theta) - J^{\mathrm{mix}}(\theta) = \frac{1}{1-\gamma}\, \Expect_{s \sim \dpiMain,\, a \sim \piMain(\cdot|s)}\!\left[ \Amix(s, a) \right],
\end{equation}
where $\Amix(s,a) = \Qmix(s,a) - \Vmix(s)$. Split the expectation by the safe region:
\[
J(\theta) - J^{\mathrm{mix}}(\theta) = \frac{1}{1-\gamma}\!\left( \underbrace{\Expect_{s \in \safeR}[\cdot]}_{=\,0} + \Expect_{s \notin \safeR}[\cdot] \right).
\]
On $s \in \safeR$, both $\piMain$ and $\piMixed$ sample from $\piMain$, so the inner expectation $\Expect_{a \sim \piMain(\cdot|s)}[\Amix(s,a)] = 0$ by definition of $\Vmix$. On $s \notin \safeR$, $|\Amix(s,a)| \le |\Qmix(s,a)| + |\Vmix(s)| \le 2\rmax/(1-\gamma)$ since both $|Q|$ and $|V|$ are bounded by $\rmax/(1-\gamma)$. Therefore
\[
|J(\theta) - J^{\mathrm{mix}}(\theta)| \le \frac{1}{1-\gamma} \cdot \frac{2\rmax}{1-\gamma} \cdot \Pr_{s \sim \dpiMain}[s \notin \safeR] = \frac{2\rmax}{(1-\gamma)^2}\, \betaRate. \qedhere
\]
\end{proof}

\noindent \textbf{Bound on the out-of-region rate $\betaRate$ (\Cref{sec:convergence}).} The gap above scales with $\betaRate$, so the bound is only useful once $\betaRate$ itself is controlled. \Cref{sec:convergence} uses $\betaRate \le \eta\gamma/(1-\gamma) \le \eta/(1-\gamma)$, which we now derive from the per-step invariance slack, under the additional hypothesis that episodes start inside the safe region, $\operatorname{supp}\, p(s_1) \subseteq \safeR(d)$ (otherwise the initial step alone contributes $(1-\gamma)\Pr[s_1 \notin \safeR]$ to $\betaRate$ even with $\eta = 0$). All three of our environments satisfy the hypothesis: episodes reset to an upright pose inside the region. Starting from a reachable initial state $s_0 \in \safeR$ (this derivation counts steps from zero, so $s_0$ here is the $s_1$ of the trajectory notation), \Cref{asm:invariance} gives $\Pr[s_0, \dots, s_t \in \safeR] \ge (1-\eta)^t$. Since $\{s_t \in \safeR\} \supseteq \{s_0, \dots, s_t \in \safeR\}$, the marginal in-region probability dominates the all-stay probability, $\Pr[s_t \in \safeR] \ge (1-\eta)^t$, hence $\Pr[s_t \notin \safeR] \le 1 - (1-\eta)^t$. With the discounted state-visitation $\betaRate = (1-\gamma)\sum_{t\ge 0} \gamma^t \Pr[s_t \notin \safeR]$,
\[
\betaRate \le (1-\gamma)\sum_{t\ge 0} \gamma^t \big(1 - (1-\eta)^t\big) = \frac{\eta\gamma}{1 - \gamma(1-\eta)} \le \frac{\eta\gamma}{1-\gamma} \le \frac{\eta}{1-\gamma},
\]
the last step using $1-\gamma(1-\eta) \ge 1-\gamma$ and $\gamma \le 1$.

\noindent \textbf{Train-to-deploy reward gap (footnote in \Cref{sec:value}).} The training-signal convention of \Cref{prop:analytic} (zero per-step reward during recovery, one-time terminal reward $r_{\mathrm{term}}$) and the unmodified task reward differ only at states outside $\safeR$ (and through the one-time $r_{\mathrm{term}}$). Bounding the per-step discrepancy by $\rmax + |r_{\mathrm{term}}|$ on those steps and summing the discounted visitation under $\piMain$, the returns the two conventions induce under $\piMain$ differ by at most $(\rmax + |r_{\mathrm{term}}|)\,\betaRate/(1-\gamma)$, so the train-to-deploy chain is controlled by the same $\betaRate$ as \Cref{thm:convergence}.

\subsection{Proof of \texorpdfstring{\Cref{cor:fixed-point}}{Corollary} (conditional fixed point)}
\label{app:fixedpoint-proof}

\begin{proof}
The gap bound of \Cref{thm:convergence} leaves $\betaRate$ free, so the natural endpoint of the analysis is the case that drives $\betaRate$ to zero. The corollary splits into two readings of when that happens.

\textbf{Reading (R-A):} If $\safeR(\dmax)$ covers the reachable state space, then $\Pr[s \notin \safeR(\dmax)] = 0$ under \emph{any} action distribution, including $\piMain$ alone. Hence $\beta(\theta, \dmax) = 0$, and \cref{eq:convergence} gives $J^{\mathrm{mix}}(\theta) = J(\theta)$ exactly. Furthermore, $\piMixed$ is identical to $\piMain$ on the reachable support of $\pTraj$ (recovery is never engaged).

\textbf{Reading (R-B):} If $\safeR(\dmax)$ is strictly contained in the reachable state space, the theory no longer forces $\piMain$ to remain in $\safeR$, so its invariance slack $\eta_\star := \eta(\theta,\dmax) \ge 0$ (\Cref{asm:invariance}) need not be zero. Substituting $\beta \le \eta_\star/(1-\gamma)$ (derived above under the initial-state hypothesis $\operatorname{supp}\, p(s_1) \subseteq \safeR$) into \cref{eq:convergence} gives the upper bound $|J(\theta)-J^{\mathrm{mix}}(\theta)| \le 2\rmax \eta_\star / (1-\gamma)^3$, which, unlike (R-A), is not pinned to zero. Whether $\eta_\star$ (and hence the gap) is small is environment-dependent: \Cref{app:thm2-check} finds $\eta_\star \to 0$ on \envHC and \envAnt, where the gap closes to seed noise, and $\eta_\star > 0$ on \envGo, where a proportional residual remains.
\end{proof}

\subsection{Proof of \texorpdfstring{\Cref{prop:analytic}}{the analytic recovery value} (analytic recovery value)}
\label{app:analytic-proof}

\begin{proof}
The gradient and gap results above treat $\Vmix$ as given; what remains is to compute it cheaply on the recovery segments themselves, where the trajectory is no longer under the agent's control. By the definition of $\Vmix$,
\(
\Vmix(s_t) = \Expect_{\tau \sim \pTrajMixed | s_t}\!\left[ \sum_{j=0}^{\infty} \gamma^j r_{t+j} \right].
\)
Under deterministic dynamics and deterministic recovery, the trajectory starting from a recovery-triggering state $s_t$ is fully determined for the duration of the recovery segment $[t, t+k]$. The expectation collapses. Two cases:
\begin{description}
\item[Recovery success] ($s_{t+k} \in \safeR$): the realized return over the segment is $G_{t,k} = \sum_{j=0}^{k-1}\gamma^j r_{t+j}$. From $s_{t+k}$ the agent resumes under $\piMixed$ with value $\Vmix(s_{t+k})$. Hence $\Vmix(s_t) = G_{t,k} + \gamma^k \Vmix(s_{t+k})$.
\item[Recovery failure] (terminal): the realized return is $G_{t,k}$ over the $k$ deterministic steps, the episode ends, no further reward accrues. Hence $\Vmix(s_t) = G_{t,k}$.
\end{description}
Combining gives \cref{eq:analytic-V}. Our implementation instantiates this general statement under a chosen reward: zero per-step reward during recovery and a one-time terminal reward $r_{\mathrm{term}}$ accrued at the final segment step ($t{+}k{-}1$) on failure. The terminal penalty is thus the only nonzero contribution to the segment return $G_{t,k} = \sum_{j=0}^{k-1}\gamma^j r_{t+j}$, which is therefore $0$ on success and $\gamma^{k-1} r_{\mathrm{term}}$ on failure, matching \cref{eq:analytic-V}.
\end{proof}

\subsection{Analytic value under stochastic dynamics}
\label{app:analytic-stochastic}

The closed form just derived rests on the determinism assumption, so it is worth asking what survives when that assumption is relaxed. \Cref{prop:analytic} assumes deterministic $P$ and $\piRec$. Under process noise or stochastic recovery, the realized $\gamma^k V_\theta(s_{t+k})$ is no longer equal to $\Vmix(s_t)$. It becomes a single Monte-Carlo realization drawn from the distribution over post-trigger trajectories, whose variance grows with the noise of $P$ and $\piRec$ accumulated over the segment. Two natural extensions preserve the spirit of bypassing critic bootstrapping across recovery. (i) A multi-sample MC target: re-roll the recovery $M$ times from $s_t$ (where the simulator supports state-resets) and average $G_{t,k_m} + \gamma^{k_m} V_\theta(s_{t+k_m})$ across the $M$ realizations. This is an unbiased estimator of $\Vmix(s_t)$ whose variance falls as $1/M$. (ii) The single-sample plug-in we already use is an unbiased one-sample MC estimate of $\Vmix(s_t)$. Its bias is zero in expectation, but its single-realization noise reaches downstream \ac{GAE}. The bias-variance trade-off is favorable in our setting because $\piRec$ is approximately deterministic in practice (\ac{MPC}, greedy \ac{SAC}), so the per-segment variance is small. A formal analysis under controlled stochasticity, and the regime where (i) is worth the extra simulator calls, is left to follow-up.

\subsection{Proof of \texorpdfstring{\Cref{prop:compat-special-cases}}{Proposition (compatibility regularizer specializations)}}
\label{app:compat-proof}

The recovery segments that supplied the analytic value also supply state-action pairs for the compatibility regularizer, and identifying its limiting cases shows that this loss is not an ad-hoc addition but a familiar imitation objective in disguise.

\begin{proposition}[Special cases of the compatibility regularizer]\label{prop:compat-special-cases}
Let $\piRec$ be deterministic, so $a_t^{\mathrm{rec}} = \piRec(s_t)$ at recovery-controlled states, and consider the gate-set choice $\sigma_k \equiv 1$ for all segments.
\begin{enumerate}[label=(\alph*),leftmargin=2em,nosep]
\item \textbf{Behavioral cloning on the recovery dataset.} Under $\sigma_k \equiv 1$,
\begin{equation}\label{eq:compat-bc-limit}
\compatLoss_{\sigma \equiv 1}(\theta) = -\frac{\lambdaCompat}{N_{\mathrm{rec}}} \sum_{t : s_t \notin \safeR} \log \piMain\!\left(\piRec(s_t) \mid s_t\right),
\end{equation}
the (scaled) negative log-likelihood of $\piMain$ on the dataset $\mathcal{B} = \{(s_t, \piRec(s_t)) : s_t \notin \safeR\}$ of recovery-controlled state-action pairs collected from rollouts of $\piMixed$. Equivalently, $\compatLoss_{\sigma \equiv 1}$ is the per-step behavioral-cloning objective applied to $\piRec$'s actions at the states $\piMixed$ visits during recovery.
\item \textbf{A region-restricted \ac{DAgger} aggregation step.} The dataset $\mathcal{B}$ is collected as in one iteration of \ac{DAgger} \citep{ross2011dagger} with the teacher queried only on the intervention set: states are visited under the mixed roll-out policy $\piMixed$ (predominantly student-induced up to the safe-region boundary), and labels are the controller $\piRec$'s actions at the recovery-controlled states only, whereas \ac{DAgger} labels every visited state. Minimizing $\compatLoss_{\sigma \equiv 1}$ as the per-iteration training step then reproduces a region-restricted analogue of \ac{DAgger}'s update with $\piRec$ in the teacher role and $\piMain$ as the student.
\item \textbf{Outcome gate as a designer filter.} The general gate $\sigma_k \in \{0, 1\}$ restricts $\mathcal{B}$ to recovery segments that re-entered $\safeR$, discarding the rest. \ac{JSRL}-style mixed-policy data collection \citep{uchendu2023jsrl} produces the same mixed-policy roll-out distribution but does not include an imitation loss. $\compatLoss$ with $\sigma_k \equiv 1$ supplies the analogous student-imitation step that the \ac{JSRL} algorithm itself omits.
What is new in the case $\sigma_k \not\equiv 1$ is gating imitation of an \emph{external recovery controller} by the \emph{realized success of its multi-step segment}, as opposed to gating by value estimates of one's own past actions \citep{oh2018sil} or of demonstrations \citep{nair2018demonstrations}.
\end{enumerate}
\end{proposition}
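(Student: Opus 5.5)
The proof is essentially definitional: I would substitute the gate choice into \cref{eq:compat-loss} and \cref{eq:hard-gate}, then read each clause off the resulting expression.

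\textbf{Part (a).} With $\sigma_k \equiv 1$, the hard gate gives $w_t^{\mathrm{hard}} = \sigma_{\seg(t)} = 1$ at every unsafe step $t$, so every weight in \cref{eq:compat-loss} equals one. Since $\piRec$ is deterministic, the executed recovery action is $a_t^{\mathrm{rec}} = \piRec(s_t)$. Substituting both gives \cref{eq:compat-bc-limit} term by term.

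It remains to identify this as behavioral cloning. Define the multiset $\mathcal{B} = \{(s_t,\piRec(s_t)) : s_t \notin \safeR\}$ collected from $\piMixed$ rollouts. Then $N_{\mathrm{rec}}$ is exactly $|\mathcal{B}|$ restricted to the minibatch, because it counts recovery-controlled steps. Hence $\compatLoss_{\sigma\equiv1} = \lambdaCompat \cdot \widehat{\Expect}_{(s,a)\in\mathcal{B}}[-\log\piMain(a\mid s)]$, which is the standard maximum-likelihood BC objective scaled by $\lambdaCompat$.

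One subtlety needs checking: whether $\log\piMain(\piRec(s_t)\mid s_t)$ is finite. For the diagonal-Gaussian $\piMain$ with bounded log-std assumed after \Cref{asm:regularity}, the density is positive everywhere, so the loss is well-defined. Unlike importance sampling, it never requires a density for $\piRec$.

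\textbf{Part (b).} I would lay out one DAgger iteration schematically: roll out the current student-induced policy, query the teacher at the visited states, aggregate the labels, and minimize the NLL. I would then match each piece to our setting.
\begin{itemize}
\item The roll-out policy is $\piMixed$. It equals $\piMain$ on $\safeR$, so visited states are student-induced up to the first boundary crossing.
\item The teacher is $\piRec$, and its labels are attached only on $\mathcal{S}\setminus\safeR$.
\item The training step is minimization of \cref{eq:compat-bc-limit}, by part (a).
\end{itemize}
The only deviation from DAgger is the restricted labeling set. That is precisely the qualifier ``region-restricted'', so the claim is an analogy made precise by this correspondence rather than a theorem requiring new estimates.

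\textbf{Part (c).} For general $\sigma_k\in\{0,1\}$, the weight $w_t=\sigma_{\seg(t)}$ zeroes exactly the terms belonging to failed segments. The loss is therefore part (a)'s objective on the sub-dataset $\mathcal{B}_{\mathrm{succ}} = \{(s_t,\piRec(s_t)) \in \mathcal{B} : \sigma_{\seg(t)}=1\}$. The normalizer still counts all recovery steps, so this is a rescaled version, which does not change the minimizer direction.

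For the JSRL comparison, I would invoke \Cref{cor:jsrl-gradient}: JSRL's data distribution is a partition-policy distribution of the same form, and its update is purely the policy-gradient term with no likelihood term on teacher actions. So $\compatLoss$ with $\sigma\equiv1$ is the added imitation step.

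The ``what is new'' sentence is a positioning claim, not a mathematical one. I would treat it as remarks contrasting the gate's signal (realized segment re-entry) with value-based gates (SIL, Q-filter).

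\textbf{Main obstacle.} The main difficulty is making (b) precise without overclaiming. I would state explicitly that the equivalence holds at the level of a single aggregation-and-fit step on the intervention set, and make no claim about DAgger's no-regret guarantees.
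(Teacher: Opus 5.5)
Your proposal is correct and follows the paper's proof: you substitute $\sigma_k \equiv 1$ and $a_t^{\mathrm{rec}} = \piRec(s_t)$ into the compatibility loss to get the scaled negative log-likelihood on $\mathcal{B}$, match the $\piMixed$ rollouts with recovery-only labels to a DAgger iteration whose teacher is queried only on the intervention set, and read the general gate as restricting the loss to $\mathcal{B}_{\mathrm{succ}}$, treating the JSRL and novelty clauses as positioning remarks. You also check two points the paper leaves implicit, and both checks are right: $\log \piMain(\piRec(s_t) \mid s_t)$ is finite because the Gaussian has full support, and $N_{\mathrm{rec}}$ still counts all recovery steps, so the gated loss is a positive rescaling of the NLL on $\mathcal{B}_{\mathrm{succ}}$ with the same minimizer.
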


\begin{proof}
\textbf{(a)} From \cref{eq:compat-loss} with $w_t = \sigma_{\seg(t)}$, where $\seg(t)$ is the index of the recovery segment containing $t$ (\Cref{sec:compat}), and the assumption $\sigma_k \equiv 1$,
\begin{equation*}
\compatLoss_{\sigma \equiv 1}(\theta)
= -\frac{\lambdaCompat}{N_{\mathrm{rec}}} \sum_{t : s_t \notin \safeR} 1 \cdot \log \piMain(a^{\mathrm{rec}}_t \mid s_t)
= -\frac{\lambdaCompat}{N_{\mathrm{rec}}} \sum_{t : s_t \notin \safeR} \log \piMain\!\left(\piRec(s_t) \mid s_t\right),
\end{equation*}
using $a_t^{\mathrm{rec}} = \piRec(s_t)$ for deterministic $\piRec$. This is, up to the constant scale $\lambdaCompat / N_{\mathrm{rec}}$, the empirical negative log-likelihood of $\piMain$ on the dataset $\mathcal{B} = \{(s_t, \piRec(s_t)) : s_t \notin \safeR\}$, the (population-level) maximum-likelihood / behavioral-cloning objective for $\piMain$ on $\mathcal{B}$.

\textbf{(b)} The dataset $\mathcal{B}$ is collected by rolling out $\piMixed$, retaining the state-action pairs at the unsafe-region timesteps. Under $\piMixed$, $\piMain$ controls the rollout up to the first exit from $\safeR$ (so the visited boundary state $s_t$ has $\piMain$-induced distribution), and $\piRec$ provides actions thereafter. This matches the data-collection rule of a single \ac{DAgger} iteration with $\piRec$ as the teacher and $\piMain$ as the student, states visited under the mixed roll-out policy and labels from the teacher, except that the teacher is queried only at the recovery-controlled states rather than at every visited state as in \citet{ross2011dagger}. Minimizing $\compatLoss_{\sigma \equiv 1}$ as the per-iteration update therefore reproduces a region-restricted analogue of the \ac{DAgger} aggregation step.

\textbf{(c)} For general $\sigma_k \in \{0, 1\}$, the regularizer is
\begin{equation*}
\compatLoss(\theta)
= -\frac{\lambdaCompat}{N_{\mathrm{rec}}} \sum_{k} \sigma_k \sum_{t : s_t \notin \safeR,\, \seg(t) = k} \log \piMain\!\left(\piRec(s_t) \mid s_t\right),
\end{equation*}
i.e.\ the behavioral-cloning objective restricted to the sub-dataset $\mathcal{B}_{\mathrm{succ}} = \{(s_t, \piRec(s_t)) : s_t \notin \safeR,\, \sigma_{\seg(t)} = 1\}$ collected from segments that re-entered $\safeR$. \ac{JSRL} \citep{uchendu2023jsrl} likewise rolls out a mixed teacher-then-student policy but trains the student only by reinforcement on the student-controlled tail. It does not include a teacher-imitation step. The $\sigma_k \equiv 1$ specialization of $\compatLoss$ is the analogous imitation loss \ac{JSRL} omits.
In the $\sigma_k \not\equiv 1$ case, the new element is the gating criterion and target: imitation of an external recovery controller gated by the realized success of its multi-step segment, rather than by value estimates of the agent's own past actions \citep{oh2018sil} or of demonstrations \citep{nair2018demonstrations}.
\end{proof}

\subsection{Sketch: why an observable upper bound on \texorpdfstring{$\beta$}{beta} is hard in continuous actions}
\label{app:obs-bound}

The bound on $\betaRate$ above is stated in terms of the per-step invariance slack $\eta$, which is not directly measurable; a bound in terms of an observable quantity would be far more useful, and we close by explaining why that is hard. A natural attempt is to bound $\beta$ by the observable recovery rate $\alpha$ via a simulation-lemma argument:
\(
\beta = \Pr_{\dpiMain}[s\notin\safeR] \le \alpha + \mathrm{TV}(\dpiMain, \dpiMixed).
\)
Here $\mathrm{TV}(\cdot,\cdot)$ denotes total variation (TV) distance. The TV term can in turn be bounded by the expected per-step TV between $\piMain$ and $\piMixed$ via the standard simulation lemma \citep{kearns2002sparse}. But on unsafe states $\piMixed$ collapses to a Dirac (deterministic recovery) while $\piMain$ remains absolutely continuous. The per-step TV at unsafe states is $1$ pointwise, so the simulation-lemma term contributes a full $1/(1-\gamma) > 1$, and the bound $\beta \le \alpha + 1/(1-\gamma)$ exceeds $1$ and is therefore vacuous (since $\beta \le 1$ trivially). Routes that may produce a non-vacuous bound include (i) replacing the deterministic recovery with a smoothed (LSE / Gaussian-mixture) variant whose TV with $\piMain$ is finite, (ii) Wasserstein-action-gap bounds under known dynamics Lipschitz constants, and (iii) periodic offline evaluation of $\piMain$ alone. We sketch (i) below and leave a complete treatment to follow-up work.

\noindent \textbf{Sketch via smoothed recovery.} Replace $\piRec(\cdot|s) = \delta(a - \bar a(s))$ with $\tilde \pi_\mathrm{rec}(\cdot|s) = \mathcal{N}(\bar a(s), \sigma_\mathrm{rec}^2 I)$ for small $\sigma_\mathrm{rec}$.
Then both $\piMain(\cdot|s)$ and $\tilde\pi_\mathrm{rec}(\cdot|s)$ are absolutely continuous and, when the scales are matched ($\sigma_\mathrm{rec} \approx \sigma_\theta(s)$, with $\sigma_\theta$ the main policy's standard deviation), the Gaussian TV bound gives $\mathrm{TV}(\piMain, \tilde\pi_\mathrm{rec}) \le \mathrm{const}\cdot \|\mu_\theta(s) - \bar a(s)\| / \sigma_\mathrm{rec}$ at unsafe states up to a scale-mismatch term (with mismatched covariances the TV is bounded away from zero even at equal means), so the per-step TV is integrable under these stated conditions. The constant, however, grows with the action dimension, so even this smoothed bound loosens in high dimensions, the same mechanism by which the importance weights collapse on the 12-dimensional \envGo action space (\Cref{app:is-variance}). Moreover, the TV to a fixed-scale $\piMain$ approaches one as $\sigma_\mathrm{rec} \to 0$, so the smoothing scale trades fidelity to the deployed deterministic recovery against tightness of the bound; the sketch therefore yields an observable upper-bound proxy for $\beta$ only under the matched-scale conditions above, not a general bound.

\section{Algorithm Details}
\label{app:algorithm}

\subsection{Full pseudocode}

\Cref{alg:safeexplorer} gives the complete \methodName{} update. The per-environment curriculum grows the safe-region radius linearly, $\diter(u) = d_0 + \frac{u-1}{N}\,\dmax$ at update $u$ of $N$, with $d_0 = 0.01$ on \envHC and \envAnt and $d_0 = 0.05$ on \envGo, up to the per-environment $\dmax$ values in \Cref{tab:hyperparams} (\Cref{app:hyperparams}).

\begin{algorithm}[t]
\caption{\methodName{}: \ac{PPO} with safe-step masking, analytic recovery V, and hard $\Ctheta$.}
\label{alg:safeexplorer}
\begin{algorithmic}[1]
\Require Main policy $\piMain$, recovery policy $\piRec$, value function $V_\theta$, safe-region radius schedule $\diter$, total updates $N$, rollout length $T_{\mathrm{ro}}$, num envs $E$, compatibility coef $\lambdaCompat$, learning rate $\eta_{\mathrm{lr}}$, terminal reward convention $r_{\mathrm{term}}$ (\Cref{app:rterm}).
\For{$\mathrm{update} = 1, \ldots, N$}
  \State $d \gets \diter(\mathrm{update})$
  \State \textbf{Rollout phase:} for $T_{\mathrm{ro}}$ steps in each of $E$ parallel envs,
  \State \quad if $s_t \in \safeR(d)$: $a_t \sim \piMain(\cdot | s_t)$, store $(s_t, a_t, r_t, \log \piMain(a_t|s_t), V_\theta(s_t))$
  \State \quad if $s_t \notin \safeR(d)$: $a_t = \piRec(s_t)$, $\,\mathtt{is\_rec}_t \gets 1$
  \State \quad\quad store $(s_t, a_t, r_t, \log \piMain(a_t|s_t), V_\theta(s_t))$ \Comment{$\log\piMain(a^{\mathrm{rec}}_t|s_t)$ feeds $\compatLoss$ only; never enters the \ac{PPO} ratio (safe-step mask).}
  \State \textbf{Analytic-V overwrite (\Cref{prop:analytic}):} for each recovery segment $[t_k^\mathrm{start}, t_k^\mathrm{end}]$ of length $\ell_k$,
  \State \quad if segment ended in re-entry: $V_\theta(s_{t_k^\mathrm{start}}) \gets \gamma^{\ell_k} V_\theta(s_{t_k^\mathrm{end}})$ \Comment{the segment return $G_{t,k}{=}0$ under the zeroed-recovery-reward convention, \Cref{prop:analytic}}
  \State \quad if segment ended in termination: $V_\theta(s_{t_k^\mathrm{start}}) \gets \gamma^{\ell_k-1} r_{\mathrm{term}}$ \Comment{the terminal reward; recovery steps are zeroed, so this is $G_{t,k}$ on failure}
  \State \quad if segment cut off by rollout truncation: treat as re-entry at the truncation boundary \Comment{tentative success with a bootstrapped value, \Cref{sec:compat}}
  \State \textbf{Advantage estimation:} compute $\Aest^\mathrm{full}$ via \ac{GAE} on the overwritten value sequence; the value target is $\hat R_t = \Aest_t^\mathrm{full} + V_\theta(s_t)$.
  \State \textbf{Outcome gates:} for each recovery segment $k$, set $\sigma_k \gets \indicator[\text{segment ended in re-entry}]$.
  \For{epoch $= 1, \ldots, K$}
    \For{minibatch $\subset$ rollout}
      \State $\Tsafe \gets \{t : \mathtt{is\_rec}_t = 0\}$ (safe-step mask)
      \State $\PPOloss_\mathrm{safe} \gets -\frac{1}{|\Tsafe|}\sum_{t\in\Tsafe}\min(\rho_t \Aest_t^\mathrm{full},\,\mathrm{clip}(\rho_t)\Aest_t^\mathrm{full})$
      \State $\valueLoss \gets \frac{1}{T_{\mathrm{ro}}}\sum_t (V_\theta(s_t) - \hat R_t)^2$ \Comment{unmasked: recovery-trigger targets are variance-free, exact up to critic error at re-entry}
      \State $\compatLoss \gets -\frac{\lambdaCompat}{N_\mathrm{rec}} \sum_{t\notin\safeR} \sigma_{\seg(t)} \log\piMain(a_t^\mathrm{rec} | s_t)$
      \State $L \gets \PPOloss_\mathrm{safe} + c_v \valueLoss + \compatLoss - c_e H[\piMain]$
      \State $\theta \gets \theta - \eta_{\mathrm{lr}}\, \nabla_\theta L$
    \EndFor
  \EndFor
\EndFor
\end{algorithmic}
\end{algorithm}

\subsection{Notation}
\label{app:notation}

\Cref{tab:notation} collects the symbols used in \Cref{alg:safeexplorer} and the design rationale below.

\begin{table}[ht]
\centering\small
\setlength{\tabcolsep}{4pt}
\caption{Notation used in \Cref{alg:safeexplorer} and throughout the paper.}
\label{tab:notation}
\begin{tabular}{l p{0.78\linewidth}}
\toprule
$\piMain(a\mid s)$ & main policy, stochastic, $\theta$-parameterized \\
$\piRec(a\mid s)$ & recovery policy; no $\theta$-dependence; arbitrary form \\
$\piMixed$ & mixed policy (\Cref{sec:background}) \\
$\safeR$, $d$, $\dmax$ & safe region, current boundary, per-environment safe-region maximum (tuned hyperparameter) \\
$\pTraj, \pTrajMixed$ & trajectory distributions under $\piMain$ alone and the mixed policy \\
$J(\theta), J^{\mathrm{mix}}(\theta)$ & main-policy return (unobservable during training) and mixed-policy return \\
$\alphaRate$ & recovery rate (observable) \\
$\betaRate$ & main-policy out-of-region rate (unobservable during training) \\
$\Ctheta$ & compatibility score $\prod_{t \notin \safeR} \piMain(a^\mathrm{rec}_t \mid s_t)$ \\
$\Vmix, \Qmix, \Amix$ & value, $Q$, advantage under the mixed policy \\
$\ell_k$ & length of recovery segment $k$ \\
$\seg(t)$ & index of the recovery segment containing step $t$ \\
$\sigma_k$ & outcome gate of segment $k$ ($1$ iff it ends with re-entry into $\safeR$) \\
$N_\mathrm{rec}$ & number of recovery-controlled steps in the current minibatch (normalizer of $\compatLoss$) \\
$r_{\mathrm{term}}$ & environment reward on the terminal step of a failed recovery segment; per-environment values in \Cref{app:rterm} \\
\bottomrule
\end{tabular}
\end{table}

\subsection{Operational features of the safe-region intervention setup}
\label{app:operational-features}

The safe-region intervention mechanism produces a training-time setup with four operational features that simultaneously place it outside \ac{CMDP} and pointwise-filter approaches: (i) the safety predicate is binary set membership $s \in \safeR$ rather than a scalar cost, so there is no Lagrangian dual to optimize; (ii) $\piRec$ may be deterministic (\ac{MPC}, greedy \ac{SAC}), in which case it has no Lebesgue density and the \ac{IS} correction is ill-defined; (iii) recovery takes control for multi-step segments rather than at a single step, so training rollouts follow a genuine mixed-policy distribution rather than a one-step perturbation; (iv) the objective is the original main-policy return $J(\theta)$, not a constrained surrogate, so safety is enforced operationally rather than penalized in the reward. Each of the four method components in \Cref{sec:method} responds to a specific feature in this list.

\subsection{Soft outcome gate}
\label{app:soft-gate}

In addition to the hard outcome gate of \Cref{sec:compat}, we considered a signed, per-step preference based on whether the post-segment state was higher- or lower-value than the pre-segment state, with $k(t)$ the length of the segment containing $t$:
\begin{equation}\label{eq:soft-gate}
w_t^{\mathrm{soft}} = \gamma^{k(t)} V_\theta(s_{t+k(t)}) - V_\theta(s_t), \quad \text{normalized to zero mean per minibatch}.
\end{equation}
Empirically, the soft variant is unstable on \envHC and \envGo (the soft-gate variants in \Cref{app:soft-variants}) because the regularizer fits failed-recovery actions, so we drop it from the headline comparison.

\subsection{Random-denominator and value-loss target choices}
\label{app:value-loss-target}

Two further choices in \Cref{alg:safeexplorer} warrant explanation: the denominator of the masked policy gradient and the set of timesteps the critic regresses on. The masked PG estimator divides by $|\Tsafe|$, a random variable that depends on the rollout. Early in training, when $\alphaRate$ is high, $|\Tsafe|$ can be small: with $m = |\Tsafe|$ safe steps the masked mean has variance on the order of $\sigma^2 / m$, so a small $m$ inflates the per-update gradient variance, and the random denominator additionally introduces a small ratio bias, one of the standard departures noted after \Cref{cor:ppo}. We do not apply variance-control heuristics. The curriculum schedule (\Cref{sec:exp-setup}) keeps $|\Tsafe|$ above a working threshold within the first few updates on every environment we evaluate.

The value-loss default also depends on whether the analytic recovery value (\Cref{prop:analytic}) is in use:
\begin{itemize}[nosep,leftmargin=1.5em]
  \item \textbf{Analytic V on:} train the critic on \emph{all} timesteps. Recovery-state targets under \Cref{prop:analytic} carry no Monte-Carlo variance and are exact over the segment; the only residual error is the critic's own error at the single re-entry state (failure-segment targets are exact), so unmasked is principled and gives the critic dense supervision.
  \item \textbf{Analytic V off:} prefer the masked value loss $\valueLoss(\theta) = |\Tsafe|^{-1}\sum_{t : s_t \in \safeR} (V_\theta(s_t) - \hat R_t)^2$. Otherwise the critic regresses against bootstrapped values that themselves depend on the critic at unsafe states, creating a self-consistency loop.
\end{itemize}
Our experiments use the analytic-V variant by default. \Cref{sec:ablations} ablates this choice.

\subsection{Where the changes land in the rollout/update loop}

The choices above touch a standard \ac{PPO} loop at a few points; the list below states where each is implemented.

\begin{itemize}[leftmargin=2em]
\item \textbf{Rollout, recovery branch:} when $s_t \notin \safeR$, the recovery action is queried, the action is stored in the buffer, $\log\piMain(a_t\mid s_t)$ is computed at the recovery action, and the step is flagged as recovery-controlled.
\item \textbf{Analytic-V overwrite:} for each recovery segment, the segment endpoint is identified and the value at the triggering state is overwritten with $\gamma^{\ell} V_\theta(s_{t+\ell})$ on success or $\gamma^{\ell-1} r_{\mathrm{term}}$ on failure, where $\ell$ is the segment length. This happens in the rollout-buffer pass, before \ac{GAE}.
\item \textbf{\ac{GAE}} (standard, no modification): operates on the overwritten value sequence.
\item \textbf{Compatibility loss:} computes $\log\piMain(a_t^{\mathrm{rec}}\mid s_t)$ at unsafe steps and applies the chosen gate ($\sigma_k$ for the hard gate, the signed advantage $w_t^{\mathrm{soft}}$ for the soft variant). The resulting loss is added to the \ac{PPO} surrogate before backpropagation.
\item \textbf{Masked PG and value loss:} the safe-step indicator masks both the policy-gradient term and (in the learned-critic variant, without the analytic value) the value loss.
\end{itemize}

\subsection{Baseline ports}
\label{app:baseline-ports}

The two recovery-based baselines in \Cref{tab:main-results} are ports into the same on-policy loop as \methodName{}: both run inside the identical \ac{PPO} update, share the identical safe-region trigger $s \notin \safeR(d)$ with the same curriculum on $d$, and query the identical frozen \ac{SAC} recovery policy. They differ from \methodName{} only in how recovery-controlled transitions enter the rollout buffer.

\begin{itemize}[leftmargin=2em]
\item \textbf{Recovery RL port} \citep{thananjeyan2021recoveryrlsafereinforcement}: at recovery steps the buffer keeps the \emph{task policy's} action, value, and log-probability, the action-relabeling data-handling rule of the original method; recovery-step rewards are zeroed in the learning signal, as for \methodName{}. The original method treats the recovery as part of the environment dynamics, which is coherent for its off-policy Q-learning objective; the bias corrected in this paper arises only when that relabeling is carried into an on-policy policy-gradient update.
\item \textbf{Safe Legged port} \citep{yang2022safereinforcementlearninglegged}: the same relabeling, plus a reward of $r_t - 1$ stored at recovery-triggered steps, the recovery-penalty shaping component of the published method.
\end{itemize}

What is deliberately \emph{not} ported: Recovery RL's learned safety critic and off-policy training, and the model-based switching criterion of \citet{yang2022safereinforcementlearninglegged}. Replacing both with the shared distance trigger isolates each method's data-handling rule under a matched on-policy setup with an identical trigger and identical recovery controller, which is what makes the comparison interpretable. No baseline-specific tuning was performed beyond the shared \ac{PPO} configuration of \Cref{app:hyperparams}.

\subsection{Pseudocode parameters}

\Cref{tab:pseudocode-params} gives the default values that instantiate \Cref{alg:safeexplorer} as run in our experiments.

\begin{table}[ht]
\centering\small
\caption{Default parameter values for \Cref{alg:safeexplorer}.}
\label{tab:pseudocode-params}
\begin{tabular}{l l}
\toprule
Symbol in \Cref{alg:safeexplorer} & Default value \\
\midrule
$N$ (total updates) & $\mathrm{total\_timesteps}/(\mathrm{num\_envs} \times \mathrm{num\_steps})$ \\
$E$ (parallel envs) & 4 \\
$T_{\mathrm{ro}}$ (rollout length) & 2048 \\
$K$ (\ac{PPO} epochs) & 10 \\
$c_v$ (value coef) & 0.5 \\
$c_e$ (entropy coef) & 0 \\
$\lambdaCompat$ (compat coef) & $10^{-3}$ \\
$\epsilon$ (\ac{PPO} clip) & 0.2 \\
$\gamma, \mathrm{GAE}\,\lambda$ & 0.99, 0.95 \\
$r_{\mathrm{term}}$ (terminal reward) & environment terminal-step reward (\Cref{app:rterm}) \\
\bottomrule
\end{tabular}
\end{table}

\section{Hyperparameters}
\label{app:hyperparams}

\subsection{Training hyperparameters (shared across all variants)}

Every method in \Cref{tab:main-results} except the two \ac{CMDP} baselines is trained with the single \ac{PPO} configuration of \Cref{tab:hyperparams}, so that any difference among them reflects the method rather than per-variant tuning. \ac{CPO} and \ac{PPO}-Lagrangian additionally carry the constraint hyperparameters their objective requires (\Cref{app:cmdp-hyperparams}). The compatibility regularizer $\lambdaCompat$ is the only entry active for \methodName{} alone. All remaining hyperparameters are shared. The per-environment $\dmax$ values were set by a qualitative criterion, large enough that the recovery policy stops triggering under normal task operation as $d$ approaches $\dmax$; no systematic search over $\dmax$ was run.

\begin{table}[ht]
\centering
\small
\caption{\ac{PPO} training hyperparameters for every variant in \Cref{tab:main-results}. Per-environment $\dmax$ values are in env-specific task-space units, not comparable across environments.}
\label{tab:hyperparams}
\begin{tabular}{lr}
\toprule
\textbf{Parameter} & \textbf{Value} \\
\midrule
Optimizer & Adam ($\mathrm{eps}=10^{-5}$) \\
Learning rate & $3 \times 10^{-4}$ \\
Anneal LR linearly to 0 & yes \\
Number of parallel envs & 4 \\
Rollout length per env & 2048 \\
Batch size ($\mathrm{num\_envs}\times\mathrm{num\_steps}$) & 8192 \\
Mini-batch size & batch size $/$ 32 \\
\ac{GAE} $\lambda$ & 0.95 \\
Discount factor $\gamma$ & 0.99 \\
\ac{PPO} ratio clipping $\epsilon$ & 0.2 \\
Update epochs per batch & 10 \\
Value loss coefficient $c_v$ & 0.5 \\
Max gradient norm & 0.5 \\
Entropy coefficient $c_e$ & 0 \\
Compatibility regularizer $\lambdaCompat$ (\methodName{} only) & $10^{-3}$ (see \Cref{app:lambda-sensitivity} for sweep) \\
Initial safe-region radius $d_0$ & 0.01 (\envGo: 0.05) \\
Per-env $\dmax$ & \envHC: 2.0; \envAnt: 0.4; \envGo: 0.15 \\
Per-step recovery reward in the learning signal & $0$ (zeroed; \Cref{sec:value}) \\
Terminal-step reward $r_{\mathrm{term}}$ & environment terminal reward (\Cref{app:rterm}) \\
\bottomrule
\end{tabular}
\end{table}

\subsection{Terminal-step reward \texorpdfstring{$r_{\mathrm{term}}$}{r\_term}}
\label{app:rterm}

Intermediate recovery-step rewards are zeroed in the learning signal (\Cref{sec:value}), but the terminal-step reward that the analytic-$V$ failure branch propagates as $r_{\mathrm{term}}$ (\Cref{alg:safeexplorer}) is the environment's \emph{own} reward on the terminating step, preserved in the buffer. Concretely: on \envHC the terminal step includes an explicit $-1$ penalty; on \envGo the per-step reward is floored at zero and the healthy bonus vanishes on an unhealthy step, so $r_{\mathrm{term}} = 0$ (a configurable termination-penalty flag exists but is $0$ in all main-table runs); on \envAnt the constant healthy bonus of $1.0$ is paid even on the terminal step, so its terminal reward is that bonus minus the step's control and contact costs.

\subsection{\texorpdfstring{\ac{CMDP}}{CMDP} baseline hyperparameters}
\label{app:cmdp-hyperparams}

Both \ac{CMDP} baselines run on OmniSafe 0.5.x. The constraint cost is $1$ on each unhealthy termination and $0$ otherwise; time-limit truncations incur no cost, matching the fall-counting convention of \Cref{app:env}. The cost limit is $0.05$. Shared solver settings: steps per epoch $2048$, $10$ update iterations, minibatch size $64$, target KL $0.02$, $\gamma = \gamma_{\mathrm{cost}} = 0.99$, \ac{GAE} $\lambda = \lambda_{\mathrm{cost}} = 0.95$, actor and critic MLPs of two hidden layers of size 64 with $\tanh$ activation, learning rate $3\times10^{-4}$ for both with linear decay, observation normalization on, reward normalization on, cost normalization off. \ac{PPO}-Lagrangian additionally uses ratio clip $0.2$, Lagrange multiplier initialized at $0.001$, and multiplier learning rate $0.035$. \ac{CPO} uses the OmniSafe defaults for its trust-region settings plus the cost limit above. No baseline-specific tuning was performed beyond these settings.

\subsection{Per-environment training budget}

\begin{wraptable}{r}{0.55\textwidth}
\centering\small
\vspace{-1.2em}
\caption{Per-environment training budget and \methodName{} wall-clock cost.}
\label{tab:training-budget}
\setlength{\tabcolsep}{4pt}
\begin{tabular}{lrr}
\toprule
Environment & Total env steps & Wall-clock (4 CPU cores) \\
\midrule
\envHC & $8\times 10^6$ & $\sim$3.5 h \\
\envAnt & $8\times 10^6$ & $\sim$5 h \\
\envGo & $16\times 10^6$ & $\sim$12 h \\
\bottomrule
\end{tabular}
\vspace{-0.8em}
\end{wraptable}
The one quantity that does vary across environments is the training budget, which we fix before any rollout begins and report in \Cref{tab:training-budget}. The larger budget on \envGo reflects its 12-\ac{DoF} action space and longer main-task time-to-converge; the frozen \ac{SAC} recovery policy (\Cref{app:recovery-training}) does not enter this budget. Matching all environments at $16$M, or repeating \envGo at $8$M for a fully matched comparison, is left to follow-up work.

\subsection{Network architecture}

Within each run, the network that \ac{PPO} optimizes is the standard CleanRL actor/critic: the actor and critic are independent MLPs with two hidden layers of size 64 and $\tanh$ activation. The actor outputs a mean vector and a learnable global $\log\sigma$ for a diagonal-Gaussian policy. Layer weights are initialized orthogonally. The critic's output layer uses gain 1.0 and the actor's mean output uses gain 0.01.

\subsection{Curriculum schedule}

The remaining quantity that changes during a run is the safe-region radius, which is not held fixed but annealed along the linear schedule of \Cref{sec:algorithm}, interpolating $d_0 \to \dmax$ over the $N = \mathrm{total\_timesteps}/(\mathrm{num\_envs}\times\mathrm{num\_steps})$ training updates. \Cref{app:curr-shape} ablates the schedule shape (linear / log / step / constant) on \envAnt; the headline results all use the linear schedule.

\section{Environment and Recovery-Policy Details}
\label{app:env}

\subsection{Environment summary}

We evaluate on three MuJoCo locomotion environments that differ in dimensionality and in how the safe region is sensed. \Cref{tab:env-summary} lists, for each one, the degrees of freedom, the observation and action dimensions, the indicator that the recovery policy uses to decide whether the agent has left the safe region $\safeR$, and the termination predicate that defines a fall. The indicator is a function of torso or base height together with tilt to the nominal pose, the signal the recovery policy is trained to restore.

\begin{table}[ht]
\centering\small
\setlength{\tabcolsep}{4pt}
\caption{Per-environment dimensions, the safe-region indicator the recovery policy uses, and the termination predicate that defines a fall.}
\label{tab:env-summary}
\resizebox{\textwidth}{!}{%
\begin{tabular}{l l l l l l}
\toprule
Environment & \ac{DoF} & Obs dim & Action dim & Safe-region indicator & Fall (termination) iff \\
\midrule
\envHC & 6 & 17 & 6 & torso height $z$ + tilt to nominal & $z \notin [-0.5, 0.5]$ or $|\theta_{\mathrm{pitch}}| > 1.57$ \\
\envAnt & 8 & 27 & 8 & torso height $z$ + tilt to nominal & $z \notin (0.3, 1.5)$ or non-finite state \\
\envGo & 12 & 49 & 12 & base $z$ + base tilt (roll, pitch) to nominal & $z \notin (0.22, 0.65)$ or $|\mathrm{roll}|$ or $|\mathrm{pitch}| > 60^{\circ}$ \\
\bottomrule
\end{tabular}%
}
\end{table}

\noindent \textbf{Termination, horizon, and fall counting.} The \envHC termination predicate is a custom addition; the stock Gymnasium \envHC never terminates, which is why our fall counts on \envHC are nonzero. All three environments run under a $1000$-step episode time limit. The control timestep is $0.05$\,s on \envHC and \envAnt (frame skip $5$ at a $0.01$\,s simulation step) and $0.01$\,s on \envGo (frame skip $5$ at a $0.002$\,s simulation step), so the horizons are $50$\,s and $10$\,s of simulated time respectively. A \emph{fall} is an unhealthy termination as defined in \Cref{tab:env-summary}; time-limit truncations are never counted as falls, in training and evaluation alike.

\subsection{Safe-region distance and curriculum}
\label{app:safe-region-distance}

The membership test is $s \in \safeR(d)$ iff $D(s) \le d$, with an environment-specific distance $D$ computed from the base height and orientation:
\begin{itemize}[leftmargin=2em]
\item \envHC: $D(s) = \|(z, \theta_{\mathrm{pitch}})\|_2$ against the nominal $(0, 0)$, with $z$ in meters and pitch in radians, unweighted.
\item \envAnt: $D(s) = \|(z - 0.75, \mathrm{tilt})\|_2$, where $\mathrm{tilt} = \frac{\pi}{180} \arccos(\mathrm{axis}_z) \cdot \phi/\pi$ with $\phi = 2\arccos(q_w)$, a rescaled axis-angle deviation from upright ($q_w$ is the scalar quaternion component and $\mathrm{axis}_z$ the vertical component of the rotation axis). This scaling strongly downweights orientation relative to height on \envAnt; it is an implementation choice that we state plainly and keep for fidelity with the trained runs.
\item \envGo: $D(s) = \|(z - 0.3, \mathrm{roll}, \mathrm{pitch})\|_2$, with $z$ in meters and roll and pitch in radians.
\end{itemize}
Because each $D$ mixes meters and (rescaled) radians in a single Euclidean norm, $d$ and $\dmax$ carry the same mixed units and are therefore not comparable across environments, the reason \Cref{tab:hyperparams} reports them as env-specific values. During training, $d$ follows the linear curriculum $d = d_0 + \frac{u-1}{N}\,\dmax$ at update $u$ of $N$ updates (\Cref{alg:safeexplorer}), with the per-environment $(d_0, \dmax)$ of \Cref{tab:hyperparams}.

\subsection{Recovery-policy training}
\label{app:recovery-training}

For each environment, the recovery policy $\piRec$ is a separately pre-trained \ac{SAC} actor, trained on the same MuJoCo dynamics in a recovery-flavored configuration: the forward and velocity-tracking task terms of the environment reward are zeroed, so the remaining reward favors surviving and staying near the nominal pose. The shared recipe is CleanRL \ac{SAC} with twin $256$-$256$ $Q$ networks, a $256$-$256$ tanh-squashed Gaussian actor, replay buffer of $10^6$ transitions, $\gamma = 0.99$, batch size $256$, and an auto-tuned entropy coefficient; recovery-training episodes are capped at $250$ steps, and at deployment we use the actor's deterministic tanh-mean action. Recovery training is \emph{not} reset-free. At every episode reset, the initial state is aggressively randomized to mimic the failure modes a learning task policy will produce.

\noindent \textbf{Initial-state randomization (per env reset):}
\begin{itemize}[leftmargin=2em]
\item \textbf{Joint positions:} sampled uniformly over the full mechanical joint range (or a large subset). For \envAnt, each joint has env-specific bounds (e.g., $\pm 0.52$ to $\pm 1.23$ rad from nominal). For \envGo, all 12 joint positions are drawn from the full actuator range.
\item \textbf{Base height (and optionally $x, y$):} base $z$ is randomized over a range that includes low and near-fall configurations (e.g., $z \in [0.22, 0.45]$ for \envGo).
\item \textbf{Base orientation:} the base quaternion is randomized to produce varied roll, pitch, and yaw. For \envAnt, a random quaternion is generated and constrained so the forward vector has non-negative $z$ (``face up''). For \envGo, roll and pitch are drawn in $\pm \pi/10$ and yaw in $[0, 2\pi]$, again with a face-up constraint.
\item \textbf{Velocities:} initialized with small noise around zero.
\end{itemize}

\noindent \textbf{Per-environment training and cost.} On \envHC the recovery policy was trained on the stock environment with the forward reward disabled, for $5$M environment steps; the deployed checkpoint is the $4$M-step snapshot. This policy was trained with an earlier revision of the trainer, so we report its checkpoint provenance rather than a reconstructed reward. On \envAnt it was trained for $3$M steps ($2$M-step snapshot deployed; policy learning rate $10^{-4}$, $Q$ learning rate $3\times10^{-4}$, target-smoothing $\tau = 0.002$) on the recovery-flavored environment, whose effective reward is the healthy bonus minus control-magnitude and contact costs, so the \envAnt recovery is torque-limited. On \envGo it was trained for $10$M steps ($5$M-step snapshot deployed; $\tau = 0.005$, $Q$ learning rate $10^{-3}$, policy learning rate $3\times10^{-4}$) with the velocity-tracking terms zeroed; the effective reward is the healthy indicator minus quadratic pose, height, and joint-deviation costs, floored at zero. Recovery pre-training therefore costs $5$M (\envHC), $3$M (\envAnt), and $10$M (\envGo) environment steps. These steps are not included in the training budgets of \Cref{tab:training-budget}, and the falls incurred during recovery pre-training are not tracked. The cost is nonetheless bounded: at most $62.5\%$ of a single $8$M-step \envHC or \envAnt training run ($5$M of $8$M) and $62.5\%$ of the $16$M-step \envGo budget ($10$M of $16$M), which bounds the sense in which the recovery policy is cheap to obtain.

\noindent \textbf{Per-environment recovery quality.} How well this training regime works depends on the environment. Empirically, the \envHC and \envGo recoveries succeed at stabilization with high probability across the randomized initial states. The \envAnt recovery policy does not, producing the unreliable-recovery regime that is the most informative axis of variation in our results (see \Cref{sec:exp-main}).
We attribute this to the recovery policy's reward design rather than to undertraining: the control-magnitude penalty ($0.5\|a\|^2$ in the \envAnt reward) caps how much torque the policy can apply, and the harder 3D \envAnt stabilization needs exactly those aggressive corrections. Relaxing that penalty or adopting a stronger recovery class for \envAnt, and confirming the attribution with a longer-training control, is left to follow-up work.

\noindent \textbf{Recovery quality on the operational distribution.} The randomized-reset measurement above grades the recovery on states the designer samples, not on the states the learning policy actually drives it into (which, unlike the face-up-constrained reset distribution, can include face-down configurations), so the operational grade is, if anything, the harder of the two. To measure the latter, during a trained run we log every first exit from $\safeR$ (the state at which recovery is triggered), then replay the \ac{SAC} recovery from each logged state and record the fraction that \emph{survive}, meaning the recovery completes the segment without an episode termination. We do this for \methodName{} and for the masked, analytic-$V$ ablation rung of \Cref{sec:ablations}, which shares the recovery policy and the curriculum with \methodName{} and differs from it only by the hard outcome gate $\Ctheta$ (\cref{eq:hard-gate}). \Cref{tab:operational-recovery} aggregates all logged triggers of one representative run per cell. Two effects stand out. First, \methodName{} leaves $\safeR$ between $2.1\times$ and $4.2\times$ less often, so it calls on the recovery far less. Second, on \envAnt, where the recovery policy is unreliable, the exits \methodName{} produces survive $91.3\%$ of the time against $70.4\%$ for the ungated rung, a gap of $20.9$ points. On \envHC and \envGo, where the recovery policy is reliable, both variants survive at essentially the same rate (about $100\%$ and $97\%$). The \envAnt gap offers an operational reading of the $+2683$ reward the hard gate adds on \envAnt in \Cref{sec:ablations}. The gate biases the main policy toward the part of state space where the weak \envAnt recovery policy actually succeeds. The trigger counts in \Cref{tab:operational-recovery} place the binomial $95\%$ confidence interval on each survival rate within $\pm 0.5$ points. These intervals reflect sampling over triggers within one run, not variation across training seeds, which we do not characterize here; the table is therefore a single-run illustration rather than a multi-seed estimate.

\begin{table}[ht]
\centering\small
\caption{Recovery quality on the \emph{operational} trigger-state distribution. We replay the \ac{SAC} recovery from every logged first exit from $\safeR$ of one representative trained run per cell. ``Exits'' counts the logged recovery triggers; ``Survived'' is the percentage of those exits the recovery completes without termination. The ungated rung is the masked, analytic-$V$ variant of \Cref{sec:ablations}.}
\label{tab:operational-recovery}
\begin{tabular}{l rr rr}
\toprule
 & \multicolumn{2}{c}{Exits (triggers)} & \multicolumn{2}{c}{Survived (\%)} \\
\cmidrule(lr){2-3}\cmidrule(lr){4-5}
Environment & \methodName & ungated & \methodName & ungated \\
\midrule
\envHC  & 80{,}546  & 338{,}671 & 99.8 & 100.0 \\
\envAnt & 16{,}366  & 35{,}150  & 91.3 & 70.4 \\
\envGo  & 94{,}212  & 255{,}881 & 97.0 & 97.4 \\
\bottomrule
\end{tabular}
\end{table}

\subsection{\envGo{} environment specification}
\label{app:go1-spec}

The \envGo observation is $49$-dimensional, in order: base linear velocity scaled by $2.0$ ($3$), base angular velocity scaled by $0.25$ ($3$), base height $z$ ($1$), roll, pitch, yaw ($3$), commanded velocity scaled by $2.0$ ($3$), joint positions minus their defaults ($12$), joint velocities scaled by $0.05$ ($12$), and the previous action ($12$); the vector is clipped to $\pm 100$. The action is $12$ absolute joint-position targets driving MuJoCo position servos with proportional gain $k_p = 20$, joint damping $1.0$, and an actuator force range of $\pm 23.7$\,N\,m ($\pm 35.55$\,N\,m at the knees). The commanded velocity $(v_x, v_y, \omega_z)$ is resampled once per episode, each component uniform in $[-0.5, 0.5]$. The reward is $4.0\,\exp(-\|v^{\mathrm{cmd}}_{xy} - v_{xy}\|^2/0.25) + 2.0\,\exp(-(\omega^{\mathrm{cmd}}_z - \omega_z)^2/0.25)$, plus a healthy bonus of $1.0$ and a feet-air-time term of weight $1.0$, minus costs with weights: torque $2\times10^{-4}$, vertical velocity $2.0$, $xy$ angular velocity $0.05$, action rate $0.001$, joint-limit violation $0.01$, joint acceleration $1.25\times10^{-9}$, orientation $0.4$, height $4.0\,|z - 0.3|$, and joint deviation $0.1$; the total is floored at $0$.

\subsection{Custom XML modifications}

Supporting the recovery training and value conventions above requires three changes to the environment definitions. Our custom MuJoCo XMLs extend the standard Gymnasium MuJoCo environments with: (i) explicit joint-range attributes used by the recovery policy's randomized-reset training (\Cref{app:recovery-training}), (ii) a configurable flag that zeros the training reward outside $\safeR$ (the learning-signal convention of \Cref{sec:value}; reported returns are unmodified), and (iii) for \envGo, the \texttt{mujoco\_menagerie} Unitree Go1 model \citep{menagerie2022mujoco} with a custom 12-\ac{DoF} position-actuator configuration and a velocity-tracking reward adapted from \texttt{legged\_gym} \citep{rudin2022learningwalkminutesusing}.

\section{Additional Results}
\label{app:extra-results}

\textbf{Run-batch provenance.} The sensitivity studies in this appendix come from dedicated sweep batches run independently of the \Cref{tab:main-results} runs; each caption states its batch, operating point, and seed count (first $5$ of $10$ seeds, i.e.\ seeds $1$ to $5$, unless noted otherwise). Cumulative falls are heavy-tailed across seeds, so independent $5$-seed batches at identical settings can differ noticeably in absolute falls while preserving the orderings the text relies on. Where a table's operating point differs from the body protocol of \Cref{sec:exp-setup}, specifically the wider radii $\dmax = 3.2$ on \envHC{} and $0.56$ on \envAnt{} used by the v10 sweep family, the caption says so; within-table comparisons remain matched.

\subsection{Headline comparison: rliable interquartile-mean intervals}

This appendix supplies the evidence behind every claim made in the body, ordered as a descent from the headline numbers to the design choices that produce them, the robustness checks that stress them, and the theory and baselines that justify them. We begin with the headline. \Cref{fig:headline-iqm} gives the per-metric reward and total-falls intervals for the six comparison-set methods, complementing the falls-to-success headline (\Cref{fig:headline}). It shows rliable interquartile means with $95\%$ stratified-bootstrap confidence intervals \citep{agarwal2021deep}, drawn as ranked intervals so the falls and reward orderings read directly.

\begin{figure}[ht]
\centering
\includegraphics[width=\linewidth]{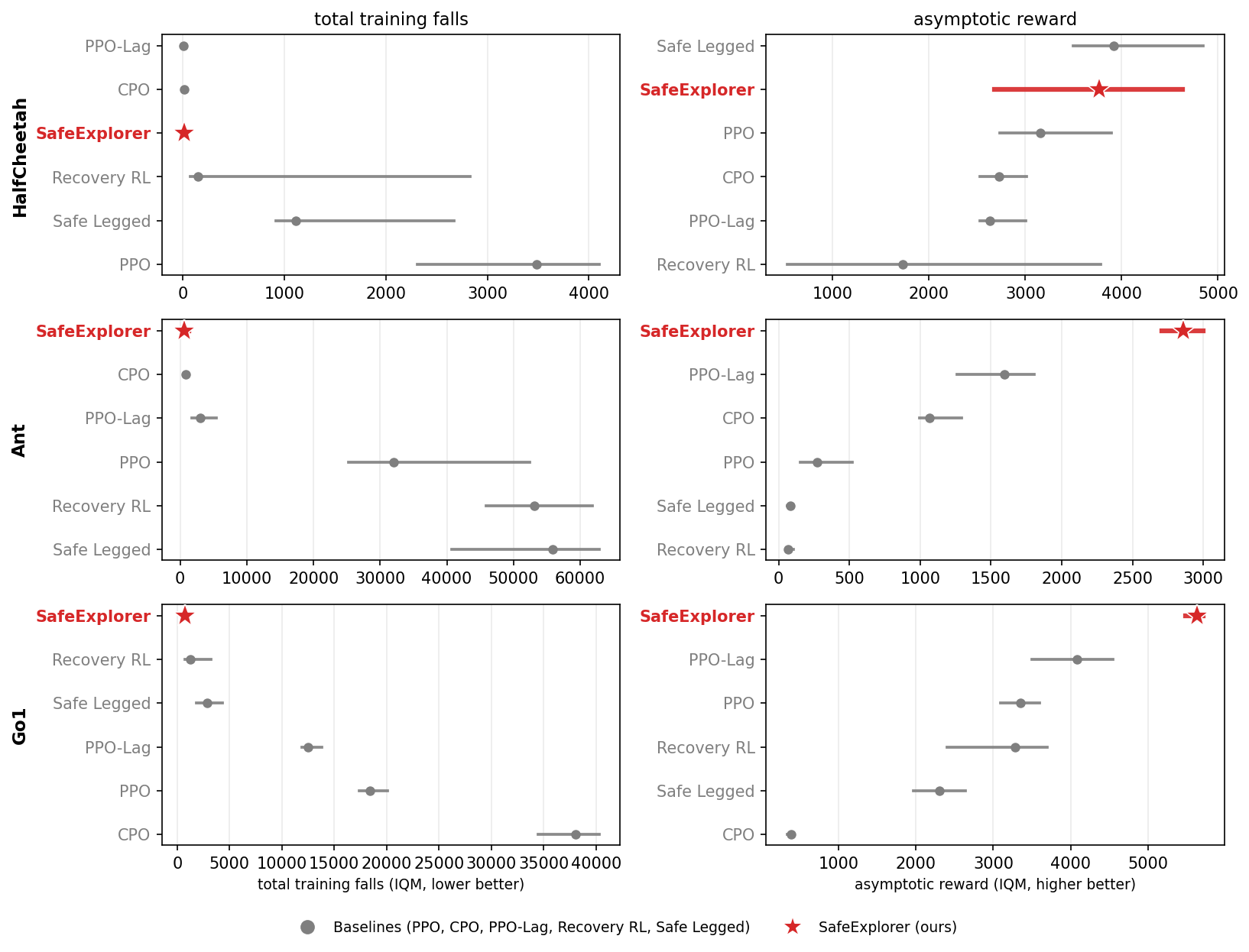}
\caption{rliable interquartile-mean intervals of reward and total falls for the six methods in \Cref{tab:main-results}, with $95\%$ stratified-bootstrap confidence intervals; a per-metric companion to the falls-to-success headline (\Cref{fig:headline}).}
\label{fig:headline-iqm}
\end{figure}

\subsection{Per-environment learning curves}

The interval plot summarizes the endpoints; the learning curves show how each method gets there. \Cref{fig:curves} traces reward and cumulative falls across the full training run, the trajectory view of the headline endpoints just reported.

\begin{figure}[ht]
\centering
\includegraphics[width=\linewidth]{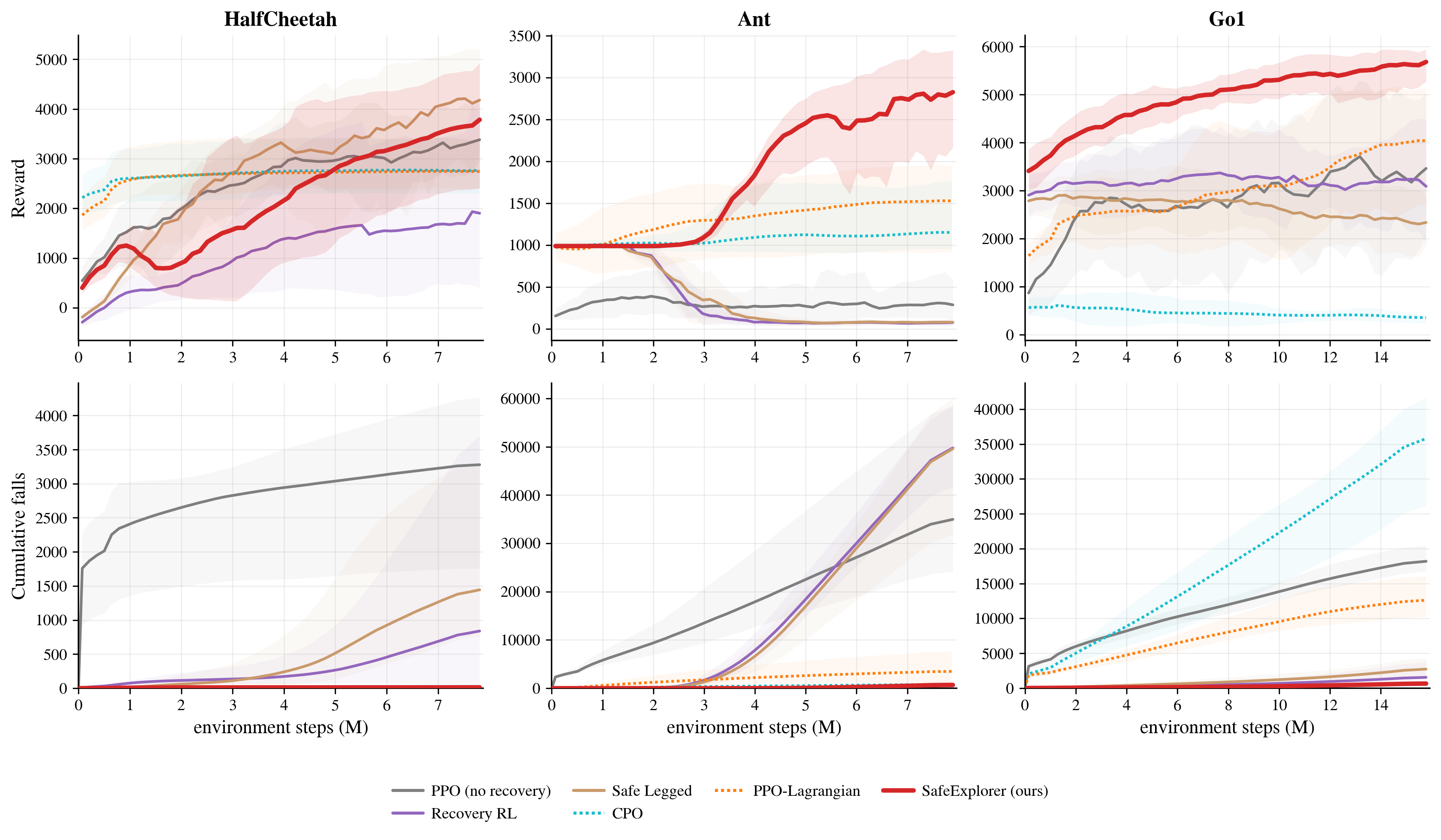}
\caption{Per-environment learning curves: rows are episodic reward and cumulative training-time falls; columns are \envHC, \envAnt, \envGo. Solid = mean across seeds, shaded = [min, max] envelope; curves smoothed with a moving average for display. Recovery-rate curves are deferred to \Cref{app:thm2-check}.}\label{fig:curves}
\end{figure}

\subsection{Additional ablations}
\label{app:ablation-ladder}

Having established what the headline shows, we now ask which design choices produce it, starting from the component ablation. \Cref{sec:ablations} analyzes that ablation in the body. \Cref{fig:ablation-pareto} gives its reward-vs-falls view, the trade-off complement to the falls-to-success ranking of \Cref{fig:headline}; here we also report the remaining variants. The two threads opened here are picked up in turn below: the soft-gate variants (\Cref{app:soft-variants}) help marginally on \envAnt and hurt on \envHC, and the $\lambdaCompat$ sweep is in \Cref{app:lambda-sensitivity}.

\begin{figure}[ht]
\centering
\includegraphics[width=\linewidth]{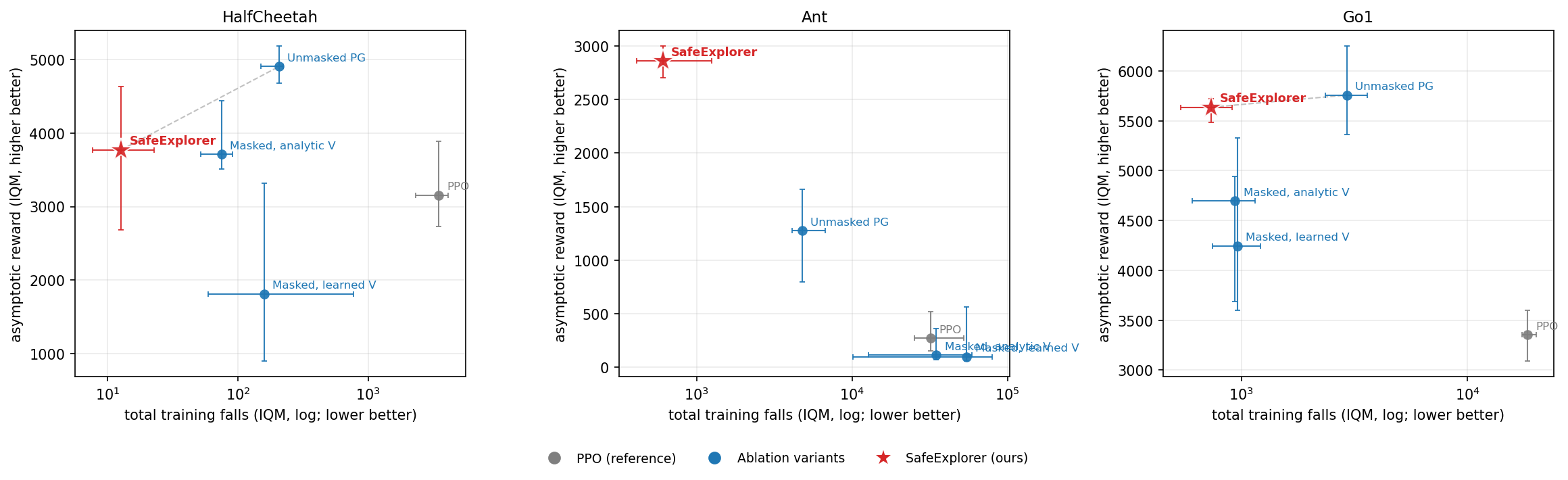}
\caption{Ablation ladder on the reward-vs-falls plane: \ac{PPO} and the four rungs, Unmasked PG $\to$ masked learned-$V$ $\to$ masked analytic-$V$ $\to$ \methodName{}. Markers are rliable interquartile means with $95\%$ stratified-bootstrap confidence intervals \citep{agarwal2021deep}; total training falls on a log axis, upper-left is better (fewer falls, higher reward). The reward-vs-falls view of the variants \Cref{fig:headline} ranks by falls-to-success; \methodName{} (red star) is on the Pareto front in every environment.}
\label{fig:ablation-pareto}
\end{figure}

\subsection{Soft compatibility-gate variants}
\label{app:soft-variants}

The body ablation (\Cref{sec:ablations}, \Cref{fig:headline}) analyzes the clean ablation ladder Unmasked PG $\to$ masked learned $V$ $\to$ analytic $V$ $\to$ hard $\Ctheta$ (\methodName{}); \Cref{tab:soft-variants} tabulates its full numbers. The two soft-gate variants, a soft re-entry-advantage gate $w_t^{\mathrm{soft}}$ (\cref{eq:soft-gate}) applied on the analytic-$V$ model, and the same soft gate applied on a decomposed value, are negative results. They help marginally on \envAnt and hurt on \envHC, and never approach \methodName{}'s safety. We report their full numbers here and omit them from the body ablation to keep it focused on variants that isolate a single design choice. \Cref{tab:soft-variants} places them alongside the ladder rungs so the off $\to$ soft $\to$ hard progression is visible (same five-seed protocol as \Cref{tab:main-results}).

\begin{table}[ht]
\centering
\caption{Soft-gate ablations against the full ladder: Unmasked PG (the biased no-masking baseline of \Cref{sec:ablations}, added here so all rungs appear in one table), masked learned-$V$, analytic-$V$, and the hard gate (same protocol as \Cref{tab:main-results}). The soft gate is $w_t^{\mathrm{soft}} = \gamma^k V(s_{t+k}) - V(s_t)$. \textbf{Bold} marks the best value per column.}
\label{tab:soft-variants}
\small
\setlength{\tabcolsep}{4pt}
\resizebox{\textwidth}{!}{%
\begin{tabular}{l rr rr rr}
\toprule
& \multicolumn{2}{c}{\envHC} & \multicolumn{2}{c}{\envAnt} & \multicolumn{2}{c}{\envGo} \\
\cmidrule(lr){2-3}\cmidrule(lr){4-5}\cmidrule(lr){6-7}
Method & Reward & Falls & Reward & Falls & Reward & Falls \\
\midrule
Unmasked PG & $\mathbf{4922\pm243}$ & $194\pm40$ & $1253\pm427$ & $5177\pm1399$ & $\mathbf{5788\pm425}$ & $2936\pm629$ \\
Masked, learned $V$ & $1982\pm1058$ & $306\pm360$ & $222\pm275$ & $50011\pm30828$ & $4374\pm758$ & $972\pm206$ \\
Analytic $V$ (no gate) & $3870\pm447$ & $73\pm19$ & $174\pm150$ & $35481\pm21851$ & $4473\pm628$ & $926\pm259$ \\
Soft $\Ctheta$ gate & $2645\pm829$ & $1215\pm1843$ & $663\pm323$ & $11745\pm18434$ & $4142\pm1471$ & $2044\pm2647$ \\
Soft $\Ctheta$, decomposed $V$ & $3569\pm1074$ & $721\pm985$ & $505\pm319$ & $8183\pm8064$ & $4374\pm1191$ & $1242\pm472$ \\
\textbf{\methodName{} (hard $\Ctheta$)} & $3581\pm971$ & $\mathbf{14\pm8}$ & $\mathbf{2857\pm129}$ & $\mathbf{754\pm463}$ & $5594\pm177$ & $\mathbf{731\pm177}$ \\
\bottomrule
\end{tabular}}
\end{table}

The mechanism behind the soft gate's failure is given in \Cref{sec:compat}: on a failed recovery segment the soft gate still emits a negative-signed imitation pressure, teaching $\piMain$ to \emph{un}learn the recovery action at those states, a noisy signal that the hard gate removes by zeroing the term on failures.

\subsection{\texorpdfstring{$\lambdaCompat$}{lambda\_compat} sensitivity (\methodName{})}
\label{app:lambda-sensitivity}

With the gate fixed, the remaining knob on the regularizer is its weight. We sweep $\lambdaCompat \in \{10^{-4},10^{-3},10^{-2},10^{-1},1.0\}$ on \envHC and \envAnt.

\begin{table}[ht]
\centering\small
\caption{Compatibility-regularizer weight sweep on \envHC and \envAnt. Reward and falls are training-time (last-$10\%$ mean reward, cumulative falls); mean $\pm$ std over the first $5$ of $10$ seeds, dedicated sweep batch at the body operating point, run independently of the \Cref{tab:main-results} runs. \textbf{Bold} marks the best value per column.}
\label{tab:lambda-sweep}
\setlength{\tabcolsep}{3pt}
\begin{tabular}{rrrrr}
\toprule
$\lambdaCompat$ & \envHC reward & \envHC falls & \envAnt reward & \envAnt falls \\
\midrule
$10^{-4}$ & $2654\pm1833$ & $211\pm266$ & $1803\pm1176$ & $3557\pm3795$ \\
$10^{-3}$ & $4768\pm583$ & $\mathbf{47\pm41}$ & $2908\pm157$ & $428\pm454$ \\
$10^{-2}$ & $\mathbf{5783\pm481}$ & $374\pm548$ & $\mathbf{3124\pm355}$ & $639\pm517$ \\
$10^{-1}$ & $5288\pm667$ & $64\pm57$ & $2281\pm894$ & $\mathbf{149\pm140}$ \\
$1.0$ & $5536\pm333$ & $137\pm151$ & $1352\pm1036$ & $2798\pm5658$ \\
\bottomrule
\end{tabular}
\end{table}
The $\lambdaCompat = 10^{-3}$ row is an independent replication of \Cref{tab:main-results}'s configuration: it lands at \envHC{} $4768\pm583$ / $47\pm41$ against \Cref{tab:main-results}'s $3581\pm971$ / $14\pm8$, consistent within seed noise for a heavy-tailed falls count. The same runs also back \Cref{tab:thm2-check} (\envHC/\envAnt).

\emph{The optimum is environment-specific.} On \envHC the reward peaks sharply at $\lambdaCompat = 10^{-2}$ ($5783$). On \envAnt the reward is within seed noise from $\lambdaCompat = 10^{-3}$ to $10^{-2}$, with $10^{-3}$ the safer choice. Both environments are robust within a $\sim 10\times$ band around the optimum.

\emph{The main results use a single $\lambdaCompat = 10^{-3}$.} \methodName{} in \Cref{tab:main-results} and \Cref{fig:curves} fixes $\lambdaCompat = 10^{-3}$ across all three environments rather than a per-environment optimum. We make this choice on \emph{safety} grounds. In the sweep above, $\lambdaCompat = 10^{-3}$ attains the lowest \envHC training-fall count ($47$, against $374$ at $10^{-2}$) and a near-lowest count on \envAnt. A larger $\lambdaCompat = 10^{-2}$ would raise \envHC reward substantially ($5783$ vs $4768$) and is marginally better on \envAnt, but it does so at roughly $8\times$ the \envHC falls. We therefore keep the uniform $10^{-3}$ default so that the headline fall reductions are not traded away for extra falls, and we note $10^{-2}$ as the reward-optimal setting for \envHC for practitioners who can tolerate the higher fall count.

\subsection{Fixed-\texorpdfstring{$d$}{d} ablation: the safe-region schedule vs.\ a constant \texorpdfstring{$d$}{d}}
\label{app:fixed-d}

With the gate and its weight fixed, the last training-time design choice is the safe-region radius, which \methodName{} anneals on a curriculum. A natural counterfactual is to pick a smaller $\dmax$ and hold $d$ constant there. \Cref{tab:fixed-d} sweeps fixed $d \in \{0.25, 0.5, 0.75, 1.0\}\cdot\dmax$ on each of the three environments, with otherwise-identical \methodName{}.

\begin{table}[ht]
\centering\small
\caption{Fixed-$d$ sweep at the body operating point (\methodName{}), dedicated sweep batch, mean $\pm$ std over the first $5$ of $10$ seeds on \envHC/\envAnt and over all $5$ seeds on \envGo. $d$ is held constant at the listed percentage of the per-env $\dmax$ ($2.0$ / $0.4$ / $0.15$) throughout training. Cells are \texttt{Reward / Falls}; \textbf{bold} marks the best reward and lowest falls per environment. Bottom row: linear-curriculum reference at the same $\dmax$, reproducing \Cref{tab:main-results} (a separate run batch); the sweep's own linear arm at the same settings gives \envHC{} $4773\pm536$ / $46\pm41$, \envAnt{} $2929\pm157$ / $419\pm446$, \envGo{} $5613\pm234$ / $750\pm231$, an independent replication consistent within seed noise.}
\label{tab:fixed-d}
\small
\setlength{\tabcolsep}{4pt}
\resizebox{\textwidth}{!}{%
\begin{tabular}{l lll}
\toprule
$d/\dmax$ & \envHC & \envAnt & \envGo \\
\midrule
$25\%$  & $3344\pm1642 / 1189\pm1274$  & $990\pm2  / \mathbf{5\pm4}$     & $2898\pm147 / 1895\pm409$ \\
$50\%$  & $2476\pm668 / 2623\pm664$  & $1602\pm463 / 5522\pm1505$  & $5423\pm119 / \mathbf{397\pm33}$  \\
$75\%$  & $\mathbf{3629\pm582} / 3489\pm889$  & $555\pm178  / 19941\pm4063$ & $5572\pm179 / 888\pm372$  \\
$100\%$ & $3535\pm1053 / 3632\pm641$  & $565\pm301  / 21837\pm8376$ & $\mathbf{5752\pm158} / 1662\pm272$ \\
\midrule
\textbf{\makecell[l]{Linear curr.\ \\ (\methodName, ref.)}}
        & $\dmax{=}2.0$:\ $3581\pm971 / \mathbf{14\pm8}$
        & $\dmax{=}0.4$:\ $\mathbf{2857\pm129} / 754\pm463$
        & $\dmax{=}0.15$:\ $5594\pm177 / 731\pm177$ \\
\bottomrule
\end{tabular}}
\end{table}

\noindent \textbf{\envHC.} Every fixed-$d$ cell pays $\sim 1{,}200$ to $3{,}600$ training falls and tops out near $2{,}500$ to $3{,}600$ reward, while the linear-curriculum reference reaches $3{,}581$ reward at only $14$ falls (an $85\times$ to $260\times$ falls advantage). No constant $d$ matches the curriculum's reward-at-low-falls. The small-$d$ cells stay safer but low-reward and the large-$d$ cells climb in falls without gaining reward.

\noindent \textbf{\envAnt.} The smallest fixed-$d$ (p25, $d=0.1$) is very safe ($5$ falls) but useless ($990$ reward). Larger constant $d$ accumulates $5{,}500$ to $21{,}800$ falls without reaching the curriculum's reward. The linear curriculum gets $2857$ reward at $754$ falls, which no constant $d$ matches.

\noindent \textbf{\envGo.} On \envGo the comparison is closer, and the curriculum does \emph{not} strictly dominate. A tuned constant $d$ is competitive. Fixed-$d$-p50 ($5423 / 397$) attains essentially the curriculum's reward ($5594$) at \emph{fewer} falls ($397$ vs $731$), and p75 and p100 trade modest extra falls for similar reward. The curriculum's value on \envGo is therefore robustness to the choice of $d$, it reaches high reward at low falls without a per-environment $d$ search, rather than a strict win over the best constant $d$.

\noindent \textbf{Summary.} On \envHC and \envAnt no constant $d$ matches the schedule's high reward at comparable safety. On \envGo a tuned fixed-$d$ (p50) is competitive on both axes, so there the schedule's contribution is sparing the practitioner a per-environment $d$ search rather than a strict dominance. The safe-region schedule is a supporting mechanism for the objective-gap bound (\Cref{thm:convergence}). The method's correctness rests on the unbiased masked gradient (\Cref{thm:gradient}), not on the schedule.

\subsection{Curriculum-schedule sensitivity (\methodName{} on \envAnt)}
\label{app:curr-shape}

\begin{table}[htbp]
\centering\small
\caption{Schedule shape on \envAnt \methodName{}. Reward / Falls, mean $\pm$ std over the first $5$ of $10$ seeds; v10 sweep batch at the wider radius ($\dmax = 0.56$) rather than the body $0.4$, so within-table comparisons are matched but absolute values differ from \Cref{tab:main-results}. \textbf{Bold} marks the best value per column; Linear and Log rewards tie within seed noise and are both bolded.}
\label{tab:curr-shape}
\begin{tabular}{lrr}
\toprule
Schedule & Reward & Falls \\
\midrule
\textbf{Linear} & $\mathbf{2888\pm351}$ & $\mathbf{952\pm1121}$ \\
Log             & $\mathbf{2906\pm374}$ & $1086\pm438$ \\
Step (4 jumps)  & $1537\pm549$ & $5998\pm2788$ \\
Constant ($d=\dmax$) & $294\pm74$ & $29855\pm4747$ \\
\bottomrule
\end{tabular}
\end{table}
Granting that some schedule beats a constant $d$, the next question is which one. \Cref{tab:curr-shape} reports three alternatives against the linear default. Linear and log are tied within seed noise on every metric. The step schedule is worse on both, with lower reward and $6\times$ more falls. Constant-$d$ (no curriculum) collapses, accumulating $31\times$ the falls of the linear curriculum at a tenth of its reward. This is consistent with the fixed-$d$ ablation above (\Cref{app:fixed-d}), whose constant $d{=}\dmax$ cell shows the same qualitative collapse (at the body radius rather than this table's wider one), and with the prediction of \Cref{thm:convergence} that the gap closes as $\beta$ shrinks, which the curriculum drives.

\subsection{Compute-scaling: \texorpdfstring{$2\times$}{2x} matched-budget on \envHC and \envAnt}
\label{app:16m-scaling}

\begin{wraptable}{r}{0.62\textwidth}
\centering\small
\setlength{\tabcolsep}{3pt}
\vspace{-1.2em}
\caption{Compute-scaling: \methodName{} vs \ac{PPO} at the main budget ($8$M) and $2\times$ ($16$M) on \envHC and \envAnt. Reward / Falls, mean $\pm$ std over the first $5$ of $10$ seeds. All cells are the v10 batch at the wider radii ($\dmax = 3.2$ on \envHC, $0.56$ on \envAnt) rather than the body protocol; the $16$M runs stretch the $d$-curriculum over the doubled budget. \Cref{tab:main-results}'s body-radius values (\envHC{} \methodName{} $3581\pm971$ / $14\pm8$) therefore differ from the Main column here; see the preamble of this appendix. \textbf{Bold} marks the better method per environment and budget.}
\label{tab:16m-scaling}
\resizebox{\linewidth}{!}{%
\begin{tabular}{ll ll}
\toprule
& Method & Main ($8$M) & $2\times$ ($16$M) \\
\midrule
\multirow{2}{*}{\envHC}  & \ac{PPO}  & $3163\pm966 / 5266\pm2539$  & $3590\pm1969 / 15112\pm8500$ \\
                         & \textbf{\methodName{}}  & $\mathbf{4027\pm1239 / 248\pm281}$ & $\mathbf{5912\pm881 / 217\pm340}$ \\
\midrule
\multirow{2}{*}{\envAnt} & \ac{PPO}  & $296\pm35 / 32369\pm5625$  & $197\pm92 / 88900\pm35552$ \\
                         & \textbf{\methodName{}}  & $\mathbf{2888\pm351 / 952\pm1121}$ & $\mathbf{2950\pm419 / 3332\pm2620}$ \\
\bottomrule
\end{tabular}}
\vspace{-0.8em}
\end{wraptable}
With the design choices settled, we turn to whether the chosen configuration holds up under conditions the body did not exercise, starting with more compute. We re-run \ac{PPO} and \methodName{} at $2\times$ the main budget ($8\mathrm{M}\to16\mathrm{M}$) on \envHC and \envAnt (\Cref{tab:16m-scaling}; wider radii than the body protocol, see the caption). \methodName{}'s reward improves or holds with $2\times$ compute (\envHC $4027\to5912$, \envAnt $2888\to2950$) while its falls stay low. \ac{PPO}'s falls instead grow sharply (\envHC $\sim 2.9\times$, \envAnt $\sim 2.7\times$), so the safety gap \emph{widens} with budget rather than closing.
Re-tuning $\dmax$ for longer horizons or a multi-stage curriculum is left to follow-up work.

\subsection{Compatibility-regularizer normalization (global vs.\ per-segment)}
\label{app:compat-norm}

\begin{table}[t]
\centering\small
\caption{Compat-norm: global vs.\ per-segment. Reward / Falls, mean $\pm$ std over the first $5$ of $10$ seeds; v10 batch at the wider radii ($\dmax = 3.2$ \envHC, $0.56$ \envAnt). \textbf{Bold} marks the arm better on both metrics per environment.}
\label{tab:compat-norm}
\begin{tabular}{lll}
\toprule
Env & global (default) & per-segment \\
\midrule
\envHC  & $4027\pm1239 / 248\pm281$ & $\mathbf{5030\pm664 / 48\pm36}$ \\
\envAnt & $\mathbf{2888\pm351 / 952\pm1121}$ & $2474\pm489 / 2625\pm1583$ \\
\bottomrule
\end{tabular}
\end{table}
The $\Ctheta$ loss in \Cref{eq:compat-loss} averages across all recovery-active timesteps in the current minibatch (``global''). An alternative ``per-segment'' normalization averages within each contiguous safe$\to$unsafe$\to$safe segment, then over segments, which prevents long unsafe excursions from dominating the loss. \Cref{tab:compat-norm} reports both on \envHC and \envAnt. Per-segment helps on \envHC (higher reward, fewer falls) but hurts on \envAnt (lower reward, more falls). With no consistent winner across environments and the means within roughly one standard deviation, we keep \texttt{global} as the simpler default.

\subsection{Action-noise robustness (\envHC \methodName{})}
\label{app:action-noise}

\begin{table}[htbp]
\centering\small
\caption{Action-noise on \envHC \methodName{} at $\lambdaCompat=10^{-3}$. Reward / Falls, mean $\pm$ std over the first $5$ of $10$ seeds; v10 batch at the wider \envHC{} radius ($\dmax = 3.2$). The $\sigma_a = 0$ arm is the same runs as the global arm of \Cref{tab:compat-norm} and the Main \methodName{} cell of \Cref{tab:16m-scaling}.}
\label{tab:action-noise}
\begin{tabular}{rrr}
\toprule
$\sigma_a$ & Reward & Falls \\
\midrule
$0.00$ & $4027\pm1239$ & $248\pm281$   \\
$0.05$ & $4482\pm419$  & $108\pm73$    \\
$0.10$ & $2346\pm964$  & $546\pm395$   \\
\bottomrule
\end{tabular}
\end{table}
A second stress axis perturbs the actions themselves. We re-train \methodName{} on \envHC with Gaussian action noise of standard deviation $\sigma_a \in \{0.0, 0.05, 0.10\}$ added to every action (recovery and main) at the $\lambdaCompat = 10^{-3}$ default (\Cref{tab:action-noise}). Reward and falls stay within seed noise up to $\sigma_a = 0.05$, then degrade at $\sigma_a = 0.10$ ($-42\%$ reward, more than double the falls).
An action-noise sweep at the HC-tuned $\lambdaCompat = 10^{-2}$ is left to follow-up work. We also ran $\sigma_a = 0.20$ as a stress test. \methodName{} collapses ($1863\pm542$ reward, $1659\pm1807$ falls) when the injected noise is comparable in scale to the recovery's action magnitude, and we omit it from the table as a known out-of-regime failure mode.

\subsection{Failure-mode catalog}
\label{app:failure-modes}

The stress tests above probe how the chosen configuration degrades; this catalog records the qualitative failure each rejected variant exhibits, so the quantitative gaps in the ablations have a concrete behavioral reading.

\begin{itemize}[leftmargin=2em]
\item \textbf{\ac{PPO}} on \envGo: agent falls within first 200 steps; recovery is unavailable; episode terminates.
\item \textbf{Analytic-$V$ variant} on \envAnt: agent learns to balance briefly but never escapes the recovery policy's basin; $\alpha$ stays near 1 for the full training run.
\item \textbf{Soft-gate variant} on \envHC: regularizer fits failed-recovery actions; reward oscillates; final policy is unreliable.
\item \textbf{\methodName{}} on \envAnt: clean separation, $\alpha$ drops to near zero around update 1000, reward climbs steadily thereafter, falls plateau.
\end{itemize}

\subsection{Empirical check of \texorpdfstring{\Cref{thm:convergence}}{Theorem 2}}
\label{app:thm2-check}

We now turn from the method's empirical robustness to its theory, checking each of the paper's two theorems against the runs, starting with the objective-gap bound. \Cref{thm:convergence} bounds the deployment gap $|J^{\mathrm{mix}}(\theta_k) - J(\theta_k)|$, between the during-training mixed-policy return and the return of $\piMain$ deployed alone, by the main-policy out-of-region rate $\betaRate$. The bound is a proportionality, not a vanishing guarantee. It predicts a small gap exactly when $\betaRate$ is small, and \Cref{cor:fixed-point} sharpens this to an exact fixed point only in the idealized $\betaRate \to 0$ limit. We test that prediction directly. Every 4 \ac{PPO} updates we run a recovery-disabled evaluation pass, averaging $5$ episodes per pass, that samples from the same stochastic policy $\piMain(\cdot|s) = \mathcal{N}(\mu_\theta(s), \sigma_\theta^2)$ used in training, the operative $J(\theta_k)$, and compare it to the mixed-policy rollout return $J^{\mathrm{mix}}(\theta_k)$. Because $\betaRate$ is an out-of-region rate under $\piMain$ alone and is not directly observable, we report alongside it the observable recovery rate $\alpha$, which \Cref{sec:convergence} identifies as a diagnostic for $\betaRate$ rather than a certified upper bound.
A deterministic pass using $\mu_\theta(s)$ alone is also logged as a noise-reduced deployment diagnostic but is not the quantity the theorem concerns.

\begin{table}[ht]\centering\small
\setlength{\tabcolsep}{4pt}
\caption{End-of-training values for \methodName{} at the body operating point, first $5$ of $10$ seeds, over the last $20\%$ of training; an independent batch shared with the $\lambdaCompat = 10^{-3}$ arm of \Cref{tab:lambda-sweep} on \envHC/\envAnt, with \envGo{} from the body-$16$M family. Columns: the mixed-policy return $J^{\mathrm{mix}}$, the $\piMain$-only stochastic eval return $J$, their signed difference, the recovery rate $\alpha$ (observable diagnostic for $\betaRate$), cumulative training-time falls, and per-episode eval-time falls under $\piMain$ alone (recovery disabled).}
\label{tab:thm2-check}
\begin{tabular}{l rrr r rr}
\toprule
Env & $J^{\mathrm{mix}}$ & $J$ & $J^{\mathrm{mix}} - J$ & $\alpha$ & Falls (train) & Falls (eval/ep) \\
\midrule
\envHC  & $4677 \pm 542$ & $4676 \pm 552$ & $0 \pm 12$    & $0.000$ & $47 \pm 41$   & $0.00$ \\
\envAnt & $2904 \pm 145$ & $2921 \pm 151$ & $-17 \pm 25$  & $0.000$ & $428 \pm 454$ & $0.12$ \\
\envGo  & $5576 \pm 213$ & $5112 \pm 340$ & $463 \pm 150$ & $0.044 \pm 0.005$ & $768 \pm 241$ & $0.36$ \\
\bottomrule
\end{tabular}
\end{table}

The result follows the bound's structure precisely, and the key observation is that \emph{the gap tracks $\betaRate$}. On \envHC and \envAnt the recovery rate falls to essentially zero ($\alpha = 0.000$ on both at end-of-training), and the gap closes to within seed noise of zero ($0 \pm 12$ and $-17 \pm 25$). These are the self-stable environments.
Here $\piMain$ holds the safe region unaided, so $\piMixed \equiv \piMain$ along the trajectory, $\betaRate \to 0$, and the fixed point of \Cref{cor:fixed-point} is reached up to the invariance-slack residual $\eta_\star$ the corollary anticipates (the residuals here are statistically indistinguishable from zero; empirically it is the stochastic policy's entropy floor that keeps $\eta_\star$ from being exactly zero). On \envGo the policy does not become fully self-stable, because the quadruped's gait cannot stay inside the tight $\safeR(\dmax)$ tube. The recovery rate therefore settles at a small positive floor ($\alpha = 0.044 \pm 0.005$) rather than at zero, and the gap settles at a correspondingly positive residual ($463 \pm 150$, about $8\%$ of $J^{\mathrm{mix}}$) rather than vanishing.
After the early-curriculum transient the gap collapses from its peak on \envHC, stays within seed noise of zero on \envAnt in the final average despite a late upward drift in the plotted per-seed magnitude, and declines but stays positive on \envGo (\Cref{fig:thm2-traj}). On \envGo it falls from $\sim 1950$ when the safe region is small to a small positive residual at $16$M env-steps. The end-of-training gap and recovery rate in \Cref{tab:thm2-check} make the proportionality explicit. The gap is near zero exactly where $\alpha$ is, and the only environment with a residual gap is the only one with a residual $\alpha$.

\begin{wrapfigure}{r}{0.5\textwidth}
\centering
\vspace{-1.0em}
\includegraphics[width=\linewidth]{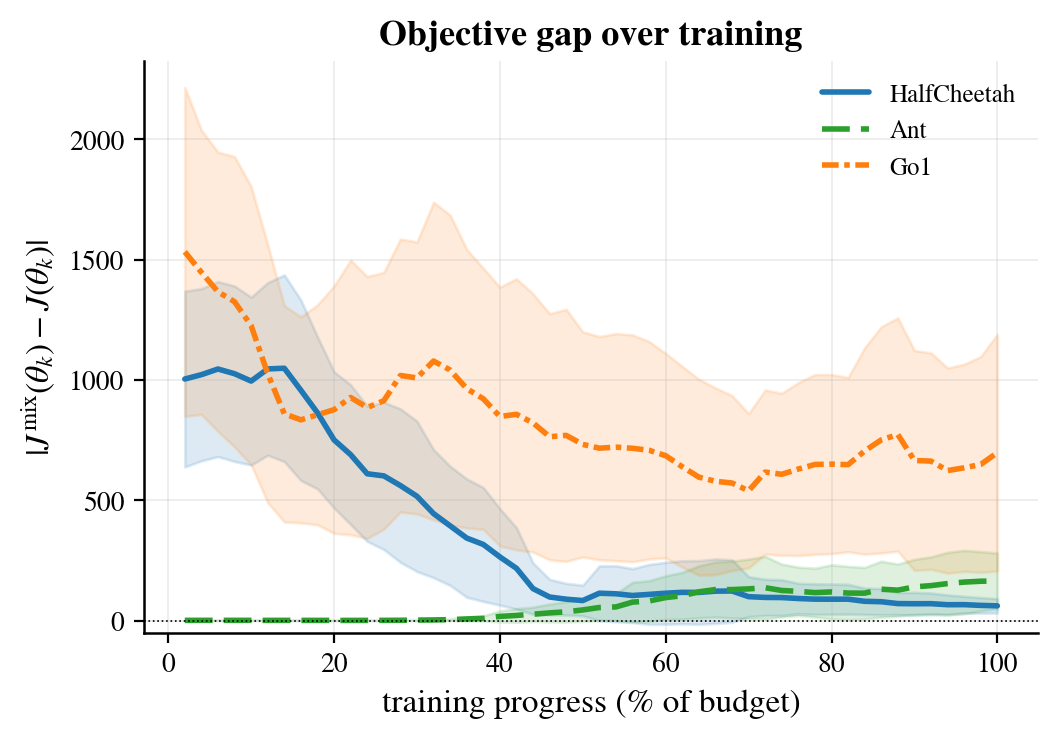}
\caption{Objective gap over training for the \methodName{} runs of \Cref{tab:thm2-check}: the curve is the seed-mean of the per-seed absolute gap $|J^{\mathrm{mix}}(\theta_k) - J(\theta_k)|$, smoothed with a 9-point moving average for display; shaded = $\pm 1$ std. Because the curve averages magnitudes while \Cref{tab:thm2-check} reports the signed across-seed mean over the final $20\%$ of training, the curve's end value can sit above the table's near-zero signed gap (as on \envAnt).}
\label{fig:thm2-traj}
\vspace{-1.0em}
\end{wrapfigure}

This two-regime behavior is a stronger test of \Cref{thm:convergence} than a uniform collapse to zero would be. The bound does not assert that the gap vanishes; it asserts that the gap is governed by $\betaRate$. \envHC and \envAnt realize the vanishing regime ($\betaRate \to 0$), and \envGo realizes the residual regime ($\betaRate$ bounded away from zero because \Cref{asm:invariance} holds only approximately for a quadruped confined to a tight tube), with the observable $\alpha$ moving in lockstep with the gap in both. The sign of the gap is itself interpretable. On \envGo, $J^{\mathrm{mix}} - J > 0$ means that deploying $\piMain$ alone forfeits the return that the recovery policy would otherwise secure, so the gap measures the policy's residual reliance on the recovery policy, which \envHC and \envAnt drive to zero and \envGo reduces to about $8\%$.

The absolute prefactor of \Cref{thm:convergence} ($2\rmax/(1-\gamma)^2 \approx 2 \times 10^5$ for $\rmax = 10$, $\gamma = 0.99$) is loose, as is standard for performance-difference-lemma bounds, so we do not read the inequality quantitatively. Its operative content is the structural claim that any mechanism reducing $\betaRate$, here the radius curriculum together with the compatibility regularizer, tightens the gap. \Cref{fig:thm2-traj} confirms that claim across the full range of $\betaRate$ our three environments realize. Eval-time falls under $\piMain$ alone (recovery disabled) corroborate the reading. They are $0.00$, $0.12$, and $0.36$ per episode on \envHC, \envAnt, and \envGo, ordered exactly as the gaps and the recovery rates are. \Cref{fig:alpha-curves} plots the full recovery-rate trajectories that this diagnostic summarizes.

\begin{figure}[ht]
\centering
\includegraphics[width=\linewidth]{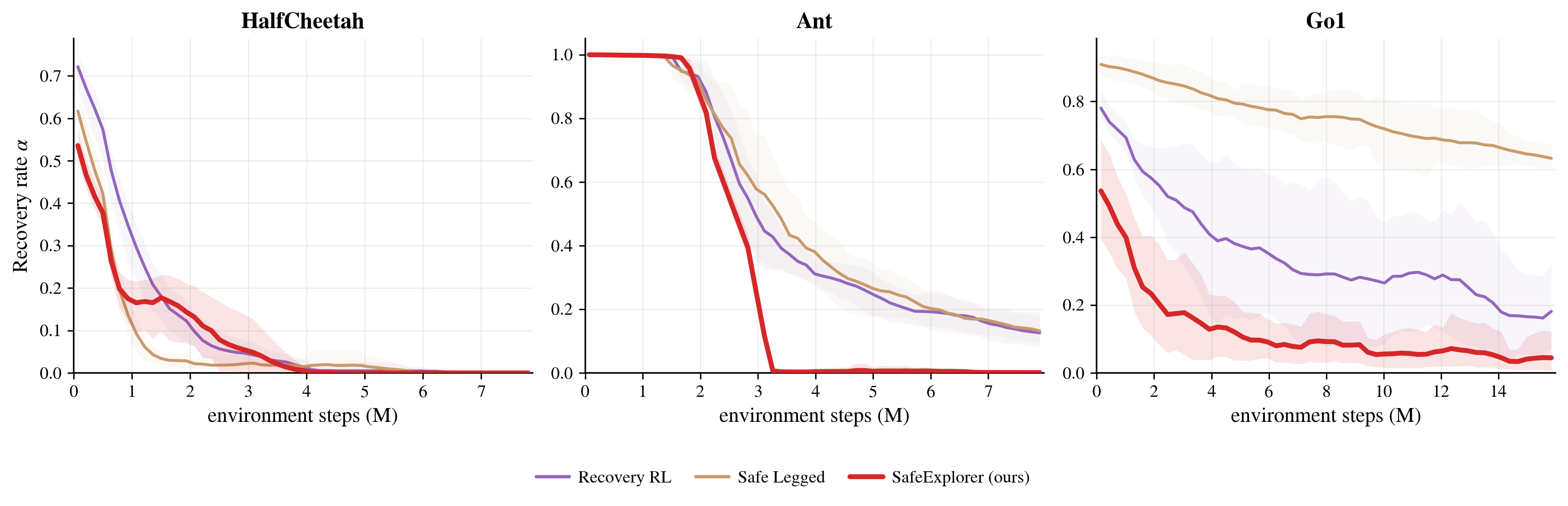}
\caption{Recovery rate $\alpha$ (fraction of rollout steps on which recovery fired) over training, per environment, for the recovery-using methods (the \ac{CMDP} solvers act with a single policy and never invoke recovery, so they have no $\alpha$). Solid = mean across seeds, shaded = [min, max] envelope.}
\label{fig:alpha-curves}
\end{figure}

\subsection{Masked gradient versus \texorpdfstring{\ac{IS}}{importance sampling} under a stochastic recovery}
\label{app:is-variance}

With the objective-gap bound checked, we turn to the gradient estimator: \Cref{thm:gradient} makes the masked estimator exactly unbiased at recovery steps, while per-step \ac{IS}, where it applies at all, is biased in general (\Cref{rem:is-variance}); what remains empirical is how the two compare in practice. \Cref{thm:gradient} holds for any $\theta$-independent recovery; for a deterministic one, not even \ac{IS} applies (\Cref{sec:gradient}). Here we ask what happens when the recovery \emph{is}
stochastic, so \ac{IS} becomes an option. Writing
$w_t = \piMain(a_t \mid s_t) / \piRec(a_t \mid s_t)$ for the importance weight at a recovery step,
the \ac{IS} estimator of $\nabla_\theta J^{\mathrm{mix}}(\theta)$ (\Cref{cor:safe-region-gradient}) decomposes as
\begin{equation}
  \hat g_{\mathrm{IS}}
  = \underbrace{\sum_{t :\, s_t \in \safeR} \nabla_\theta \log \piMain(a_t\mid s_t)\, A_t}_{\hat g_{\mathrm{masked}}}
  \;+\; \sum_{t :\, s_t \notin \safeR} w_t\, \nabla_\theta \log \piMain(a_t\mid s_t)\, A_t,
  \label{eq:is-decomp}
\end{equation}
with $A_t$ the \ac{PPO} advantage estimate. The second sum would be mean zero by the score-function
identity only if $A_t$ were replaced by an action-independent baseline $b(s_t)$; with the actual
advantage, which depends on $a_t$ through the reward and the successor state, its conditional mean
is the off-policy policy-gradient contribution at the unsafe states, which is generically nonzero.
The per-step \ac{IS} estimator is therefore biased for $\nabla_\theta J^{\mathrm{mix}}$, and
masking removes a bias-carrying term, not merely a noisy one. A recovery policy differs sharply
from the main policy by design, so the term's weights are heavy-tailed; truncating them
\citep{espeholt2018impala, munos2016retrace} trades variance for further bias. The measurement
below therefore compares two estimators that do not share a mean: the recorded explosion of the
\ac{IS} weights is evidence about the conditioning of the added term, not a like-for-like variance
comparison between unbiased estimators.

Concretely, we rerun \methodName{} on all three environments (settings of
\Cref{sec:exp-setup}) with the recovery action \emph{sampled} from the \ac{SAC} recovery rather than
taken greedily at its mean, and shadow-compute, per rollout, the weights' effective sample size and
each estimator's gradient variance. Training itself always uses the masked gradient.

\Cref{fig:is-variance} shows that the three regimes expose complementary failure modes. On
\envHC the strong recovery makes the weights degenerate from the first update. The effective sample
size never exceeds $0.5\%$ of the recovery steps, and the \ac{IS} gradient variance exceeds the
masked variance by a factor of up to $8\times10^{12}$. On \envAnt the weak recovery initially
overlaps the main policy and \ac{IS} stays formally usable for longer, but the variance ratio holds
near $10^{13}$ for the first $1.2$M steps. \envGo completes the spectrum. Over its 12-dimensional
action space the recovery's actions are so unlikely under the main policy that the weights die at
once (effective sample size always below $3\%$, reaching $7\times10^{-5}$), so the added term
vanishes before its variance can build, and the ratio never rises above $\sim\!40$ and sits at one
for the full $16$M steps, over which the quadruped's recovery rate never reaches zero
(\Cref{app:thm2-check}).
Once the main policy separates from the recovery the weights vanish and the ratio returns to one.
\ac{IS} proves either catastrophically noisy or identical to the masked estimator, never better in any
regime we measure.

\begin{figure}[ht]
  \centering
  \includegraphics[width=\linewidth]{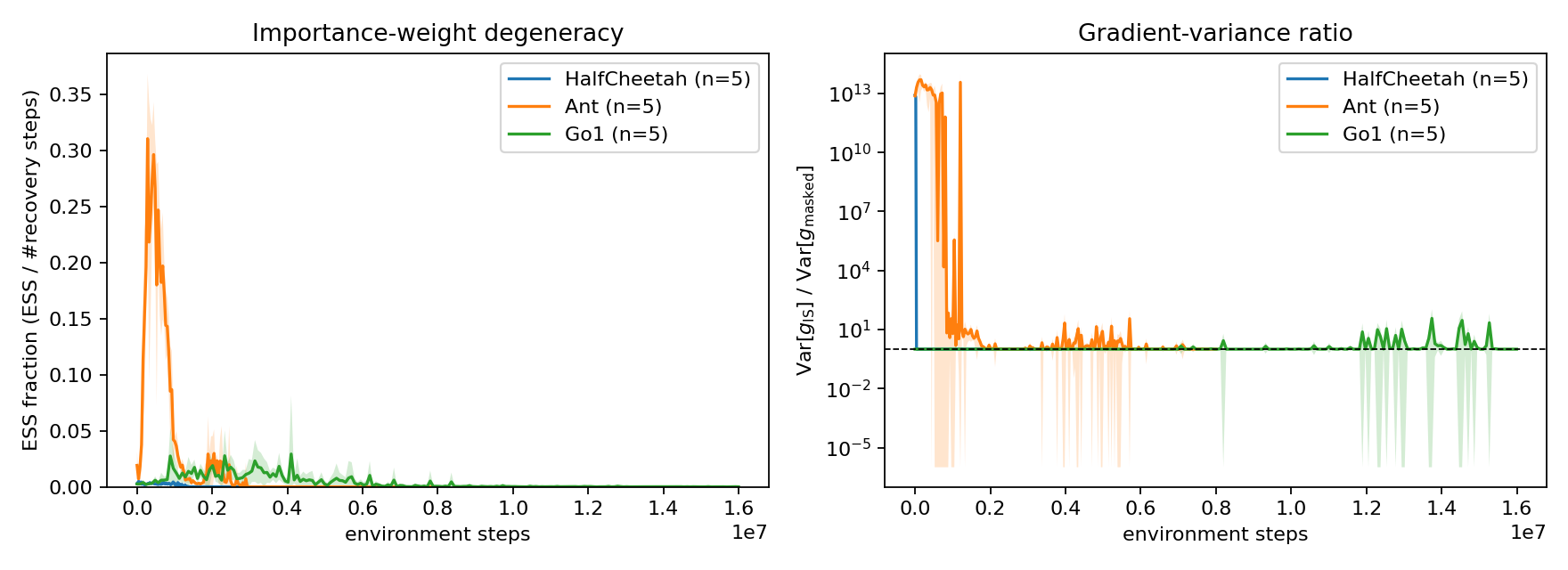}
  \caption{Masked gradient versus \ac{IS} on a stochastic recovery, \methodName{} (mean
  $\pm$ std). \emph{Left}: the effective sample size of the importance weights collapses toward zero
  when the recovery is active, the signature of weight degeneracy. \emph{Right}: the ratio of \ac{IS}
  to masked gradient variance (log scale; dashed line at one). The masked estimator targets
  $\nabla_\theta J^{\mathrm{mix}}$ (\Cref{eq:is-decomp}) without the \ac{IS} term's bias and
  variance pathologies. The \envHC and \envAnt curves end where the
  recovery stops engaging, because the diagnostics exist only on updates with recovery steps; on
  \envGo recovery stays mildly engaged throughout, so the curve spans the full run.}
  \label{fig:is-variance}
\end{figure}

Training with the truncated-\ac{IS} gradient ($w_t$ clipped at $c = 10$) confirms the cost
(\Cref{fig:is-train}). These runs use the sampled-recovery setup of this appendix, so their absolute rewards are not comparable to \Cref{tab:main-results}'s greedy-recovery numbers; only the within-setup masked-vs-\ac{IS} differences are meaningful. On \envHC the weights die immediately and the two variants are statistically
indistinguishable (reward $4675 \pm 642$ versus $4421 \pm 647$). On \envAnt the million-step
degeneracy window is expensive: $793 \pm 214$ reward against $1325 \pm 289$ for the masked gradient,
a $40\%$ drop, and its lower fall count ($2066 \pm 1582$ versus $4682 \pm 1047$) reflects the weaker
policy rather than safer behavior. On \envGo the truncated-\ac{IS} variant again trails the masked
gradient ($4884 \pm 241$ versus $5243 \pm 219$ reward, a $7\%$ drop) at statistically
indistinguishable falls ($3293 \pm 589$ versus $3401 \pm 351$).

\begin{figure}[ht]
  \centering
  \includegraphics[width=\linewidth]{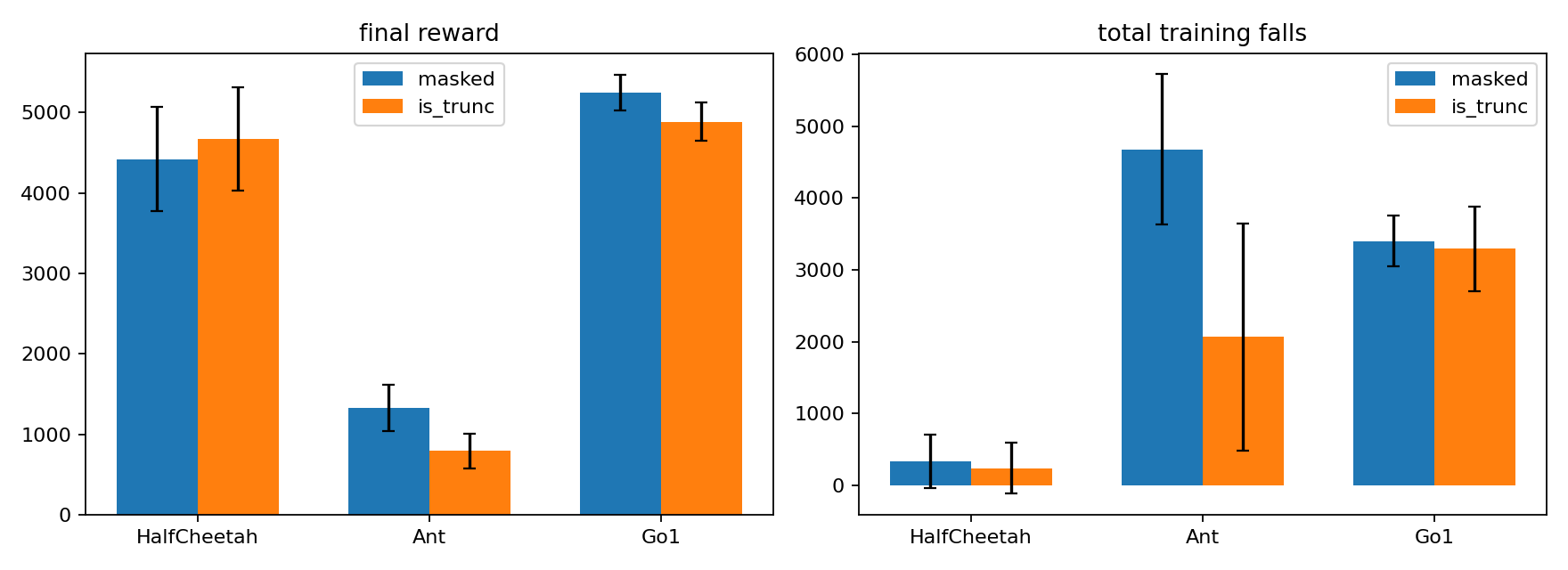}
  \caption{Training \methodName{} with the masked gradient versus the truncated-\ac{IS} gradient (mean $\pm$ std). \emph{Left}: final reward. \emph{Right}: total training falls.}
  \label{fig:is-train}
\end{figure}

\subsection{\texorpdfstring{\ac{CMDP}}{CMDP} baselines: constraint feasibility}
\label{app:cmdp-feasibility}

Having checked the method's own theory, we finally audit the baselines it is compared against, asking whether the \ac{CMDP} solvers even satisfy the constraint they are given. This appendix supports the \ac{CMDP} comparison of \Cref{sec:exp-cmdp}. For the comparison to be fair, both solvers use the OmniSafe implementation \citep{ji2023omnisafe} and run on the identical environments, reward functions, network architecture, optimizer, and step budget as the variants in \Cref{tab:main-results}. We use the direct cost encoding, a cost of $1$ on a fall (a termination), the actual safety signal the constraint should bound. The only incidental differences from \methodName{} are observation and reward normalization, applied to the \ac{CMDP} baselines and not to \methodName{}, so they favor the baselines. Reported rewards for both solvers are the raw episode return (\texttt{EpRet} in OmniSafe), un-normalized and therefore on the same task-reward scale as the other methods. The normalization affects only the solvers' internal optimization, not the reported numbers.

On this matched footing, \Cref{tab:cmdp-feasibility} reports the final episodic cost each solver converges to, against the cost budget of $0.05$ it was given (a per-episode fall rate), averaged over the same seeds. The pattern is clean. Both solvers reach feasibility on \envHC, where staying safe is easy, and both \emph{fail} to reach feasibility on \envAnt and \envGo, where the converged cost exceeds the budget by up to $19\times$. The poor safety of the \ac{CMDP} baselines is therefore not a tuning artifact. The constrained optimization does not find a feasible policy at all in the regimes where the safe-region intervention is needed.

\begin{table}[ht]
\centering
\caption{Converged episodic cost vs.\ the cost budget ($\le 0.05$) for the \ac{CMDP} baselines (final $5\%$ of training). \cmark{} feasible, \xmark{} budget violated.}
\label{tab:cmdp-feasibility}
\small
\begin{tabular}{lccc}
\toprule
Method & \envHC & \envAnt & \envGo \\
\midrule
\ac{CPO}            & $0.007\pm0.011$ \cmark & $0.103\pm0.029$ \xmark & $0.956\pm0.025$ \xmark \\
\ac{PPO}-Lagrangian & $0.001\pm0.001$ \cmark & $0.326\pm0.177$ \xmark & $0.377\pm0.126$ \xmark \\
\bottomrule
\end{tabular}
\end{table}

A safe-region indicator cost (cost $1$ at every step outside $\safeR(\dmax)$) is a second natural encoding, but it is comparable to the recovery-using methods only when the recovery policy is active, since that is the setting in which an out-of-region penalty reduces to recovery-active reward shaping. We therefore report the direct termination encoding in the main comparison and leave the recovery-active safe-region variant to future work.

\section{Broader Impact}
\label{app:broader-impact}

The goal of this work is to make reinforcement learning on physical robots safer to train. Achieving that goal would reduce hardware damage during training, lower the experimental cost borne by robot-learning research groups, and provide a foundation for on-robot continual learning that does not depend on extensive pre-training in simulation.

Set against that intent, the dual-use surface of the technique is narrow. It is dual-use only in the trivial sense that any improvement in robotic control technology is so, because the method does not change the capabilities a deployed policy can express, only the training-time safety profile of how that policy is reached. For this reason we do not foresee specific misuse pathways introduced by this work that are not already present in the underlying \ac{PPO} + \ac{SAC} + MuJoCo / Go1 stack.

The experiments themselves are deliberately scoped to limit any training-time harm. We do not collect or release human-subject data and do not perform experiments that could affect bystanders during training. All experiments are in simulation (MuJoCo and the Go1 model), so the training-time falls we reduce are simulated rather than physical; carrying that reduction onto hardware is future work (\Cref{sec:discussion}).

Because the contribution is methodological, its value depends on being reproducible and affordable, and we account for both: the full algorithm, hyperparameter, and environment specifications needed to rerun the experiments are given in \Cref{app:algorithm,app:hyperparams,app:env}. We report negative results alongside the positive ones, the soft-gate variants, the fixed-$d$ concessions, and the action-noise stress failures (\Cref{app:soft-variants,app:fixed-d,app:failure-modes}), so that future work can avoid repeating them. The compute budget for the component ablation ($9$ variants $\times$ $3$ environments $\times$ $5$ seeds, 4 cores each, weighted by per-environment wall-clock) is approximately $3{,}700$ CPU-core-hours, modest relative to contemporary safe-\ac{RL} benchmarks; the additional sensitivity sweeps in \Cref{app:extra-results} add further compute.

\end{document}